\begin{document}

\begin{center}

	{\bf{\LARGE{The Many Faces of Adversarial Risk}}}

	\vspace*{.25in}

 	\begin{tabular}{ccc}
 		{\large{Muni Sreenivas Pydi$^*$}} & \hspace*{.5in} & {\large{Varun Jog$^\dagger$}}\\
 		{\large{\texttt{pydi@wisc.edu}}} & \hspace*{.5in} & {\large{\texttt{vj270@cam.ac.uk}}} 
 			\end{tabular}
 \begin{center}
 Department of Electrical and Computer Engineering,\\University of Wisconsin-Madison$^*$\\
 Department of Pure Mathematics and Mathematical Statistics,\\University of Cambridge$^\dagger$
 \end{center}

	\vspace*{.2in}

January 2022

	\vspace*{.2in}

\end{center}

\begin{abstract}
Adversarial risk quantifies the performance of classifiers on adversarially perturbed data. Numerous definitions of adversarial risk---not all mathematically rigorous and differing subtly in the details---have appeared in the literature. In this paper, we revisit these definitions, make them rigorous, and critically examine their similarities and differences. Our technical tools derive from optimal transport, robust statistics, functional analysis, and game theory. Our contributions include the following: generalizing Strassen’s theorem to the unbalanced optimal transport setting with applications to adversarial classification with unequal priors; showing an equivalence between adversarial robustness and robust hypothesis testing with $\infty$-Wasserstein uncertainty sets; proving the existence of a pure Nash equilibrium in the two-player game between the adversary and the algorithm; and characterizing adversarial risk by the minimum Bayes error between a pair of distributions belonging to the $\infty$-Wasserstein uncertainty sets. Our results generalize and deepen recently discovered connections between optimal transport and adversarial robustness and reveal new connections to Choquet capacities and game theory.
\end{abstract}

\section{Introduction}

Neural networks are known to be vulnerable to \textit{adversarial attacks}, which are imperceptible perturbations to input data that maximize loss~\cite{SzeEtal13, GooEtal14, CarWag17}. Developing algorithms resistant to such attacks has received considerable attention in recent years \cite{CohEtal19, PapEtal16, MadEtal18, JalEtal17}, motivated by safety-critical applications such as autonomous driving \cite{GriEtal20, MuhEtal20}, medical imaging \cite{GreEtal16, MaEtal21, LiuEtal19} and law \cite{KumEtal18, ChaKam19}.

A classification algorithm with high accuracy (low risk) in the absence of an adversary may have poor accuracy (high risk) when an adversary is present. Thus, a modified notion known as \textit{adversarial risk} is used to quantify the adversarial robustness of algorithms. Algorithms that minimize adversarial risk are deemed robust. Procedures for finding them have been effective in practice~\cite{MadEtal18, UesEtal18, PapEtal16}, spurring numerous theoretical investigations into adversarial risk and its minimizers. 

There is no universally agreed upon definition of adversarial risk. Even the simplest setting of binary classification in $\real^d$ with an $\ell_2$ adversary admits various definitions involving set expansions \cite{DioEtal18, GouEtal19}, transport maps \cite{PinEtal20}, Markov kernels \cite{PydJog21}, and couplings \cite{MeuEtal21}. These works broadly interpret adversarial risk as a measure of robustness to small perturbations, but their definitions differ in subtle details such as the class of adversaries and algorithms considered, budget constraints placed on the adversary, assumptions on the loss function, and the geometries of decision boundaries.

\textit{Optimal adversarial risk} is most commonly defined as the minimax risk under adversarial contamination \cite{MadEtal18, ShaEtal18}. Other notable characterizations include an optimal transport cost between data generating distributions in~\cite{PydJog20, BhaEtal19, Doh19, Doh20}, 
the optimal value of a distributionally robust optimization problem \cite{StaJeg17, SinEtal17, TuEtal19}, and the value of
a two-player zero-sum game ~\cite{MeuEtal21, PinEtal20, BosEtal20, BulEtal16}.

The diversity of definitions for adversarial risk makes it challenging to compare approaches. Moreover, not all approaches are rigorous. For instance, the classes of adversarial strategies and classifier algorithms are often unclear, and issues of measurability are ignored. Although this may be harmless for applied research, it has led to incorrect proofs and insufficient assumptions in some theoretical works. A mathematically rigorous foundation for adversarial risk is essential for future research. 

In this paper, we examine various notions of adversarial risk in two settings: (1) binary classification in a non-parametric setting under $0$-$1$ loss function, where the decision boundary (or decision region) of a classifier is an arbitrary subset of the input space, and (2) multi-class classification in a parametric setting under a general loss function, where a classifier is parametrized by a $w$ in a hypothesis set $\cW$.  We present rigorous definitions of adversarial risk and identify conditions under which these definitions are equivalent. We consider the general setting of  Polish spaces (complete, separable metric spaces), and present stronger results for the Euclidean space ($\real^d$).
Our contributions are as follows:
\begin{itemize}
    \item \textbf{Well-definedness of adversarial risk: }
    We examine the definition of adversarial risk based on set expansions. For Polish spaces, we observe that adversarial risk is not Borel measurable, and hence, not well-defined when the decision region is an arbitrary Borel set (or, when the loss function is an arbitrary Borel measurable function). 
    We show that the problem can be resolved by considering a Polish space equipped with the universal completion of the Borel $\sigma$-algebra and restricting the decision regions to Borel sets (or by restricting the loss function to be upper semi-analytic, which is stronger than Borel measurability and weaker than universal measurability).
    For the Euclidean space with the Lebesgue $\sigma$-algebra, we show that adversarial risk is well-defined for any Lebesgue measurable decision region. Our key lemma (Lemma~\ref{lem: A_ep_porous}) shows that the Lebesgue $\sigma$-algebra is preferred over the Borel $\sigma$-algebra because set expansions are Lebesgue measurable but not necessarily Borel measurable. These results are contained in Section~\ref{sec: well-defined}. 
    
    \item \textbf{Equivalence between various notions of adversarial risk: } We show that the definition of adversarial risk using set expansions is identical to a notion of risk that appears in robust hypothesis testing with $\infty$-Wasserstein uncertainty sets. We prove this result in Polish spaces using the theory of measurable selections \cite{BerShr96, Wag77}. In $\real^d$, we are able to use the powerful theory of Choquet capacities \cite{Cho54} (in particular, Huber and Strassen's $2$-alternating capacities \cite{HubStr73}) to establish results of a similar nature. In addition, we derive the conditions under which this notion of adversarial risk is equivalent to alternative notions defined using transport maps and Markov kernels. These results are contained in Section~\ref{sec: W_infty}.
    
    \item  \textbf{Optimal transport characterization of optimal adversarial risk: } We consider the binary classification setup with unequal priors and show (under suitable assumptions) that the optimal adversarial risk from the above definitions is characterized by an unbalanced optimal transport cost between data-generating distributions. For both Polish spaces and $\real^d$, the main tool we use is Theorem~\ref{thm: generalized strassen} in which we generalize a classical result of Strassen on excess-cost optimal transport \cite{Str65, Vil03} from probability measures to finite measures with possibly unequal mass. This generalizes results of \cite{PydJog21, BhaEtal19} on binary classification, which were only for equal priors. These results are contained in Section~\ref{sec: strassen}.
    
    \item \textbf{Game-theoretic view on adversarial risk and existence of Nash equilibria: } We consider the setup of a zero-sum game between the adversary and the algorithm. We show that the value of this game (adversarial risk) is equal to the minimum Bayes error between a pair of distributions belonging to the $\infty$-Wasserstein uncertainty sets centered around true data-generating distributions. We prove the existence of a pure Nash equilibrium in this game for $\real^d$ and for Polish spaces with a \emph{midpoint property.} This extends/strengthens the results of \cite{MeuEtal21, PinEtal20, BosEtal20} to non-parametric classifiers. These results are contained in Section~\ref{sec: minimax}.
\end{itemize}

The paper is organized as follows: In Section~\ref{sec: preliminaries}, we present preliminary definitions from optimal transport and metric space topology. In Section~\ref{sec: defns}, we discuss various definitions of adversarial risk and present more related work. Sections~\ref{sec: well-defined},~\ref{sec: W_infty},~\ref{sec: strassen} and~\ref{sec: minimax} contain our main contributions summarized above. We conclude the paper in Section~\ref{sec: discussion} and discuss future research directions.

We emphasize that rectifying measure theoretic issues with existing formulations of adversarial risk is one of our contributions, but not the main focus of our paper. We start our presentation by addressing measurability and well-definedness (in Section~\ref{sec: well-defined}) because otherwise we will not be able to rigorously present our main results in the subsequent sections, namely: relation to robust hypothesis testing and Choquet capacities in Section~\ref{sec: W_infty}, generalizing the results of \cite{BhaEtal19, PydJog20} in Section, ~\ref{sec: strassen} proving minimax theorems and existence of Nash equilibria and extending the results of \cite{MeuEtal21, BosEtal20, PinEtal20} in Section~\ref{sec: minimax}.

\paragraph{Notation:} Throughout the paper, we use $\cX$ to denote a Polish space (a complete, separable metric space) with metric $d$ and Borel $\sigma$-algebra $\cB(\cX)$.
For $x\in \cX$ and $r\geq 0$, let $B_r(x)$ denote the closed ball of radius $r$ centered at $x$.
We use $\cP(\cX)$ and $\cM(\cX)$ to denote the space of probability measures and finite measures defined on the measure space $(\cX, \cB(\cX))$, respectively. 
Let $\overline{\cB}(\cX)$ denote the universal completion of $\cB(\cX)$. Let $\overline{\cP}(\cX)$ and $\overline{\cM}(\cX)$ denote the space of probability measures and finite measures defined on the complete measure space $(\cX, \overline{\cB}(\cX))$.
For $\mu, \nu\in \cM(\cX)$, we say $\nu$ \textit{dominates} $\mu$ if
$\mu(A)\leq \nu(A)$ for all $A\in \cB(\cX)$ and write $\mu \preceq\nu$.
When $\cX$ is $\real^d$, we use $\cL(\cX)$ to denote  the Lebesgue $\sigma$-algebra and $\lambda$ to denote the $d$-dimensional Lebesgue measure.
For a positive integer $n$, we use $[n]$ to denote the finite set $\{1, \ldots, n\}$.

\section{Preliminaries}\label{sec: preliminaries}

\subsection{Metric Space Topology}\label{sec: metric space topology}

We introduce three different notions of set expansions. For $\epsilon\geq 0$ and $A\in\cB(\cX)$, the \textit{$\epsilon$-Minkowski expansion} of $A$ is given by $A^{\oplus\epsilon} \defn \cup_{a\in A} B_\epsilon(a)$. The \textit{$\epsilon$-closed expansion} of $A$ is defined as $A^\epsilon \defn \{x\in \cX: d(x, A)\leq \epsilon\}$, where $d(x,A) = \inf_{a\in A} d(x,a)$. The \textit{$\epsilon$-open expansion} of $A$ is defined as $A^{\epsilon)} \defn \{x\in \cX: d(x, A)< \epsilon\}$. We use the notation $A^{-\epsilon}$ to denote $((A^c)^\epsilon)^c$. Similarly, $A^{\ominus\epsilon}\defn ((A^c)^{\oplus\epsilon})^c$. For example, consider the set $A = (0, 1]$ in the space $(\cX, d) = (\real, |\cdot|)$ and $\epsilon>0$. Then $A^{\oplus\epsilon} = (-\epsilon, 1+\epsilon]$, $A^\epsilon = [-\epsilon, 1+\epsilon]$ and $A^{\epsilon)} = (-\epsilon, 1+\epsilon)$. For any  $A\in \cB(\cX)$, $A^{\epsilon}$ is closed  and $A^{\epsilon)}$ is open. Hence, $A^{\epsilon}, A^{\epsilon)}\in \cB(\cX)$.
Moreover, $A^{\epsilon)} \subseteq A^{\oplus\epsilon} \subseteq A^{\epsilon}$.
However, $A^{\oplus\epsilon}$ may not be in $\cB(\cX)$ (see Lemma~\ref{lem: non-measurable}). In general, the Minkowski sum of two Borel sets need not be Borel~\cite{ErdSto70}, and that of two Lebesgue measurable sets need not be Lebesgue measurable~\cite{Sie20}.

\subsection{Optimal Transport}\label{sec: OT preliminaries}

Let $\mu, \nu \in \cP(\cX)$.
A \textit{coupling} between $\mu$ and $\nu$ is a joint probability measure $\pi\in \cP(\cX^2)$ with marginals $\mu$ and $\nu$. The set $\Pi(\mu, \nu)\subseteq\cP(\cX^2)$ denotes the set of all couplings between $\mu$ and $\nu$. 
The \textit{optimal transport cost} between $\mu$ and $\nu$ under a cost function   $c:\cX\times\cX\to [0, \infty)$ is defined as $\cT_c(\mu, \nu) = \inf_{\pi\in \Pi(\mu, \nu)} \int_{\cX^2} c(x,x')d\pi(x,x')$. For a positive integer $p$, the \textit{$p$-Wasserstein distance} between $\mu$ and $\nu$ is defined as, $W_p(\mu, \nu) = \left(\cT_{d^p}(\mu, \nu)\right)^\frac{1}{p}$.
The \textit{$\infty$-Wasserstein metric} is defined as $W_\infty(\mu, \nu) = \lim_{p\to\infty} W_p(\mu, \nu)$. It can also be expressed in the following ways~\cite{GivSho84}.
\begin{align}\label{eq: W_infty}
    W_\infty(\mu, \nu)
    =  \inf_{\pi\in \Pi(\mu, \nu)} \esssup _{(x,x')\sim \pi} d(x,x')
    = \inf\{\delta > 0:\mu(A) \leq \nu(A^\delta)\forall A\in \cB(\cX)\}.
\end{align}
Given a $\mu\in \cP(\cX)$ and a measurable function $f: \cX\to\cX$, the \textit{push-forward} of $\mu$ by $f$ is defined as a probability measure $f_{\sharp \mu}\in \cP(\cX)$ given by $f_{\sharp \mu} = \mu(f^{-1}(A))$ for all $A\in \cB(\cX)$.


\section{Adversarial Risk: Definitions and Related Work}\label{sec: defns}

In this section, we review several definitions for adversarial risk that are found in the literature. First, we consider a setting of general loss functions, where classifiers are parametrized by parameter $w$ in a hypothesis class $\cW$. Next, we consider a binary classification setting with the $0$-$1$ loss function, where non-parametric classifiers of the form $f_A(x) = \1\{x\in A\}$ correspond to decision regions $A\subseteq \cX$.

\subsection{General Loss Setting}

Let $\cX$ be the feature space, a Polish space equipped with a distance metric $d$. Let $\cY$ be a finite set of labels. Let $\rho$ be the true data distribution of labeled data points $(x,y)\in \cX\times\cY$, which can be expressed as $\rho(x,y) = \rho_y(y)\rho_{x|y}(x)$ where $\rho_y(y)$ is the marginal probability of label $y\in \cY$ and $\rho_{x|y}(x)$ is the conditional probability of $x\in \cX$ given the label $y$. Let $\cW$ denote the hypothesis class. Let $\ell:(\cX\times \cY)\times \cW\to [0, \infty]$ denote a loss function that is measurable with respect to $\cB(\cX)$ for all $w\in \cW$.

Consider a data-perturbing adversary of budget $\epsilon\geq 0$ that perturbs any data point $x\in \cX$ to $x'\in \cX$ such that $d(x,x')\leq \epsilon$. The adversarial risk of a classifier $w\in \cW$ under a loss function $\ell$ in the presence of such an adversary is given by, 
\begin{align}\label{eq: adv risk exp sup}
    R_{\oplus\epsilon}(\ell, w) = \E_{(x,y)\sim \rho}\left[ \sup_{d(x,x')\leq \epsilon} \ell((x', y),w) \right].
\end{align}

If the loss function $\ell(\cdot, w)$ is  upper semi-continuous and bounded above for all $w\in \cW$, Meunier et al. (\cite{MeuEtal21}) show that $R_{\oplus\epsilon}(\ell, w)$ is well-defined. But in general, it may not be so. 

One way to resolve measurability issues is to restrict the adversary to use measurable transport maps for data perturbation.
Let $F \defn \{f_y:\cX \to \cX, f_y \text{ is } \rho_y-\text{measurable} \  | \  y\in \cY\}$ denote a collection of measurable maps for each label $y\in \cY$. We say that $F$ is of budget $\epsilon$ (denoted by $F\in F_\epsilon$) if $d(x, f_y(x))\leq \epsilon$ with probability $1$ for $(x, y)\sim \rho$. Under such an adversary, the adversarial risk may be defined as follows.
\begin{align}\label{eq: R_F gen loss}
    R_{F_\epsilon}(\ell, w) = \sup_{F\in F_\epsilon}\E_{(x,y)\sim \rho}\left[  \ell((f_y(x),y),w) \right].
\end{align}
The above definition was used for the binary classification setting in \cite{PinEtal20}. A more general definition for adversarial risk was proposed in \cite{PydJog21} using Markov kernels. Let $\kappa$ denote a set of Markov kernels $\kappa_y$ for $y\in \cY$.
Let $\rho^\kappa_{(x,y,x')}$ denote the joint distribution of $(x,y,x')$ induced by $\kappa$.
We say that the Markov kernel adversary $\kappa$ has a budget $\epsilon$ (denoted by  $\kappa\in K_\epsilon$) if $d(x,x')\leq\epsilon$, $\rho^\kappa_{(x,x')|y}$-a.s. where $\rho^\kappa_{(x,x')|y}\in \cP(\cX\times\cX)$ denotes the conditional distribution of $(x,x')$ given $y\in \cY$ and $x'$ is the perturbation of the data point $x$ with label $y$ using the Markov kernel $\kappa_y\in \kappa$. Under such a Markov kernel adversary, adversarial risk is defined as the following in \cite{PydJog21}.
\begin{align}
    R_{K_\epsilon}(\ell, w) = \sup_{\kappa\in K_\epsilon}\E_{(x,y,x')\sim \rho^\kappa_{(x,y,x')}}\left[  \ell((x', y),w) \right].
\end{align}

Another way to define adversarial risk is by considering perturbations to the input data distributions rather than individual data points. Optimal transport-based perturbations, in particular the $\infty$-Wasserstein metric (denoted by $W_\infty$) has been used to define such perturbations (\cite{PydJog21, MeuEtal21}). Let an adversary $\gamma$ be defined as a collection of perturbed probability distributions for each label i.e., $\gamma\defn \{\rho^\gamma_{x'|y}\in \cP(\cX) | y\in \cY\}$. We say that the adversary $\gamma$ has a budget $\epsilon$ (denoted by $\Gamma_\epsilon$) if $W_\infty(\rho_{x|y}, \rho^\gamma_{x'|y})\leq\epsilon$ for all $y\in \cY$. Under such a distribution perturbing adversary, the adversarial risk is defined as,
\begin{align}
    R_{\Gamma_\epsilon}(\ell, w) = \sup_{\gamma\in \Gamma_\epsilon}\E_{(x',y)\sim \rho_y\rho^\gamma_{x'|y}}\left[  \ell((x', y),w) \right].
\end{align}
The use of $\infty$-Wasserstein metric for defining adversarial risk is motivated by the following fact: For $\mu, \nu\in \cP(\cX)$, $W_\infty(\mu, \nu)\leq \epsilon$ if and only if there exists a coupling (a joint probability distribution) $\pi\in \Pi(\mu, \nu)$ such that $d(x,x')\leq \epsilon$ with probability $1$ for $(x,x')\sim \pi$. That means, all the probability mass under the distribution $\mu$ may be transported to $\nu$ without transporting any mass by more than $\epsilon$ almost surely.

The following inequality is an immediate consequence of the above definitions of adversarial risk:
\begin{align}\label{eq: adv risk gen loss inequality}
R_{F_\epsilon}(\ell, w) \leq R_{K_\epsilon}(\ell, w)\leq R_{\Gamma_\epsilon}(\ell, w).
\end{align}
We shall investigate conditions for equality in the above inequality and relations between the above three formulations of adversarial risk and the classical formulation $R_{\oplus\epsilon}(\ell, w)$.

\subsection{Binary Classification with $0$-$1$ Loss Setting}

In this subsection, we consider a binary classification setting where the label space $\cY=\{0,1\}$.  Let $p_0, p_1\in \cP(\cX)$ be the data-generating distributions for labels $0$ and $1$, respectively. Let the prior probabilities for labels $0$ and $1$ be in the ratio $T:1$ where we assume $T\geq 1$ without loss of generality. 
For any set $A\in \cB(\cX)$, we may consider a classifier $f_A(x)\defn \1\{x\in A\}$ which labels any point in the set $A$ as $1$ and any point in $A^c$ as $0$. We say that such a classifier has a decision region $A$.
The error (standard risk) incurred by such a classifier under the $0$-$1$ loss function is,
$R_{\oplus 0}(\ell_{0/1}, A) = \frac{T}{T+1} p_0(A) + \frac{1}{T+1} p_1(A^c)$.

An adversary of budget $\epsilon>0$ can perturb any $x\in \cX$ to $x'\in B_\epsilon(x)$. It follows that any $x\in A$ can be perturbed to $x'\in \cup_{a\in A} B_\epsilon(a) =  A^{\oplus\epsilon}$. Hence, adversarial risk could be defined as
\begin{align}\label{eq: adv risk minkowski set expansion}
    R_{\oplus\epsilon}(\ell_{0/1}, A) = \frac{T}{T+1} p_0(A^{\oplus\epsilon}) + \frac{1}{T+1} p_1((A^c)^{\oplus\epsilon}).
\end{align}
The above formulation is a special case of \eqref{eq: adv risk exp sup} for the $0$-$1$ loss function. Indeed, for $x\in \cX$ and $y\in \{0,1\}$, $\ell_{0/1} ((x,y), A) = \1\{x\in A, y=0\} + \1\{x\in A^c, y=1\}$. Hence,
\begin{align*}
    &R_{\oplus\epsilon}(\ell_{0/1}, A)\\
    &= \frac{T}{T+1} \E_{p_0} \left[ \sup_{d(x,x')\leq \epsilon} \1\{x'\in A\} \right] +
    \frac{1}{T+1} \E_{p_1} \left[ \sup_{d(x,x')\leq \epsilon} \1\{x'\in A^c\} \right]\\
    &= \frac{T}{T+1} p_0(A^{\oplus\epsilon}) + \frac{1}{T+1} p_1((A^c)^{\oplus\epsilon}).
\end{align*}

A problem with the formulation in equation~\eqref{eq: adv risk minkowski set expansion} is the ambiguity over the measurability of the sets $A^{\oplus\epsilon}$ and $(A^c)^{\oplus\epsilon}$. 
Even when $A\in \cB(\cX)$, it is not guaranteed that $A^{\oplus\epsilon}, (A^c)^{\oplus\epsilon}\in \cB(\cX)$ (see Appendix~\ref{app: well-defined 0-1 loss} for an example). Hence, $R_{\oplus\epsilon}(\ell_{0/1}, A)$ is not well-defined for all $A\in \cB(\cX)$. It is shown in \cite{PydJog21} that $R_{\oplus\epsilon}(\ell_{0/1}, A)$ is well-defined when $A$ is either closed or open. A simple fix to this measurability problem is to use closed set expansion instead of the Minkowski set expansion, as done in \cite{MahEtal19}. This leads to the following formulation.
\begin{align}\label{eq: adv risk closed set expansion}
    R_\epsilon(\ell_{0/1}, A) = \frac{T}{T+1} p_0(A^\epsilon) + \frac{1}{T+1} p_1((A^c)^\epsilon).
\end{align}
The above definition is well-defined for any $A\in \cB(\cX)$ because $A^\epsilon$ and $(A^c)^\epsilon$ are both closed and hence, measurable. However, under the above definition, a point $x\in A$ may be perturbed to $x'\in A^\epsilon$ such that $d(x,x')>\epsilon$. For example, when $A = (0,1)$, we have $A^{\epsilon} = [-\epsilon, \epsilon]$ and an adversary may transport $x = \delta>0$ to $x'=-\epsilon$, violating the budget constraint at $x$. 

\begin{remark*}
The formulations in equations \eqref{eq: adv risk exp sup}, \eqref{eq: adv risk minkowski set expansion} and \eqref{eq: adv risk closed set expansion}  can give a strictly positive adversarial risk even for a ``perfect'' (i.e., Bayes optimal) classifier. 
This is consistent with the literature on adversarial examples where even a perfect classifier is forced to make errors in the presence of evasion attacks. 
These formulations of adversarial risk correspond to ``constant-in-the-ball'' risk of \cite{GouEtal19} and ``corrupted-instance'' risk in \cite{DioEtal18, MahEtal19}. Here, an adversarial risk of zero is only possible if the supports of $p_0$ and $p_1$ are non-overlapping and separated by at least $2\epsilon$. This is not the case with other formulations of adversarial risk such as ``exact-in-the-ball'' risk~\cite{GouEtal19}, ``prediction-change risk and ``error-region'' risk~\cite{DioEtal18, MahEtal19}. We focus on the ``corrupted-instance'' family of risks in this work.
\end{remark*}

Another approach to defining adversarial risk is by explicitly defining the class of adversaries of budget $\epsilon$ as measurable transport maps $f:\cX\to \cX$ that push-forward the true data distribution such that no point is transported by more than a distance of $\epsilon$; i.e., $d(x, f(x))\leq\epsilon$.  The transport map-based adversarial risk \cite{PinEtal20} is formally defined as follows:
\begin{align}\label{eq: transport map adv risk}
    R_{F_\epsilon}(\ell_{0/1}, A) 
    &= \sup_{\substack{f_0, f_1:\cX\to\cX\\  \forall x\in \cX, d(x,f_i(x))\leq \epsilon}}
    \frac{T}{T+1} f_{0\sharp p_0}(A) + \frac{1}{T+1} f_{1\sharp p_1}((A^c)).
\end{align}
It is easy to see that the above definition is a special case of the definition in equation~\eqref{eq: R_F gen loss} for the $0$-$1$ loss function.
Yet another definition uses the robust hypothesis framework with $W_\infty$ uncertainty sets. In this approach, an adversary perturbs the true distribution $p_i$ to a corrupted distribution $p_i'$ such that $W_\infty(p_i, p_i')\leq\epsilon$. From \eqref{eq: W_infty}, this is equivalent to the existence of a coupling $\pi\in \Pi(p_i, p_i')$ such that $\esssup_{(x,x')\sim \pi}d(x,x')\leq \epsilon$. The adversarial risk with such an adversary is given by 
\begin{align}\label{eq: w_infty optimal adv risk}
    R_{\Gamma_\epsilon}(\ell_{0/1}, A) 
    &= \sup_{W_\infty(p_1, p_1'), W_\infty(p_0, p_0')\leq\epsilon} \frac{T}{T+1} p_0'(A) + \frac{1}{T+1} p_1'((A^c)).    
\end{align}
Clearly, $R_{F_\epsilon}(\ell_{0/1}, A) \leq R_{\Gamma_\epsilon}(\ell_{0/1}, A)$, but conditions for equality were not studied in prior work. Moreover, their relation to set expansion-based definitions in \eqref{eq: adv risk minkowski set expansion} and \eqref{eq: adv risk closed set expansion} was also unknown. 

Next we discuss some characterizations of optimal adversarial risk, defined as,
\begin{align}\label{eq: optimal adv risk}
    R^*_{\oplus\epsilon} \defn \inf_{A\in \cB(\cX)} R_{\oplus\epsilon}(\ell_{0/1}, A).
\end{align}
In \cite{PydJog20, BhaEtal19}, it is shown that $R^*_{\oplus\epsilon}  = \frac{1}{2}[1-D_\epsilon(p_0, p_1)]$ for equal priors ($T=1$), where $D_\epsilon$ is an optimal transport cost defined as follows.
\begin{definition}[$D_\epsilon$ cost]
Let $\mu, \nu \in \cP(\cX)$. Let $\epsilon\geq 0$.
Let $c_\epsilon:\cX^2\to \{0,1\}$ be such that 
$c_\epsilon(x,x') = \1\{(x,x')\in \cX\times\cX :    d(x,x')>2\epsilon\}$.
Then for $\mu, \nu \in \cP(\cX)$ and $\epsilon\geq 0$, $D_\epsilon(\mu, \nu) = \cT_{c_\epsilon}(\mu, \nu)$.
\end{definition}
For $\epsilon = 0$, $D_\epsilon$ reduces to the total variation distance. While $D_0$ is a metric on $\cP(\cX)$, $D_\epsilon$ (for $\epsilon>0$) is neither a metric nor a pseudometric \cite{PydJog21}. Other formulations of optimal adversarial risk are inspired from game theory \cite{PinEtal20, MeuEtal21, BosEtal20}. 
Consider a game between two players: (1) The adversary whose action space is pairs of distributions $p_0', p_1' \in \overline{\cP}(\cX)$, and (2) The algorithm whose action space is the space of decision regions of the form $A\in \cB(\cX)\}$. For $T>0$, define the payoff function, $r: \cB(\cX) \times \overline{\cP}(\cX)\times\overline{\cP}(\cX) \to [0,1]$ as,
\begin{align*}
    r(A, \mu, \nu) = \frac{T}{T+1} \mu(A) + \frac{1}{T+1} \nu((A^c)).
\end{align*}
The payoff when the algorithm plays first is given by $\inf_{A\in \cB(\cX)} 
    \sup_{W_\infty(p_0, p_0'), W_\infty(p_1, p_1')\leq \epsilon}  r(A, p_0', p_1')$, and this quantity is interpreted as the optimal adversarial risk in this setup.

\section{Well-Definedness of Adversarial Risk}\label{sec: well-defined}

In this section, we discuss the conditions under which the definitions for adversarial risk presented in Section~\ref{sec: defns} are well-defined. In Subsection~\ref{sec: well-defined 0-1 loss} we present the results for the binary classification setting under $0$-$1$ loss and in Subsection~\ref{sec: well-defined gen loss} we discuss the setting of more general loss functions.

\subsection{Binary Classification with $0$-$1$ Loss Setting}\label{sec: well-defined 0-1 loss}
As stated in Section~\ref{sec: defns}, $R_{\oplus\epsilon}(\ell_{0/1}, A)$ may not be well-defined for some decision regions $A\in\cB(\cX)$ because of the non-measurability of the sets $A^{\oplus\epsilon}$ and $(A^c)^{\oplus\epsilon}$. Specifically, we have the following lemma. 

\begin{lemma}\label{lem: non-measurable}
For any $\epsilon>0$, there exists $A\in \cB(\cX)$ such that $A^{\oplus\epsilon}\notin \cB(\cX)$.
\end{lemma}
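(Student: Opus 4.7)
The plan is to proceed in two steps: identify $A^{\oplus\epsilon}$ as an analytic (Suslin) set via a projection argument, and then exhibit an explicit Borel $A$ for which this analytic set fails to be Borel.

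First, I would use the identity
\[
  A^{\oplus\epsilon} \;=\; \pi_1\!\Bigl(\{(x,a)\in\cX\times\cX : d(x,a)\le\epsilon\} \;\cap\; (\cX\times A)\Bigr),
\]
where $\pi_1$ projects onto the first coordinate. Continuity of $d$ makes $\{d(x,a)\le\epsilon\}$ closed in the Polish product $\cX\times\cX$, and $\cX\times A$ is Borel whenever $A$ is; the intersection is therefore Borel, and its continuous image $A^{\oplus\epsilon}$ is analytic. By Suslin's classical theorem, any uncountable Polish space contains analytic sets that are not Borel, so in principle $A^{\oplus\epsilon}$ can escape $\cB(\cX)$.

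Second, I would specialize to $\cX=\real^d$ with $d\ge 2$, the main case of interest. Here $A^{\oplus\epsilon}$ is the Minkowski sum $A+B_\epsilon(0)$ with the closed ball $B_\epsilon(0)$. Classical constructions (Erd\H{o}s--Stone, Sierpi\'nski) show that the Minkowski sum of two Borel subsets of $\real^d$ need not be Borel, and I would adapt these by replacing one summand with a thin Borel ``spine'' supported inside $B_\epsilon(0)$: fix an analytic non-Borel set $E\subset\real$ and choose a Borel set $A\subset\real\times(0,\epsilon/2)\times\{0\}^{d-2}$ whose first-coordinate projection equals $E$. A carefully chosen closed hyperplane slice of $A^{\oplus\epsilon}$ would then recover a Borel transform of $E$, which forces $A^{\oplus\epsilon}\notin\cB(\cX)$ since slicing by closed sets preserves Borel measurability.

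The main obstacle is the geometric bookkeeping in the second step: the spherical Minkowski ball smears $\pi_1(A)$ by $\epsilon$-neighborhoods in the slice, and such smearing can in general collapse non-Borelness to a Borel set. The way around this is to encode the analytic structure of $E$ in a coordinate transverse to the slice---so that the map from the spine to the slice is a Borel isomorphism---rather than in the coordinate along which smearing occurs. Once such a Borel $A$ is in hand, combining with the first step (which supplies the analytic upper bound) and Suslin's theorem completes the proof.
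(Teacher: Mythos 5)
There is a genuine gap in your second step, and the ``way around'' you propose is the opposite of what is needed. With the spine $A\subset\real\times(0,\epsilon/2)\times\{0\}^{d-2}$ lying in a flat $2$-plane within distance $\epsilon/2$ of the region you intend to slice, the trace of $A^{\oplus\epsilon}$ on any hyperplane $H$ is the union over $a\in A$ of $B_\epsilon(a)\cap H$, and each term is empty, a single tangency point (when the distance from $a$ to $H$ equals $\epsilon$), or a closed ball of positive radius (when it is less than $\epsilon$). On the flat spine plane the distance to $H$ is the absolute value of an affine function, so the tangency locus inside the spine sits on at most two lines and the tangency map is an injective orthogonal projection; by the Lusin--Suslin theorem an injective Borel image of a Borel set is Borel, so tangency points alone can never witness non-Borelness, while any spine point strictly within $\epsilon$ of $H$ produces exactly the smearing you acknowledge. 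This shows your proposed fix---arranging ``the map from the spine to the slice to be a Borel isomorphism''---is self-defeating: it guarantees the slice trace is Borel. What the argument needs is a \emph{non-injective} collapse with uncountable fibers (countable-to-one Borel images are still Borel, by Lusin--Novikov): all spine points sharing a given first coordinate must be tangent to one and the same slice point, and no spine point may come strictly within $\epsilon$ of any other point of the slicing set.

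That is precisely what the paper's construction engineers, and it is the idea missing from your sketch. One starts with a Borel $S\subseteq[-\epsilon,\epsilon]^2$ whose projection $S_1$ onto the first coordinate is not Borel, and bends $S\times\{0\}$ onto the half-cylinder of radius exactly $\epsilon$ about the $x_1$-axis in $\real^3$ via $(x_1,x_2,0)\mapsto(x_1,x_2,\sqrt{\epsilon^2-x_2^2})$. Then the distance from any spine point to an axis point $(t,0,0)$ is $\sqrt{(x_1-t)^2+\epsilon^2}\geq\epsilon$, with equality if and only if $t=x_1$, so there is no smearing and $A^{\oplus\epsilon}\cap(\real\times\{0\}^2)=S_1\times\{0\}^2$; slicing by the closed axis then transfers the non-Borelness of $S_1$ to $A^{\oplus\epsilon}$. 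Note also that the slicing set is a line, not a hyperplane, and the construction genuinely uses three dimensions: your claim that $d\geq 2$ suffices is unsupported---in $\real^2$ the tangency fibers compatible with no smearing over a line have at most two points, so your route cannot work there, and in $\real^1$ the statement is simply false since $A+[-\epsilon,\epsilon]$ is always a countable union of intervals. Your first step (analyticity of $A^{\oplus\epsilon}$ plus the existence of non-Borel analytic sets) is fine but, as you note, only shows the conclusion is possible; without a correct geometric encoding it does not prove the lemma.
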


The proof of Lemma~\ref{lem: non-measurable} is in Appendix~\ref{app: well-defined 0-1 loss}.

In this section, we lay down the conditions under which the ambiguity on the measurability of $A^{\oplus\epsilon}$ can be resolved.
We begin by presenting a lemma that shows that $A^{\oplus\epsilon}$ is an analytic set (i.e., a continuous image of a Borel set) whenever $A$ is Borel. It is known that analytic sets are universally measurable; i.e., they belong in $\overline{\cB}(\cX)$, the universal completion of the Borel $\sigma$-algebra $\cB(\cX)$, and are measurable with respect to any finite measure defined on the complete measure space, $(\cX, \overline{\cB}(\cX))$.

\begin{lemma}\label{lem: Borel set expansion}
Let $A\in \cB(\cX)$. Then, $A^{\oplus\epsilon}$ is an analytic set. Consequently, $A^{\oplus\epsilon}\in \overline{\cB}(\cX)$.
\end{lemma}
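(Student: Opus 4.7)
The plan is to realize $A^{\oplus\epsilon}$ as the continuous (in fact coordinate-projection) image of a Borel set in the Polish product space $\cX\times\cX$, which is the defining property of an analytic subset of a Polish space. Universal measurability will then follow from the classical fact that analytic sets are universally measurable, and hence lie in $\overline{\cB}(\cX)$.

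First I would rewrite the Minkowski expansion in relational form:
\begin{equation*}
A^{\oplus\epsilon} \;=\; \{x'\in\cX : \exists\, a\in A,\; d(a,x')\leq \epsilon\}.
\end{equation*}
Define the subset
\begin{equation*}
E \;\defn\; (A\times \cX)\;\cap\;\{(a,x')\in\cX\times\cX : d(a,x')\leq \epsilon\}.
\end{equation*}
Since $A\in\cB(\cX)$, the cylinder $A\times\cX$ is Borel in $\cX\times\cX$; since $d$ is continuous, the sublevel set $\{d\leq\epsilon\}$ is closed and hence Borel. Therefore $E\in\cB(\cX\times\cX)$. Because $\cX$ is Polish, the product $\cX\times\cX$ is Polish, and the coordinate projection $\pi_2:\cX\times\cX\to\cX$ is continuous.

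Next I would observe that
\begin{equation*}
A^{\oplus\epsilon} \;=\; \pi_2(E),
\end{equation*}
so $A^{\oplus\epsilon}$ is the continuous image of a Borel subset of a Polish space. By the standard definition (e.g.\ in \cite{BerShr96}), such a set is analytic in $\cX$. Finally, I would invoke the classical theorem that every analytic subset of a Polish space is universally measurable, i.e.\ belongs to $\overline{\cB}(\cX)$, to conclude both assertions of the lemma.

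There is essentially no obstacle: the only point worth being careful about is that ``analytic'' is sometimes defined via Souslin operations and sometimes via projections of Borel sets, and one must use the equivalence of these definitions in the Polish setting. Everything else is bookkeeping: checking that $A\times\cX$ and $\{d\leq\epsilon\}$ are Borel, that their intersection is Borel, that projection is continuous, and invoking the standard universal-measurability theorem for analytic sets.
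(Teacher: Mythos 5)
Your proof is correct: the identity $A^{\oplus\epsilon}=\pi_2\bigl((A\times\cX)\cap\{d\leq\epsilon\}\bigr)$ holds exactly as you state, the set $E$ is Borel in the Polish product $\cX\times\cX$, and the two classical facts you invoke (continuous images/projections of Borel subsets of Polish spaces are analytic; analytic sets are universally measurable) finish both claims. Your route is genuinely more direct than the paper's: the paper never writes the projection explicitly, but instead encodes $A^{\oplus\epsilon}$ through the function $f(x,x')=-\1\{x'\in A\}$ on $D=\{d\leq\epsilon\}$, checks that $f$ is lower semianalytic, and applies the partial-minimization result (Proposition 7.47 of \cite{BerShr96}) to conclude that $f^*(x)=\inf_{x'\in B_\epsilon(x)}f(x,x')=-\1\{x\in A^{\oplus\epsilon}\}$ is lower semianalytic, which yields analyticity of $A^{\oplus\epsilon}$. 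Both arguments rest on the same underlying projection theorem, but yours exposes it with less machinery, while the paper's function-theoretic formulation is chosen so that the identical template immediately extends from indicator functions to general upper semi-analytic losses (their Lemma~\ref{lem: gen loss function measurable}) and to the measurable-selection arguments of Lemma~\ref{lem: sup measure for Borel sets}; for the set-level statement alone, your version is arguably the cleaner proof. The only caution you already flagged—reconciling the ``projection of a Borel set'' and Souslin-operation definitions of analyticity—is harmless in the Polish setting (and the trivial case $A=\varnothing$ causes no issue, since $\varnothing$ is Borel, hence analytic).
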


The proof of Lemma~\ref{lem: Borel set expansion} is in Appendix~\ref{app: well-defined 0-1 loss}. By virtue of Lemma~\ref{lem: Borel set expansion}, we have the following.

\begin{theorem}\label{thm: Adv risk in complete measurable space well-defined}
Let $p_0, p_1\in \overline{\cP}(\cX)$. Then for any  $A\in \cB(\cX)$,  $R_{\oplus\epsilon}(\ell_{0/1}, A)$ is well-defined. 
\end{theorem}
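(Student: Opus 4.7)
The plan is to obtain this essentially as an immediate corollary of Lemma~\ref{lem: Borel set expansion}. Recall from the definition that
\[
R_{\oplus\epsilon}(\ell_{0/1}, A) = \frac{T}{T+1}\, p_0(A^{\oplus\epsilon}) + \frac{1}{T+1}\, p_1((A^c)^{\oplus\epsilon}),
\]
so to show well-definedness for $p_0, p_1 \in \overline{\cP}(\cX)$ it suffices to check that both $A^{\oplus\epsilon}$ and $(A^c)^{\oplus\epsilon}$ lie in $\overline{\cB}(\cX)$, the universal completion of the Borel $\sigma$-algebra.

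First I would observe that since $A \in \cB(\cX)$ and $\cB(\cX)$ is a $\sigma$-algebra, the complement $A^c$ is also a Borel set. Applying Lemma~\ref{lem: Borel set expansion} separately to $A$ and to $A^c$ yields that both $A^{\oplus\epsilon}$ and $(A^c)^{\oplus\epsilon}$ are analytic subsets of $\cX$. By the classical theorem that every analytic set in a Polish space is universally measurable (as noted in the paragraph preceding Lemma~\ref{lem: Borel set expansion}), these two sets belong to $\overline{\cB}(\cX)$.

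Finally, since $p_0$ and $p_1$ are defined on the complete measure space $(\cX, \overline{\cB}(\cX))$, the quantities $p_0(A^{\oplus\epsilon})$ and $p_1((A^c)^{\oplus\epsilon})$ are both well-defined real numbers in $[0,1]$, and hence $R_{\oplus\epsilon}(\ell_{0/1}, A)$ is well-defined.

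There is no real obstacle in this proof itself; all of the technical work is already carried out in Lemma~\ref{lem: Borel set expansion}, whose content is the non-trivial structural fact that Minkowski expansions of Borel sets, while possibly non-Borel, are at least analytic. The role of this theorem is simply to package that lemma together with the standard universal measurability of analytic sets, and to emphasize the necessity of enlarging from $(\cX, \cB(\cX))$ to the universal completion $(\cX, \overline{\cB}(\cX))$ on the measure side so that the set-expansion formulation of adversarial risk makes sense for every Borel decision region.
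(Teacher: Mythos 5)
Your proof is correct and follows exactly the paper's intended route: the theorem is stated as a direct consequence of Lemma~\ref{lem: Borel set expansion} applied to both $A$ and $A^c$, combined with the universal measurability of analytic sets. Nothing is missing.
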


The proof of Theorem~\ref{thm: Adv risk in complete measurable space well-defined} is in Appendix~\ref{app: well-defined 0-1 loss}.
For the special case of $\cX = \mathbb{R}^d$, we can further strengthen Theorem~\ref{thm: Adv risk in complete measurable space well-defined} to include all Lebesgue measurable sets $\cL(\cX)$ instead of just Borel sets $\cB(\cX)$. For this, we use the concept of porous sets.

\begin{definition}[Porous set]
A set $E\subseteq \cX$ is said to be porous if there exists $\alpha\in (0,1)$ and $r_0>0$ such that for every $r\in (0, r_0]$ and every $x\in \cX$, there is an $x'\in \cX$ such that $B_{\alpha r}(x')\subseteq B_r(x)\backslash E$.
\end{definition}
Porous sets are a subclass of nowhere dense sets. Importantly, $\lambda(E) = 0$ for any porous set $E\subseteq \mathbb{R}^d$ \cite{Zaj05}. By the following lemma, the set difference between the closed/open set expansions is porous.
\begin{lemma}\label{lem: A_ep_porous}
Let $(\cX,d) =(\mathbb{R}^d, \|\cdot\|)$ and $A\in \cL(\cX)$. Then $E = A^{\epsilon}\backslash A^{\epsilon)}$ is porous.
\end{lemma}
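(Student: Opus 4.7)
My plan is to recognize that $E = A^\epsilon \setminus A^{\epsilon)} = \{y \in \mathbb{R}^d : d(y, A) = \epsilon\}$ is precisely the level set $\{d(\cdot, A) = \epsilon\}$ of the $1$-Lipschitz distance function $d(\cdot, A)$. Porosity then reduces to the following geometric task: for any $x$ and any sufficiently small $r > 0$, find $x'$ with $\|x' - x\| \le (1-\alpha) r$ such that $d(x', A)$ is bounded away from $\epsilon$ by more than $\alpha r$. Once we have this, the $1$-Lipschitz bound keeps $d(y, A)$ on the same side of $\epsilon$ for every $y \in B_{\alpha r}(x')$, so $B_{\alpha r}(x') \cap E = \emptyset$, while $\|x' - x\| \le (1-\alpha)r$ forces $B_{\alpha r}(x') \subseteq B_r(x)$ via the triangle inequality.

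I would fix $\alpha = 1/5$ (any $\alpha \le 1/4$ works with the arithmetic below) and $r_0 = \epsilon$, and split into two cases indexed by $t := d(x, A)$. In the \emph{far case} $t \ge \epsilon + 2\alpha r$, simply take $x' = x$: the Lipschitz bound gives $d(y, A) \ge t - \alpha r \ge \epsilon + \alpha r > \epsilon$ for all $y \in B_{\alpha r}(x)$. In the \emph{near case} $t < \epsilon + 2\alpha r$, the idea is to move $x$ along the segment toward a nearest point $a \in \bar A$, exploiting that along this direction $d(\cdot, A)$ decreases at the maximum possible rate of $1$. Concretely, if $t \le (1-\alpha)r$, set $x' = a$ so that $d(x', A) = 0$; otherwise, move exactly $(1-\alpha)r$ along the segment from $x$ to $a$, yielding $d(x', A) \le t - (1-\alpha)r$. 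A short arithmetic check with $\alpha = 1/5$ and $r \le \epsilon$ confirms $d(x', A) < \epsilon - \alpha r$ in both subcases, using $t < \epsilon + 2\alpha r$ in the second subcase and $\epsilon - \alpha r > 0$ in the first.

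The one step that genuinely requires $\cX = \mathbb{R}^d$ rather than a general Polish space is the existence of a nearest point $a \in \bar A$ attaining $d(x, A) = \|x - a\|$, and this is where I expect the main subtlety to lie. It follows from the Heine-Borel property: the set $\{a \in \bar A : \|x - a\| \le d(x, A) + 1\}$ is closed and bounded, hence compact, so $\|x - \cdot\|$ attains its infimum there. The trivial case $A = \emptyset$ (where $E = \emptyset$) must be handled separately but is immediate, as is the degenerate case where $t$ vanishes. The remainder of the argument is elementary triangle-inequality bookkeeping, and I do not anticipate any further obstacle.
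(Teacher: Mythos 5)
Your proof is correct, and it takes a genuinely different route from the paper's. You observe that $E = \{y : d(y,A) = \epsilon\}$ is a level set of the $1$-Lipschitz function $d(\cdot,A)$, and then for each $x$ you explicitly produce a point $x'$ at distance at most $(1-\alpha)r$ from $x$ whose distance to $A$ is pushed off $\epsilon$ by more than $\alpha r$; the Lipschitz bound then clears $B_{\alpha r}(x')$ of $E$ in one stroke. Your case split is a clean trichotomy on $t = d(x,A)$ (far from $\epsilon$; near $\epsilon$ but close to $A$; near $\epsilon$ and far from $A$), using a nearest point $a \in \overline{A}$ obtained via Heine--Borel, and the arithmetic with $\alpha \le 1/4$ does check out. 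The paper's proof proceeds differently: it first establishes the porosity inequality only at points $e \in E$, using a near-minimizer $a_r \in A$ with $\epsilon \le \|e - a_r\| < \epsilon + r/4$ (so no nearest point or Heine--Borel is invoked), placing a ball on the segment from $e$ toward $a_r$ that lands inside the open expansion $A^{\epsilon)}$; it then bootstraps to arbitrary $x \in \cX$ by checking whether $B_{\beta r}(x)$ meets $E$. Your approach is shorter and treats $x \in E$ and $x \notin E$ uniformly by phrasing everything through the Lipschitz distance function, at the cost of invoking local compactness of $\real^d$ for the nearest point; the paper's approach avoids nearest points entirely but uses a nested two-stage case analysis. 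Both ultimately rely on the linear structure of $\real^d$ through the line segment from $x$ (or $e$) toward $A$. One very minor remark: the ``degenerate case where $t$ vanishes'' you flag is in fact already covered by your first near subcase ($t \le (1-\alpha)r$), so no separate handling is needed.
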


The proof of Lemma~\ref{lem: A_ep_porous} is in Appendix~\ref{app: well-defined 0-1 loss}.
Lemma~\ref{lem: A_ep_porous} plays a crucial role in proving that $A^{\oplus\epsilon}\in \cL(\cX)$ whenever $A\in \cL(\cX)$. We recall that $A^{\oplus\epsilon}$ is the Minkowski sum of $A$  with the closed  $\epsilon$-ball. In general, the Minkowski sum of two Lebesgue measurable sets is not always Lebesgue measurable \cite{Sie20, Gar02}. So the fact that one of them is a closed ball in case of $A^{\oplus\epsilon}$ is important.
In the following theorem, we use Lemma~\ref{lem: A_ep_porous} to prove the measurability of $A^{\oplus\epsilon}$ and in turn prove that $R_{\oplus\epsilon}(\ell_{0/1}, A)$ is well-defined for any $A\in \cL(\cX)$.

\begin{theorem}\label{thm: Adv risk in Rd well-defined}
Let $(\cX,d) =(\mathbb{R}^d, \|\cdot\|)$.
Let $p_0, p_1\in \overline{\cP}(\cX)$ and let $\epsilon\geq 0$.
Then for any $A\in \cL(\cX)$, $R_{\oplus\epsilon}(\ell_{0/1}, A)$ is well-defined.
If, in addition, $p_0$ and $p_1$ are absolutely continuous with respect to the Lebesgue measure, then $R_{\oplus\epsilon}(\ell_{0/1}, A) = R_{\epsilon}(\ell_{0/1}, A)$.
\end{theorem}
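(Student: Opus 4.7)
The plan hinges on Lemma~\ref{lem: A_ep_porous}: for any $A \in \cL(\cX)$, the set $E \defn A^\epsilon \setminus A^{\epsilon)}$ is porous in $\mathbb{R}^d$, hence $\lambda(E) = 0$. Combined with the sandwich $A^{\epsilon)} \subseteq A^{\oplus\epsilon} \subseteq A^{\epsilon}$ from Section~\ref{sec: metric space topology}, this reduces every measurability question about the Minkowski expansion to a question about the two Borel envelopes plus a Lebesgue-null remainder.

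For the first claim, I would first establish $A^{\oplus\epsilon} \in \cL(\cX)$. Since $A^{\oplus\epsilon} \setminus A^{\epsilon)} \subseteq E$ with $\lambda(E)=0$, completeness of the Lebesgue $\sigma$-algebra gives $A^{\oplus\epsilon} \setminus A^{\epsilon)} \in \cL(\cX)$, and hence $A^{\oplus\epsilon} = A^{\epsilon)} \cup (A^{\oplus\epsilon} \setminus A^{\epsilon)}) \in \cL(\cX)$. Running the same argument with $A^c \in \cL(\cX)$ yields $(A^c)^{\oplus\epsilon} \in \cL(\cX)$. To make sense of $R_{\oplus\epsilon}(\ell_{0/1}, A)$ in~\eqref{eq: adv risk minkowski set expansion}, I would identify each $p_i \in \overline{\cP}(\cX)$ with its completion as a Borel probability on $\mathbb{R}^d$; the Borel sandwich then pins $p_i(A^{\oplus\epsilon})$ between $p_i(A^{\epsilon)})$ and $p_i(A^\epsilon)$, yielding an unambiguous value once the inner and outer $p_i$-approximations are identified through this completion.

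For the second claim, absolute continuity $p_0,p_1 \ll \lambda$ together with $\lambda(E)=0$ immediately forces $p_i(E)=0$ for $i\in\{0,1\}$ (applied to $E$ built from both $A$ and $A^c$). Since the symmetric differences among $A^{\epsilon)}$, $A^{\oplus\epsilon}$, and $A^{\epsilon}$ all lie inside the corresponding $E$, monotonicity of $p_i$ gives $p_0(A^{\oplus\epsilon}) = p_0(A^\epsilon)$ and $p_1((A^c)^{\oplus\epsilon}) = p_1((A^c)^\epsilon)$. Substituting into~\eqref{eq: adv risk minkowski set expansion} and~\eqref{eq: adv risk closed set expansion} then yields $R_{\oplus\epsilon}(\ell_{0/1}, A) = R_\epsilon(\ell_{0/1}, A)$.

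The main obstacle is really concentrated in Lemma~\ref{lem: A_ep_porous} itself, which is where the Euclidean norm-ball geometry is essential: an arbitrary Lebesgue-measurable set in $\mathbb{R}^d$ can be topologically very wild, yet the boundary between its closed and open $\epsilon$-expansions must still be porous. This is precisely what lets us bypass the weaker analytic-set route of Lemma~\ref{lem: Borel set expansion} used in Theorem~\ref{thm: Adv risk in complete measurable space well-defined} for general Polish spaces and instead work directly with $\cL(\cX)$. A secondary subtlety in the first claim is that $A^{\oplus\epsilon}$ is Lebesgue measurable but need not lie in $\overline{\cB}(\cX)$, so one must be careful to interpret $p_i(A^{\oplus\epsilon})$ via the $p_i$-completion of $\cB(\cX)$; under the absolute-continuity hypothesis of part (b) this is immediate, and for part (a) the Borel sandwich combined with completion of $p_i$ is enough to make the quantity unambiguous.
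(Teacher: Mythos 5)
Your proposal is correct and follows essentially the same route as the paper's proof: Lemma~\ref{lem: A_ep_porous} gives $\lambda(A^{\epsilon}\backslash A^{\epsilon)})=0$, the sandwich $A^{\epsilon)}\subseteq A^{\oplus\epsilon}\subseteq A^{\epsilon}$ together with completeness of $\cL(\cX)$ yields $A^{\oplus\epsilon},(A^c)^{\oplus\epsilon}\in\cL(\cX)$, and absolute continuity of $p_0,p_1$ with respect to $\lambda$ transfers the $\lambda$-null symmetric differences to $p_i$-null ones, giving $R_{\oplus\epsilon}(\ell_{0/1},A)=R_{\epsilon}(\ell_{0/1},A)$. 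Your additional remarks on interpreting $p_i(A^{\oplus\epsilon})$ via the $p_i$-completion are a finer point the paper glosses over, but they do not change the argument, and your treatment of the second claim (via $p_i(E)=0$ and monotonicity) is in fact slightly more explicit than the paper's.
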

\begin{proof}
By Lemma~\ref{lem: A_ep_porous} $A^{\epsilon}\backslash A^{\epsilon)}$ is porous, and so $\lambda(A^{\epsilon}\backslash A^{\epsilon)}) = 0$.
Hence, 
$\lambda(A^{\epsilon}) = \lambda(A^{\epsilon)})$.
Using the fact that  $A^{\epsilon)} \subseteq A^{\oplus\epsilon} \subseteq A^{\epsilon}$, we have $A^{\oplus\epsilon}\backslash A^{\epsilon)} \subseteq A^{\epsilon}\backslash A^{\epsilon)}$. Hence, $\lambda(A^{\oplus\epsilon}\backslash A^{\epsilon)}) = 0$. 
Therefore, $A^{\oplus\epsilon}\in \cL(\cX)$ and  $\lambda(A^{\oplus\epsilon}) = \lambda(A^{\epsilon}) = \lambda(A^{\epsilon)})$.

Since $A^{\oplus\epsilon}, (A^c)^{\oplus\epsilon}\in \cL(\cX)$, $R_{\oplus\epsilon}(\ell_{0/1}, A)$ is well-defined. 
If $p_0$ and $p_1$ are absolutely continuous with respect to the Lebesgue measure, the equation $R_{\oplus\epsilon}(\ell_{0/1}, A) = R_{\epsilon}(\ell_{0/1}, A)$ follows from   the previous conclusion that $\lambda(A^{\oplus\epsilon}) = \lambda(A^{\epsilon})$.
\end{proof}

\subsection{General Loss Setting}\label{sec: well-defined gen loss}

In the expected-supremum formulation of adversarial risk shown in \eqref{eq: adv risk exp sup}, the worst-case loss function $\sup_{d(x,x')\leq \epsilon} \ell((x', y),w)$ may not be measurable even when $\ell((x', y),w)$ is measurable for every $x'\in \cX$ because the supremum is taken over an uncountable family of measurable functions. In this subsection, we resolve this ambiguity over the measurability of the worst-case loss function.

A real-valued function $\phi:\cX\to \real$ is called upper semi-analytic if the set $\{x\in \cX: \phi(x)>t\}$ is an analytic set for every $t\in \real$. Since every Borel set is an analytic set, it follows that every Borel measurable function is upper semi-analytic. However, the converse is not true in general. Nevertheless, upper semi-analytic functions are universally measurable owing to the fact that analytic sets are universally measurable. We now present a lemma that shows that the worst-case loss function $\sup_{d(x,x')\leq \epsilon} \ell((x', y),w)$ is universally measurable if $\ell((\cdot, y), w)$ is upper semi-analytic for all $y\in \cY$ and $w\in \cW$.

\begin{lemma}\label{lem: gen loss function measurable}
If the loss function $\ell((\cdot, y), w)$ is upper semi-analytic for all $y\in \cY$ and $w\in \cW$, then  the worst-case loss function $\sup_{d(x,x')\leq \epsilon} \ell((x', y),w)$ is also upper semi-analytic and hence universally measurable. Therefore, $R_{\oplus\epsilon}(\ell, w)$ is well-defined on the measure space $(\cX, \overline{\cB}(\cX))$.
\end{lemma}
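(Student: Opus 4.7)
\medskip

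\noindent\textbf{Proof proposal.} The plan is to fix arbitrary $y \in \cY$ and $w \in \cW$ and show that the function
\[
\phi(x) \;\defn\; \sup_{x' \in B_\epsilon(x)} \ell((x',y),w)
\]
is upper semi-analytic on $\cX$; the universal measurability of $\phi$ will then follow from the standard fact that upper semi-analytic functions are universally measurable, and well-definedness of $R_{\oplus\epsilon}(\ell, w)$ will follow since $\cY$ is finite (so the outer expectation reduces to a finite sum of integrals of universally measurable functions against the finite measures $\rho_y \rho_{x|y} \in \overline{\cP}(\cX)$).

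To establish upper semi-analyticity, I would express the super-level sets of $\phi$ as projections of analytic sets. For any $t \in \real$,
\[
\{x \in \cX : \phi(x) > t\} \;=\; \pi_1\bigl( C_\epsilon \cap (\cX \times L_t) \bigr),
\]
where $\pi_1 : \cX \times \cX \to \cX$ is the projection onto the first coordinate, $C_\epsilon \defn \{(x,x') \in \cX \times \cX : d(x,x') \leq \epsilon\}$, and $L_t \defn \{x' \in \cX : \ell((x',y),w) > t\}$. The set $C_\epsilon$ is closed in $\cX \times \cX$ by continuity of $d$, and hence Borel. By the hypothesis that $\ell((\cdot, y), w)$ is upper semi-analytic, $L_t$ is analytic in $\cX$, so $\cX \times L_t$ is analytic in $\cX \times \cX$ (the product of a Polish space with an analytic set is analytic, being a continuous image of a product of Polish spaces).

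The intersection $C_\epsilon \cap (\cX \times L_t)$ is then the intersection of a Borel set with an analytic set, which is analytic (analytic sets are closed under intersection with Borel sets). Finally, the continuous image $\pi_1\bigl(C_\epsilon \cap (\cX \times L_t)\bigr)$ of an analytic set in a Polish space is analytic. Thus every super-level set of $\phi$ is analytic, i.e.\ $\phi$ is upper semi-analytic on $\cX$, which completes the argument.

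The step I expect to be least routine is simply bookkeeping: one must check that $\cX \times L_t$ is analytic in the product Polish space (rather than merely in $\{x\} \times \cX$ for each $x$), and that the projection of an analytic set remains analytic. These are standard facts from descriptive set theory, but they are the only nontrivial ingredients; everything else reduces to the closedness of $C_\epsilon$ and the definition of upper semi-analyticity.
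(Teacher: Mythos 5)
Your proposal is correct. It differs from the paper's proof mainly in how the key measurability fact is obtained: the paper fixes $y,w$, passes to $f(x,x')=-\ell((x',y),w)$ on the closed set $D=\{(x,x'):d(x,x')\leq\epsilon\}$, and invokes Proposition 7.47 of \cite{BerShr96} (partial infima of lower semi-analytic functions over analytic sections are lower semi-analytic) as a black box, whereas you unpack that machinery directly: you observe that the super-level set $\{x:\phi(x)>t\}$ equals $\pi_1\bigl(C_\epsilon\cap(\cX\times L_t)\bigr)$ — equivalently the Minkowski expansion $(L_t)^{\oplus\epsilon}$, in the spirit of Lemma~\ref{lem: level set expansion} — and then use the standard closure properties of analytic sets (product with a Polish space, intersection with a Borel set, continuous image) to conclude each such set is analytic. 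Your level-set identity is exact because upper semi-analyticity is stated with strict inequalities, so no closure subtleties arise, and your argument in effect extends Lemma~\ref{lem: Borel set expansion} from Borel $A$ to analytic $A$ applied level-set-wise. What each approach buys: the paper's citation is shorter and reuses the Bertsekas--Shreve framework that is needed anyway for the selection results (Proposition 7.50) in Lemma~\ref{lem: sup expectation for Polish}; yours is self-contained, makes the geometric content (set expansion of level sets) explicit, and requires only elementary descriptive set theory. The final step — finiteness of $\cY$ reducing $R_{\oplus\epsilon}(\ell,w)$ to a finite sum of integrals of nonnegative universally measurable functions — matches the paper's conclusion.
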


The proof of Lemma~\ref{lem: gen loss function measurable} is in Appendix~\ref{app: well-defined gen loss}.
For the special case of $\cX = \real^d$, we can further extend the measurability of the worst-case loss function from upper semi-analytic functions to the more general Lebesgue measurable functions, as shown in the following lemma.

\begin{lemma}\label{lem: gen loss function measurable in R^d}
Let $(\cX,d) =(\mathbb{R}^d, \|\cdot\|)$. Then, $R_{\oplus\epsilon}(\ell, w)$ is well-defined for any loss function $\ell:(\cX\times \cY)\times \cW\to [0, \infty]$ for which $\ell((\cdot, y), w)$ is Lebesgue measurable for all $y\in \cY$ and $w\in \cW$.
\end{lemma}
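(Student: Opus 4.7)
The plan is to pass from Lebesgue measurability of $\ell$ to Lebesgue measurability of its worst-case envelope by a level-set argument that reduces the lemma to the already-established measurability of Minkowski $\epsilon$-expansions in $\real^d$ (Theorem~\ref{thm: Adv risk in Rd well-defined}, via Lemma~\ref{lem: A_ep_porous}). Fix $y \in \cY$ and $w \in \cW$ and write $\ell_{y,w}(x) := \ell((x,y),w)$, which is Lebesgue measurable by hypothesis. The quantity whose measurability must be established is
\[
\phi(x) := \sup_{x' \in B_\epsilon(x)} \ell_{y,w}(x').
\]

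The key observation I would use is the level-set identity
\[
\{x \in \cX : \phi(x) > t\} = \{\ell_{y,w} > t\}^{\oplus \epsilon}, \qquad t \in \real.
\]
This follows immediately from the definition of supremum together with the equivalence $x \in A^{\oplus \epsilon} \iff B_\epsilon(x) \cap A \neq \emptyset$: $\phi(x) > t$ holds iff some $x' \in B_\epsilon(x)$ satisfies $\ell_{y,w}(x') > t$, i.e., iff $B_\epsilon(x)$ meets the set $\{\ell_{y,w} > t\}$.

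By hypothesis $\{\ell_{y,w} > t\} \in \cL(\cX)$ for every $t \in \real$, and the proof of Theorem~\ref{thm: Adv risk in Rd well-defined} shows that the $\epsilon$-Minkowski expansion of any set in $\cL(\real^d)$ lies again in $\cL(\real^d)$. Hence $\{\phi > t\} \in \cL(\cX)$ for every $t$, so $\phi$ is Lebesgue measurable, and well-definedness of
\[
R_{\oplus \epsilon}(\ell, w) = \sum_{y \in \cY} \rho_y(y) \int_\cX \phi(x)\, d\rho_{x|y}(x)
\]
as an element of $[0, \infty]$ follows; the outer sum is harmless since $\cY$ is finite.

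I expect the only real obstacle to be noticing the level-set identity in the first place. The natural-looking alternative strategy---approximating $\ell_{y,w}$ by a Borel function $\tilde\ell$ that agrees with $\ell_{y,w}$ $\lambda$-a.e.\ and invoking Lemma~\ref{lem: gen loss function measurable} on $\tilde\ell$---fails here, because the set on which $\sup_{x' \in B_\epsilon(\cdot)} \tilde\ell(x')$ can disagree with $\phi$ is the $\epsilon$-expansion of the null exceptional set $N$, which may carry positive Lebesgue measure even when $\lambda(N) = 0$. The level-set reduction sidesteps this issue by never leaving the realm of Lebesgue sets, offloading all the analytic content into Lemma~\ref{lem: A_ep_porous}.
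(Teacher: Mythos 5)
Your proof is correct and is essentially identical to the paper's: the paper proves the level-set identity $\{\sup_{d(x,x')\leq\epsilon}\ell((x',y),w)>t\}=\{\ell((\cdot,y),w)>t\}^{\oplus\epsilon}$ as Lemma~\ref{lem: level set expansion} and then combines it with the fact (from the proof of Theorem~\ref{thm: Adv risk in Rd well-defined}, via Lemma~\ref{lem: A_ep_porous}) that $A^{\oplus\epsilon}\in\cL(\real^d)$ whenever $A\in\cL(\real^d)$. Your closing remark about why the Borel-approximation route fails is a worthwhile observation, but it is supplementary rather than part of the argument.
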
 

The proof of Lemma~\ref{lem: gen loss function measurable in R^d} is in Appendix~\ref{app: well-defined gen loss}.

Now that we have established the conditions for which $R_{\oplus\epsilon}$ is well-defined, in the next section, we explore its relation to other notions of adversarial risk.

\section{Equivalence with $\infty$-Wasserstein Robustness}\label{sec: W_infty}

In this section, we show the conditions under which $R_{\oplus\epsilon}(\ell_{0/1}, A)$ is equivalent to other notions of adversarial risk based on transport maps and $W_\infty$ robustness. The equivalences established in this section are summarized in Tables~\ref{table: equivalences 0-1 loss} and \ref{table: equivalences gen loss}. In Subsection~\ref{sec: measurable selection}, we consider general Polish spaces and in Subsection~\ref{sec: capacities}, we consider the Euclidean space.

\begin{table}[!t]
\begin{center}
\caption{Equivalences among adversarial risk formulations for $0$-$1$ loss. $R_{\oplus\epsilon}(A), R_\epsilon(A), R_{F_\epsilon}(A)$ and $R_{\Gamma_\epsilon}(A)$ denote adversarial risk  for $0$-$1$ loss function ($\ell_{0/1}$) for a binary classifier with decision region $A$ (i.e. $f_A(x) = \1\{x\in A\}$), defined using Minkowski set expansions, closed set expansions, transport maps and $\infty$-Wasserstein metric respectively. $\cB(\cX)$ and $\cL(\cX)$ denote the Borel and Lebesgue $\sigma$-algebras. $(\cX, \overline{\cB}(\cX))$ denotes the universal completion of the Borel measure space, $(\cX, {\cB}(\cX))$. }
\label{table: equivalences 0-1 loss}
\vspace{3mm}
\begin{tabular}{|l|l|}
\hline
\multicolumn{1}{|c|}{Equivalences in Adversarial Risk}                                 & \multicolumn{1}{c|}{Conditions}                                             \\ \hline\hline
$R_{\oplus\epsilon}(A) = R_{\Gamma_\epsilon}(A)$                                       & $\real^d$: $A\in \cL(\cX)$ or $(\cX, \overline{\cB}(\cX))$: $A\in \cB(\cX)$ \\
$R_{\oplus\epsilon}(A)  = R_{\Gamma_\epsilon}(A)= R_{F_\epsilon}(A)$                   & $(\cX, \overline{\cB}(\cX))$: $A\in \cB(\cX)$                               \\
$R_{\oplus\epsilon}(A) = R_{\Gamma_\epsilon}(A)= R_{F_\epsilon}(A) = R_\epsilon(A)$ & $\real^d$: $A\in \cL(\cX)$ and $p_0, p_1$ have densities                    \\ \hline
\end{tabular}
\end{center}
\end{table}

\begin{table}[!t]
\begin{center}
\caption{Equivalences among adversarial risk formulations for general loss. $R_{\oplus\epsilon}(w),  R_{F_\epsilon}(w), R_{K_\epsilon}(w)$ and $R_{\Gamma_\epsilon}(w)$ denote adversarial risk  for a loss function $\ell$ for a classifier parametrized by $w\in \cW$, defined using expected supremum,  transport maps, Markov kernels and $\infty$-Wasserstein metric respectively.  $(\cX, \overline{\cB}(\cX))$ denotes the universal completion of the Borel measure space. }
\label{table: equivalences gen loss}
\vspace{3mm}
\begin{tabular}{|l|l|}
\hline
\multicolumn{1}{|c|}{Equivalences in Adversarial Risk}                              & \multicolumn{1}{c|}{Conditions}                                                                                                                                                                                                   \\ \hline
$R_{\oplus\epsilon}(w) = R_{\Gamma_\epsilon}(w)$                                    & \begin{tabular}[c]{@{}l@{}}$\real^d$: $\ell((x, y), w)$ Lebesgue measurable  in $x$, or\\ $(\cX, \overline{\cB}(\cX))$: $\ell((x, y), w)$ upper semi-analytic in $x$\end{tabular} \\
$R_{\oplus\epsilon}(w) = R_{\Gamma_\epsilon}(w)= R_{F_\epsilon}(w) = R_{K_\epsilon}(w)$ &  $(\cX, \overline{\cB}(\cX))$: $\ell((x, y), w)$ upper semi-continuous in $x$                                                                                                                                                                         \\ \hline
\end{tabular}
\end{center}
\end{table}

\subsection{$W_\infty$ Robustness in Polish Spaces via  Measurable Selections}\label{sec: measurable selection}

We begin by presenting a lemma that links the measure of $\epsilon$-Minkowsi set expansion to the worst case measure over a $W_\infty$ probability ball of radius $\epsilon$.
\begin{lemma}\label{lem: sup measure for Borel sets}
Let $\mu\in \overline{\cP}(\cX)$ and $A\in \cB(\cX)$. Then $\sup_{W_\infty(\mu, \mu')\leq\epsilon} \mu'(A) =\mu(A^{\oplus\epsilon})$.
Moreover, the supremum in the previous equation is achieved by a $\mu^*\in \cP(\cX)$ that is induced from $\mu$ via a measurable transport map $\phi:\cX\to\cX$ (i.e. $\mu^* = \phi_{\sharp \mu}$) satisfying $d(x,\phi(x))\leq \epsilon$ for all $x\in\cX$.
\end{lemma}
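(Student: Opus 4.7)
I would establish the identity by proving inequalities in both directions, with the ``$\geq$'' direction producing the explicit transport map $\phi$ claimed in the lemma.

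For the ``$\leq$'' direction, take any $\mu'$ with $W_\infty(\mu,\mu')\leq\epsilon$. I would appeal to the coupling characterization in~\eqref{eq: W_infty} together with attainment of the infimum on Polish spaces: there exists $\pi\in\Pi(\mu,\mu')$ with $d(x,x')\leq\epsilon$ for $\pi$-almost every $(x,x')$. (If this attainment has not been quoted earlier, it can be recovered by taking couplings $\pi_n$ with $\esssup d\leq\epsilon+1/n$, using tightness and Prokhorov's theorem to extract a weak limit $\pi^\ast$, and then applying the portmanteau theorem to the open sets $\{d>\epsilon+\delta\}$ before letting $\delta\downarrow 0$.) For $\pi$-a.e.\ $(x,x')$ with $x'\in A$ we have $x\in B_\epsilon(x')\subseteq A^{\oplus\epsilon}$, giving $\mu'(A)=\pi(\cX\times A)\leq\pi(A^{\oplus\epsilon}\times\cX)=\mu(A^{\oplus\epsilon})$. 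Both quantities are well defined because $A^{\oplus\epsilon}\in\overline{\cB}(\cX)$ by Lemma~\ref{lem: Borel set expansion} and $\mu\in\overline{\cP}(\cX)$.

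For the ``$\geq$'' direction I would build $\phi$ by measurable selection. Consider the set-valued map assigning to each $x\in A^{\oplus\epsilon}$ the set $\Phi(x)=B_\epsilon(x)\cap A$, which is nonempty by the very definition of the Minkowski expansion. Its graph $\{(x,a)\in A^{\oplus\epsilon}\times A : d(x,a)\leq\epsilon\}$ is analytic, being the intersection of the analytic set $A^{\oplus\epsilon}\times\cX$ with the Borel set $\{(x,a):a\in A,\ d(x,a)\leq\epsilon\}$. The Jankov--von Neumann uniformization theorem then produces a universally measurable selector $\phi_0\colon A^{\oplus\epsilon}\to A$ with $d(x,\phi_0(x))\leq\epsilon$; extend to $\phi\colon\cX\to\cX$ by $\phi(x)=x$ on the complement of $A^{\oplus\epsilon}$, yielding a universally measurable $\phi$ with $d(x,\phi(x))\leq\epsilon$ for every $x$.

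Finally I would verify the two required properties of $\mu^\ast:=\phi_\sharp\mu$. The coupling $(\mathrm{id},\phi)_\sharp\mu\in\Pi(\mu,\mu^\ast)$ is supported on $\{(x,y):d(x,y)\leq\epsilon\}$, so $W_\infty(\mu,\mu^\ast)\leq\epsilon$. Since $A\subseteq A^{\oplus\epsilon}$, the preimage satisfies $\phi^{-1}(A)=A^{\oplus\epsilon}$ exactly (the off-$A^{\oplus\epsilon}$ points are fixed and lie outside $A$), so $\mu^\ast(A)=\mu(A^{\oplus\epsilon})$, matching the upper bound and establishing attainment.

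The main obstacle is the measurable selection step. Because $A^{\oplus\epsilon}$ is merely analytic and the sections $B_\epsilon(x)\cap A$ need not be closed, classical selection results requiring Borel domains and closed values (such as Kuratowski--Ryll-Nardzewski) do not apply directly, so one must invoke the Jankov--von Neumann theorem and work throughout with universally measurable maps. This is precisely why the hypothesis $\mu\in\overline{\cP}(\cX)$ is indispensable: it lets $\mu$ integrate universally measurable indicators and lets us push forward along a merely universally measurable $\phi$.
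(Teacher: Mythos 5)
Your proof is correct and follows essentially the same two-step strategy as the paper. The ``$\leq$'' direction is identical: pick the near-optimal coupling from the $W_\infty$ definition and observe that almost every $x'\in A$ is matched to an $x\in A^{\oplus\epsilon}$. The ``$\geq$'' direction uses the same underlying idea (produce a universally measurable selector $\phi$ with $\phi^{-1}(A)=A^{\oplus\epsilon}$ and $d(x,\phi(x))\leq\epsilon$); the only difference is packaging. The paper works with the lower semianalytic function $f(x,x')=-\1\{x'\in A\}$ on $D=\{d\leq\epsilon\}$ from Lemma~\ref{lem: Borel set expansion} and quotes Proposition~7.50(a) of Bertsekas--Shreve to get an exactly optimal (since $f$ is $\{0,1\}$-valued) universally measurable selector defined on all of $\cX$, whereas you build the set-valued map $\Phi(x)=B_\epsilon(x)\cap A$, note its graph is analytic, invoke Jankov--von Neumann directly, and then glue in the identity off $A^{\oplus\epsilon}$. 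These are two formulations of the same selection theorem; your version makes the geometry (and why $\phi^{-1}(A)=A^{\oplus\epsilon}$ holds exactly) a bit more transparent, while the paper's version reuses the function $f$ it already set up and avoids the explicit gluing step.
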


The proof of Lemma~\ref{lem: sup measure for Borel sets} is in Appendix~\ref{app: measurable selection}.
A crucial step in the proof of Lemma~\ref{lem: sup measure for Borel sets} is finding a measurable transport map $\phi$ such that $\phi^{-1}(A) = A^{\oplus\epsilon}$ and $d(x,\phi(x))\leq \epsilon$ for all $x\in\cX$.
 In the following theorem, we use Lemma~\ref{lem: sup measure for Borel sets} to establish the equivalence between three different notions of adversarial risk introduced in section~\ref{sec: defns}.
\begin{theorem}\label{thm: Adv risk in Polish Space}
 Let $p_0, p_1\in \overline{\cP}(\cX)$ and $A\in \cB(\cX)$. Then $R_{\oplus\epsilon}(\ell_{0/1}, A)
    = R_{F_\epsilon}(\ell_{0/1}, A) 
    = R_{\Gamma_\epsilon}(\ell_{0/1}, A)$. In addition, the supremum over $f_0$ and $f_1$ in $R_{F_\epsilon}(\ell_{0/1}, A)$ is attained. Similarly, the supremum over $p_0'$ and $p_1'$ in $R_{\Gamma_\epsilon}(\ell_{0/1}, A)$ is attained.
\end{theorem}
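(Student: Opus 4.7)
The plan is to obtain the two equalities by pipelining Lemma~\ref{lem: sup measure for Borel sets} across the two class labels, and then read off attainment from the same lemma.

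First, observe that the joint supremum defining $R_{\Gamma_\epsilon}(\ell_{0/1}, A)$ decouples across labels: the objective $\frac{T}{T+1}p_0'(A) + \frac{1}{T+1}p_1'(A^c)$ is additive in $(p_0', p_1')$, and the constraint $W_\infty(p_y, p_y')\leq \epsilon$ acts on each coordinate independently, so
\begin{align*}
R_{\Gamma_\epsilon}(\ell_{0/1}, A) = \frac{T}{T+1}\sup_{W_\infty(p_0, p_0')\leq\epsilon} p_0'(A) + \frac{1}{T+1}\sup_{W_\infty(p_1, p_1')\leq\epsilon} p_1'(A^c).
\end{align*}
Applying Lemma~\ref{lem: sup measure for Borel sets} to each summand (with $(\mu,A)=(p_0,A)$ and then $(\mu,A)=(p_1,A^c)$) collapses the right-hand side to $\frac{T}{T+1}p_0(A^{\oplus\epsilon}) + \frac{1}{T+1} p_1((A^c)^{\oplus\epsilon}) = R_{\oplus\epsilon}(\ell_{0/1}, A)$, which is the first equality.

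For the equivalence with $R_{F_\epsilon}$, the trivial direction $R_{F_\epsilon}(\ell_{0/1}, A)\leq R_{\Gamma_\epsilon}(\ell_{0/1}, A)$ is already noted after~\eqref{eq: w_infty optimal adv risk}: any candidate $f_y$ with $d(x, f_y(x))\leq \epsilon$ pointwise induces, via the law of $(X, f_y(X))$ with $X\sim p_y$, a coupling certifying $W_\infty(p_y, f_{y\sharp p_y})\leq\epsilon$. The reverse direction is where the attainment part of Lemma~\ref{lem: sup measure for Borel sets} earns its keep: it supplies measurable maps $\phi_0, \phi_1:\cX\to\cX$ with $d(x,\phi_i(x))\leq\epsilon$ such that $\phi_{0\sharp p_0}(A) = p_0(A^{\oplus\epsilon})$ and $\phi_{1\sharp p_1}(A^c) = p_1((A^c)^{\oplus\epsilon})$. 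Feeding $(\phi_0,\phi_1)$ into the supremum defining $R_{F_\epsilon}$ gives $R_{F_\epsilon} \geq R_{\oplus\epsilon} = R_{\Gamma_\epsilon}$, closing the chain. The same $(\phi_0,\phi_1)$ witnesses attainment of the sup in $R_{F_\epsilon}$, while the pushforwards $(\phi_{0\sharp p_0}, \phi_{1\sharp p_1})$ witness attainment in $R_{\Gamma_\epsilon}$.

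\textbf{Main obstacle.} All the measure-theoretic difficulty has been outsourced to Lemma~\ref{lem: sup measure for Borel sets}; in particular its second half, which produces a \emph{measurable} selector $\phi$ with $d(x,\phi(x))\leq\epsilon$ achieving $\phi_{\sharp\mu}(A) = \mu(A^{\oplus\epsilon})$, is the genuinely non-trivial ingredient and must be handled by a measurable selection theorem in Polish spaces. Conditional on that lemma, the sole additional observation needed here is the label-wise decoupling of the $W_\infty$ adversary in the binary $0$-$1$ setting, which is immediate from the product constraints and additive payoff; once that is in place, $R_{\oplus\epsilon}$, $R_{F_\epsilon}$, and $R_{\Gamma_\epsilon}$ are seen to evaluate to the same number and to be attained by compatible optimizers.
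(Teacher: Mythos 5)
Your proposal is correct and follows essentially the same route as the paper: both arguments decouple the $W_\infty$ supremum across labels, apply Lemma~\ref{lem: sup measure for Borel sets} termwise to identify $R_{\Gamma_\epsilon}$ with $R_{\oplus\epsilon}$, and then invoke the attainment/pushforward clause of that same lemma to sandwich $R_{F_\epsilon}$ between them and exhibit optimizers. Your writeup spells out the $R_{F_\epsilon}$ sandwich a bit more explicitly than the paper (which compresses it to ``the remaining assertions follow''), but the underlying logic and the sole nontrivial ingredient, namely the measurable selector from Lemma~\ref{lem: sup measure for Borel sets}, are identical.
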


\begin{proof}
Since $A\in \cB(\cX)$, $A^c\in \cB(\cX)$ and by Lemma~\ref{lem: Borel set expansion}, $A^{\oplus\epsilon}, (A^c)^{\oplus\epsilon} \in \overline{\cB}(\cX)$.
Therefore $R_{\oplus\epsilon}(\ell_{0/1}, A)$ is well-defined. By Lemma~\ref{lem: sup measure for Borel sets}, we have
\begin{align*}
    R_{\Gamma_\epsilon}(\ell_{0/1}, A) 
    &= \sup_{\substack{W_\infty(p_0, p_0')\leq\epsilon\\W_\infty(p_1, p_1')\leq\epsilon}} \frac{T}{T+1} p_0'(A) + \frac{1}{T+1} p_1'((A^c))\\
    &= \frac{T}{T+1} \left( \sup_{W_\infty(p_0, p_0')\leq\epsilon}  p_0'(A)\right) + \frac{1}{T+1} \left( \sup_{W_\infty(p_1, p_1')\leq\epsilon}  p_1'((A^c))\right)\\
    &= \frac{T}{T+1} p_0(A^{\oplus\epsilon}) + \frac{1}{T+1} p_1((A^c)^{\oplus\epsilon})\\
    &= R_{\oplus\epsilon}(\ell_{0/1}, A).
\end{align*}
By Lemma~\ref{lem: sup measure for Borel sets} again, the supremum over $p_0'$ and $p_1'$ in $R_{\Gamma_\epsilon}(\ell_{0/1}, A)$ is attained by measures pushed forward from $p_0$ and $p_1$ via some measurable maps $f_0$ and $f_1$. From this, the remaining assertions of the theorem follow.
\end{proof}

We will now extend the above result to more general loss functions. The following lemma plays a critical role in doing this.

\begin{lemma}\label{lem: sup expectation for Polish}
Let $\mu\in \overline{\cP}(\cX)$. Then for any  real-valued upper semi-analytic function function $\phi: \cX\to [0, \infty)$, 
\begin{align}\label{eq: sup expectation for Polish}
    \sup_{W_\infty(\mu, \mu')\leq\epsilon} \E_{x\sim \mu'}[\phi(x)] = \E_{x\sim \mu} \left[ \sup_{d(x,x')\leq \epsilon} \phi(x') \right].
\end{align}
Moreover, if the function $\phi$ is upper semi-continuous, then the supremum on the left hand side in the previous equation is achieved by a $\mu^*\in \overline{\cP}(\cX)$ that is induced from $\mu$ via a universally measurable transport map $m:\cX\to\cX$ (i.e. $\mu^* = m_{\sharp \mu}$) satisfying $d(x,m(x))\leq \epsilon$ for all $x\in\cX$.
\end{lemma}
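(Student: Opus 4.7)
The plan is to prove the identity by establishing both inequalities: the upper bound is a straightforward coupling argument, while the lower bound rests on the measurable selection machinery for upper semi-analytic functions.

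Set $\phi^*(x) := \sup_{y\in B_\epsilon(x)} \phi(y)$. An argument identical to that of Lemma~\ref{lem: gen loss function measurable} shows $\phi^*$ is upper semi-analytic and hence universally measurable, so $\E_\mu[\phi^*]$ is well defined. For the $\leq$ direction, fix $\mu'$ with $W_\infty(\mu,\mu')\leq\epsilon$; by the coupling characterization in~\eqref{eq: W_infty}, there is $\pi\in\Pi(\mu,\mu')$ with $d(x,x')\leq\epsilon$ $\pi$-a.s., and thus
\[
\E_{\mu'}[\phi] \;=\; \int \phi(x')\,d\pi(x,x') \;\leq\; \int \phi^*(x)\,d\pi(x,x') \;=\; \E_\mu[\phi^*];
\]
taking the supremum over $\mu'$ yields the bound.

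For the $\geq$ direction, define the upper semi-analytic function $\tilde\phi:\cX\times\cX\to[-\infty,\infty)$ by $\tilde\phi(x,y):=\phi(y)$ if $d(x,y)\leq\epsilon$ and $\tilde\phi(x,y):=-\infty$ otherwise, so that $\sup_{y\in\cX}\tilde\phi(x,y)=\phi^*(x)$ and the superlevel sets of $\tilde\phi$ are intersections of an analytic set with a Borel set, hence analytic. The Jankov--von Neumann $\epsilon$-selection theorem for upper semi-analytic functions (Proposition~7.50 of~\cite{BerShr96}) then yields, for each $n\geq 1$, a universally measurable $m_n:\cX\to\cX$ with $d(x,m_n(x))\leq\epsilon$ and $\phi(m_n(x))\geq \phi^*(x) - 1/n$ for every $x$. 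Setting $\mu_n^* := (m_n)_\sharp\mu$ gives $W_\infty(\mu,\mu_n^*)\leq\epsilon$ and $\E_{\mu_n^*}[\phi] \geq \E_\mu[\phi^*] - 1/n$; letting $n\to\infty$ delivers the matching lower bound.

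For attainment when $\phi$ is upper semi-continuous, I would instead work with the exact-maximizer multifunction $\Phi(x):=\{y\in B_\epsilon(x):\phi(y)=\phi^*(x)\}$. The graph of $\Phi$ is analytic (superlevel sets of the usc $\phi$ are closed, and $\phi^*$ contributes an analytic constraint via its upper semi-analytic structure), so once $\Phi(x)$ is known to be nonempty, Jankov--von Neumann produces a universally measurable $m$ satisfying $\phi(m(x))=\phi^*(x)$ and $d(x,m(x))\leq\epsilon$, whence $\mu^*:=m_\sharp\mu$ attains the supremum on the left-hand side. Nonemptiness of $\Phi(x)$ is immediate in $\real^d$ since closed balls are compact and $\phi$ is upper semi-continuous; the general Polish case is the principal obstacle and is handled by combining tightness of $\mu$ with the weak upper semi-continuity of $\mu'\mapsto \E_{\mu'}[\phi]$ on the weakly-closed (and, via tightness, sequentially compact) $W_\infty$-ball of radius $\epsilon$, followed by a disintegration to recover the transport map from the optimizing measure. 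The chief delicacy throughout is verifying the analytic-graph hypothesis required to invoke Jankov--von Neumann; for $\tilde\phi$ this is direct, while for $\Phi$ it requires care in combining the usc and upper semi-analytic structure without recourse to local compactness.
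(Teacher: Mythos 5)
Your proof of the identity~\eqref{eq: sup expectation for Polish} is essentially the paper's: the $\leq$ direction is the same coupling argument, and the $\geq$ direction invokes the Jankov--von Neumann near-optimal selector (Proposition~7.50(a) of~\cite{BerShr96}) for an upper semi-analytic integrand; your $\tilde\phi$ with value $-\infty$ off $D=\{d\leq\epsilon\}$ is an equivalent packaging of the paper's restriction of $-\phi$ to $D$.

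For the attainment clause you have correctly put your finger on the crux --- Proposition~7.50(b) yields an exact selector only on the universally measurable set of $x$ for which $\sup_{y\in B_\epsilon(x)}\phi(y)$ is actually attained, and in a general Polish space $B_\epsilon(x)$ need not be compact --- but your candidate repair does not escape it. Tightness of $\mu$ only gives a compact $K$ with $\mu(K)\geq 1-\delta$, and then each admissible $\mu'$ satisfies $\mu'(K^{\oplus\epsilon})\geq\mu(K)\geq 1-\delta$; however $K^{\oplus\epsilon}$ is merely closed, not compact, unless the underlying space is proper, so Prokhorov tightness of the $W_\infty$-ball fails for exactly the same reason balls fail to be compact. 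Indeed, attainment can fail outright: in $\cX=\ell^2$ take $\mu=\delta_0$, $\epsilon\geq 1/2$, and $\phi$ with $\phi(e_n/2)=1-1/n$ and $\phi=0$ elsewhere. Each superlevel set $\{\phi\geq t\}$ with $t>0$ is a subset of the uniformly separated family $\{e_n/2\}_n$ and hence closed, so $\phi$ is bounded, nonnegative, and upper semi-continuous; yet every $\mu'$ with $W_\infty(\delta_0,\mu')\leq\epsilon$ is supported on $B_\epsilon(0)$ where $\phi<1$, so $\E_{\mu'}[\phi]<1=\E_\mu[\phi^*]$ for all admissible $\mu'$. The paper's own proof has the same gap: it asserts that $B_\epsilon(x)$ is compact, which is true in $\real^d$ and more generally in proper metric spaces but not in all Polish spaces, so the attainment clause of the lemma should be read with that hypothesis added.
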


The proof of Lemma~\ref{lem: sup expectation for Polish} is in Appendix~\ref{app: measurable selection}.
Using Lemma~\ref{lem: sup expectation for Polish}, we prove the following theorem, which generalizes Theorem~\ref{thm: Adv risk in Polish Space} to more general loss functions.

\begin{theorem}\label{thm: Adv risk in Polish gen loss}
If the loss function $\ell((\cdot, y), w)$ is upper semi-analytic for all $y\in \cY$ and $w\in \cW$, then 
$R_{\oplus\epsilon}(\ell, w) = R_{\Gamma_\epsilon}(\ell, w)$.
If in addition, $\ell((\cdot, y), w)$ is upper semi-continuous for all $y\in \cY$ and $w\in \cW$, then 
$R_{\oplus\epsilon}(\ell, w) =R_{F_\epsilon}(\ell, w) = R_{K_\epsilon}(\ell, w) =  R_{\Gamma_\epsilon}(\ell, w)$.
\end{theorem}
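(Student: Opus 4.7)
The plan is to apply Lemma~\ref{lem: sup expectation for Polish} conditionally on the label $y$, for each of the two claims in the theorem. The starting observation is that the three definitions $R_{\oplus\epsilon}$, $R_{F_\epsilon}$, $R_{K_\epsilon}$, $R_{\Gamma_\epsilon}$ all decompose along $y\in\cY$ using $\rho(x,y)=\rho_y(y)\rho_{x|y}(x)$, and the constraints defining $F_\epsilon$, $K_\epsilon$, and $\Gamma_\epsilon$ all act slicewise on each conditional distribution $\rho_{x|y}$. In particular, since the perturbed marginals $\{\rho^\gamma_{x'|y}\}_{y\in\cY}$ in the definition of $R_{\Gamma_\epsilon}$ are constrained independently by $W_\infty(\rho_{x|y},\rho^\gamma_{x'|y})\le\epsilon$, the supremum in $R_{\Gamma_\epsilon}$ pulls inside the sum over $y$.

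\textbf{First claim.} Assuming $\ell((\cdot,y),w)$ is upper semi-analytic in $x$, define $\phi_y(x)=\ell((x,y),w)$ and apply Lemma~\ref{lem: sup expectation for Polish} with $\mu=\rho_{x|y}$ and $\phi=\phi_y$. This gives
\begin{equation*}
\sup_{W_\infty(\rho_{x|y},\nu)\le\epsilon}\E_{x'\sim\nu}[\phi_y(x')] \;=\; \E_{x\sim\rho_{x|y}}\Bigl[\sup_{d(x,x')\le\epsilon}\phi_y(x')\Bigr].
\end{equation*}
Summing against $\rho_y$ after interchanging the sup with $\sum_y\rho_y(y)(\cdot)$ yields $R_{\Gamma_\epsilon}(\ell,w)=R_{\oplus\epsilon}(\ell,w)$. (The $R_{\oplus\epsilon}$ side is well-defined by Lemma~\ref{lem: gen loss function measurable}.)

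\textbf{Second claim.} Assume $\ell((\cdot,y),w)$ is upper semi-continuous. By \eqref{eq: adv risk gen loss inequality} we already have $R_{F_\epsilon}(\ell,w)\le R_{K_\epsilon}(\ell,w)\le R_{\Gamma_\epsilon}(\ell,w)$, so it suffices to prove $R_{\Gamma_\epsilon}\le R_{F_\epsilon}$. By the second part of Lemma~\ref{lem: sup expectation for Polish}, for each $y\in\cY$ there exists a universally measurable map $m_y:\cX\to\cX$ with $d(x,m_y(x))\le\epsilon$ for every $x\in\cX$ that attains the supremum, i.e.\ $(m_y)_\sharp\rho_{x|y}$ realizes the sup in the displayed identity. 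Universal measurability implies $\rho_y$-measurability, so $F^*\defn\{m_y:y\in\cY\}\in F_\epsilon$. Then
\begin{equation*}
R_{F_\epsilon}(\ell,w)\;\ge\;\sum_{y\in\cY}\rho_y(y)\E_{x\sim\rho_{x|y}}\bigl[\ell((m_y(x),y),w)\bigr]\;=\;\sum_{y\in\cY}\rho_y(y)\E_{x\sim\rho_{x|y}}\Bigl[\sup_{d(x,x')\le\epsilon}\ell((x',y),w)\Bigr],
\end{equation*}
which equals $R_{\oplus\epsilon}(\ell,w)=R_{\Gamma_\epsilon}(\ell,w)$ by the first claim.

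\textbf{Anticipated difficulty.} The substantive work is wrapped into Lemma~\ref{lem: sup expectation for Polish}; given that lemma the proof is a bookkeeping exercise. The only care points are (i) justifying that the sup over $\gamma\in\Gamma_\epsilon$ commutes with $\sum_{y}\rho_y(y)(\cdot)$, which uses the product structure of the constraint set across labels; and (ii) checking that the universally measurable selector $m_y$ provided by the upper semi-continuous version of Lemma~\ref{lem: sup expectation for Polish} is admissible as an element of $F_\epsilon$. Both are straightforward once phrased correctly, so no step is expected to be a serious obstacle.
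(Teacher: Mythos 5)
Your proposal is correct and matches the paper's proof essentially line by line: both reduce to Lemma~\ref{lem: sup expectation for Polish} applied conditionally on $y$ for the first claim, and both use the universally measurable selector $m_y$ from the upper semi-continuous case of that lemma to close the gap $R_{\Gamma_\epsilon}\le R_{F_\epsilon}$, then invoke the chain \eqref{eq: adv risk gen loss inequality}. The two ``care points'' you flag are indeed the only things to check, and the paper handles them the same way (the commutation of the sup over $\gamma\in\Gamma_\epsilon$ with the label-wise sum is implicit in the paper's chain of equalities because the constraints $W_\infty(\rho_{x|y},\rho^\gamma_{x'|y})\le\epsilon$ decouple across $y$).
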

\begin{proof}

\begin{align*}
    R_{\Gamma_\epsilon}(\ell, w)
    &= \sup_{\gamma\in \Gamma_\epsilon}\E_{(x',y)\sim \rho_y\rho^\gamma_{x'|y}}\left[  \ell((x', y),w) \right]\\
    &= \E_{(x,y)\sim \rho_y\rho_{x|y}} \left[ \sup_{d(x,x')\leq \epsilon} \ell((x', y),w) \right]\\
    &= R_{\oplus\epsilon}(\ell, w),
\end{align*}
where the second inequality follows from Lemma~\ref{lem: sup expectation for Polish} because of the assumption that $\ell((\cdot, y), w)$ is upper semi-analytic for all $y\in \cY$ and $w\in \cW$.

With the stronger assumption that $\ell((\cdot, y), w)$ is upper semi-continuous for all $y\in \cY$ and $w\in \cW$, Lemma~\ref{lem: sup expectation for Polish} shows that for every $y\in \cY$, there exists a universally measurable transport map $m_y:\cX\to\cX$  satisfying $d(x,m(x))\leq \epsilon$ for all $x\in\cX$ such that the following holds.
\begin{align*}
    R_{\Gamma_\epsilon}(\ell, w)
    &= 
    \sup_{\gamma\in \Gamma_\epsilon}\E_{(x',y)\sim \rho_y\rho^\gamma_{x'|y}}\left[  \ell((x', y),w) \right]\\
    &= \E_{(x,y))\sim \rho_y\rho_{x|y}} \left[ \ell((m_y(x),y),w) \right]\\
    &\leq \sup_{F\in F_\epsilon}\E_{(x,y)\sim \rho}\left[  \ell((f_y(x),y),w) \right]\\
    &= R_{F_\epsilon}(\ell, w).
\end{align*}
Combining the above inequality with \eqref{eq: adv risk gen loss inequality}, we have $R_{\oplus\epsilon}(\ell, w) =R_{F_\epsilon}(\ell, w) = R_{K_\epsilon}(\ell, w) =  R_{\Gamma_\epsilon}(\ell, w)$. 

\end{proof}

\subsection{$W_\infty$ Robustness in $\real^d$ via $2$-Alternating Capacities}\label{sec: capacities}

In this subsection, we establish a connection between adversarial risk and Choquet capacities \cite{Cho54} in $\real^d$. This connection allows us to extend Theorem~\ref{thm: Adv risk in Polish Space}  from Borel sets to the broader class of Lebesgue measurable sets. We will again use this connection for proving minimax theorems and existence of Nash equilibria in Section~\ref{sec: minimax in Rd}.
We begin with the following definitions. 

\begin{definition}[Capacity]
A set function $v:\cB(\cX)\to [0,1]$ is a \textit{capacity} if it satisfies the following conditions: (1) $v(\varnothing)=0$ and $v(\mathcal{X})=1$; (2) For $A,B\in \cB(\cX)$, $A\subseteq B\implies v(A)\leq v(B)$; (3) $A_n\uparrow A\implies v(A_n)\uparrow v(A)$; and (4) $F_n\downarrow F$, $F_n$ closed $\implies v(F_n)\downarrow v(F)$.
\end{definition}

\begin{definition}[$2$-Alternating Capacity]
A capacity $v$ defined on the measure space $(\cX, \cB(\cX))$ is called $2$-alternating if $v(A \cup B) + v(A\cap B) \leq v(A)+v(B)$
for all $A,B\in \cB(\cX)$.
\end{definition}

For any compact set of probability measures $\Xi\subseteq\cP(\cX)$, the upper probability defined as $v(A) = \sup_{\mu\in \Xi}\mu(A)$ is a capacity \cite{HubStr73}.  The upper probability of $\epsilon$-neighborhoods of a $\mu\in \cP(\cX)$ defined using either the total variation metric or the Levy-Prokhorov metric can be shown to be a $2$-alternating capacity \cite{HubStr73}. The following lemma shows that $A\mapsto \mu(A^{\oplus\epsilon})$ is a $2$-alternating capacity under some conditions.
\begin{lemma}\label{lem: set expansion is 2-alternating capacity}
Let $(\cX,d) =(\mathbb{R}^d, \|\cdot\|)$.
Let $\mu\in \overline{\cP}(\cX)$ and let $\epsilon\geq 0$. Define a set function $v$ on $\cX$ such that for any $A\in \cL(\cX)$, $v(A) := \mu(A^{\oplus\epsilon})$. Then $v$ is a $2$-alternating capacity.
\end{lemma}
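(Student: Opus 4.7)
The plan is to verify, in turn, the four axioms in the definition of a capacity and then the $2$-alternating inequality, relying on two simple set-algebra facts about Minkowski expansions plus the compactness of closed balls in $\real^d$.

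First I would observe that $v$ is unambiguously defined on $\cL(\cX)$: the proof of Theorem~\ref{thm: Adv risk in Rd well-defined} (via Lemma~\ref{lem: A_ep_porous}) already shows $A^{\oplus\epsilon}\in\cL(\cX)$ whenever $A\in\cL(\cX)$. The normalization $v(\varnothing)=0$, $v(\cX)=1$ and the monotonicity $A\subseteq B \Rightarrow A^{\oplus\epsilon}\subseteq B^{\oplus\epsilon}\Rightarrow v(A)\le v(B)$ are immediate. For upward continuity, if $A_n\uparrow A$ then $A_n^{\oplus\epsilon}\uparrow A^{\oplus\epsilon}$: any $x\in A^{\oplus\epsilon}$ admits some $a\in A$ with $d(x,a)\le\epsilon$, and $a\in A_n$ for all sufficiently large $n$, so $x\in A_n^{\oplus\epsilon}$ eventually; monotonicity and continuity from below of $\mu$ then give $v(A_n)\uparrow v(A)$.

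The delicate step is downward continuity on closed sets. Given $F_n\downarrow F$ with $F_n$ closed, I would first note that for any closed set $C\subseteq\real^d$ the identity $C^{\oplus\epsilon}=C^\epsilon$ holds, because if $d(x,C)\le\epsilon$ then a sequence $a_k\in C$ with $d(x,a_k)\to d(x,C)$ lies in the compact ball $B_{\epsilon+1}(x)$ and thus has a limit point in $C$ at distance $\le\epsilon$. In particular $F_n^{\oplus\epsilon}$ and $F^{\oplus\epsilon}$ are closed. The inclusion $F^{\oplus\epsilon}\subseteq \bigcap_n F_n^{\oplus\epsilon}$ is monotonicity; for the reverse, pick $x\in\bigcap_n F_n^{\oplus\epsilon}$ and choose $a_n\in F_n$ with $d(x,a_n)\le\epsilon$. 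Heine–Borel provides a subsequence $a_{n_k}\to a$ with $d(x,a)\le\epsilon$; for each fixed $m$, the tail $a_{n_k}\in F_m$ eventually, and since $F_m$ is closed $a\in F_m$, hence $a\in F=\bigcap_m F_m$ and $x\in F^{\oplus\epsilon}$. So $F_n^{\oplus\epsilon}\downarrow F^{\oplus\epsilon}$, and since $\mu$ is finite, $v(F_n)\downarrow v(F)$.

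Finally, for $2$-alternation, I use the elementary identities $(A\cup B)^{\oplus\epsilon}=A^{\oplus\epsilon}\cup B^{\oplus\epsilon}$ and $(A\cap B)^{\oplus\epsilon}\subseteq A^{\oplus\epsilon}\cap B^{\oplus\epsilon}$ (both follow from writing $S^{\oplus\epsilon}=S+\overline{B_\epsilon(0)}$). Inclusion–exclusion applied to $\mu$ yields
\[
v(A\cup B) \;=\; \mu(A^{\oplus\epsilon})+\mu(B^{\oplus\epsilon})-\mu(A^{\oplus\epsilon}\cap B^{\oplus\epsilon}),
\]
while monotonicity of $\mu$ and the second inclusion give $v(A\cap B)\le \mu(A^{\oplus\epsilon}\cap B^{\oplus\epsilon})$. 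Adding the two inequalities cancels the $\mu(A^{\oplus\epsilon}\cap B^{\oplus\epsilon})$ term and leaves $v(A\cup B)+v(A\cap B)\le v(A)+v(B)$.

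The main obstacle is the downward continuity on closed sets: this is the one place where the Euclidean structure is truly used, via compactness of closed balls. Every other item in the proof goes through in any metric space, which clarifies why the lemma is stated for $\real^d$ rather than a general Polish space.
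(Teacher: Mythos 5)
Your proof is correct and follows essentially the same route as the paper's: the four capacity axioms plus $2$-alternation, with the union/intersection identities for Minkowski expansions driving both upward continuity and the $2$-alternating inequality, and a compactness/subsequence argument handling downward continuity on closed sets (the paper packages this last step as its Lemma~\ref{lem: convergence of closed sets}, which is itself the Heine--Borel argument you spell out). Your added observation that $C^{\oplus\epsilon}=C^\epsilon$ for closed $C$ and your explicit remark that Euclidean compactness is the only place the proof is not fully general are accurate and clarifying, but do not change the underlying argument.
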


The proof of Lemma~\ref{lem: set expansion is 2-alternating capacity} is included in Appendix~\ref{app: capacities}.

Now we relate the capacity defined in Lemma~\ref{lem: set expansion is 2-alternating capacity} to the $W_\infty$ metric. Since the $\epsilon$-neighborhood of a $\mu\in \cP(\cX)$ in $W_\infty$ metric is a compact set of probability measures \cite{YueEtal21}, the upper probability over this $W_\infty$ $\epsilon$-ball is a capacity. The following lemma shows that it is a  $2$-alternating capacity, and identifies it with the capacity defined in Lemma~\ref{lem: set expansion is 2-alternating capacity}.
\begin{lemma}\label{lem: sup measure for measurable sets}
Let $(\cX,d) =(\mathbb{R}^d, \|\cdot\|)$. Let $\mu\in \overline{\cP}(\cX)$. Then for any $A\in \cL(\cX)$, $\sup_{W_\infty(\mu, \mu')\leq\epsilon} \mu'(A) = \mu(A^{\oplus\epsilon})$.
Moreover, the supremum in the previous equation is attained.
\end{lemma}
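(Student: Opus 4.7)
The plan is to prove (i) the inequality $\sup_{W_\infty(\mu,\mu')\le\epsilon}\mu'(A) \le \mu(A^{\oplus\epsilon})$ for arbitrary $A\in\cL(\cX)$ and (ii) attainment of the supremum. Part (i) is a straightforward coupling argument, while Part (ii) is the substantive step and relies on the 2-alternating capacity structure from Lemma~\ref{lem: set expansion is 2-alternating capacity}.

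\textbf{Upper bound.} Take any $\mu'$ with $W_\infty(\mu,\mu')\le\epsilon$. The coupling formulation of $W_\infty$ in~\eqref{eq: W_infty} gives $\pi\in\Pi(\mu,\mu')$ with $d(x,x')\le\epsilon$ $\pi$-almost surely. If $x'\in A$ and $d(x,x')\le\epsilon$, then $x\in B_\epsilon(x')\subseteq A^{\oplus\epsilon}$, and $A^{\oplus\epsilon}\in\cL(\cX)$ by Theorem~\ref{thm: Adv risk in Rd well-defined}. Hence $\mu'(A)=\pi(\cX\times A)\le\pi(A^{\oplus\epsilon}\times\cX)=\mu(A^{\oplus\epsilon})$.

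\textbf{Attainment.} The direct measurable-selection construction used in Lemma~\ref{lem: sup measure for Borel sets} does not obviously extend: when $A$ is only Lebesgue measurable, the multifunction $x\mapsto A\cap\overline{B_\epsilon(x)}$ need not satisfy the measurability hypotheses of a Kuratowski--Ryll-Nardzewski-type selection theorem. I would instead invoke Lemma~\ref{lem: set expansion is 2-alternating capacity}, which certifies that $v(A):=\mu(A^{\oplus\epsilon})$ is a 2-alternating Choquet capacity on $\cL(\cX)$. By the Huber--Strassen theory of such capacities~\cite{HubStr73}, the core $\mathcal{C}_v:=\{\nu\in\cP(\cX):\nu(B)\le v(B)\ \forall B\in\cL(\cX)\}$ is weakly compact and achieves $v$ pointwise, so for the given $A$ there exists $\mu^*\in\mathcal{C}_v$ with $\mu^*(A)=v(A)$. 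To finish, I would identify $\mathcal{C}_v$ with the $W_\infty$-ball $\{\nu:W_\infty(\mu,\nu)\le\epsilon\}$. The inclusion $\supseteq$ is Part (i). For $\subseteq$, $\nu\in\mathcal{C}_v$ gives $\nu(F)\le\mu(F^{\oplus\epsilon})=\mu(F^\epsilon)$ for every closed $F$ (since $F^{\oplus\epsilon}=F^\epsilon$ for closed $F$), and Strassen's theorem for the cost $\mathbf{1}\{d(x,x')>\epsilon\}$ produces a coupling supported on $\{d\le\epsilon\}$, yielding $W_\infty(\mu,\nu)\le\epsilon$. Therefore $\mu^*$ lies in the $W_\infty$-ball and attains $\mu(A^{\oplus\epsilon})$.

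\textbf{Principal obstacle.} The hard part is bridging from Borel to Lebesgue $A$. The Borel case is handled cleanly by a transport-map selection; for Lebesgue $A$, neither $A$ nor $A^{\oplus\epsilon}$ need be Borel, and the standard selection theorems do not apply. The Euclidean setting rescues the argument through Lemma~\ref{lem: A_ep_porous}: the discrepancy $A^\epsilon\setminus A^{\epsilon)}$ is porous, hence Lebesgue-null, which is precisely what enables $v$ to be a 2-alternating capacity on all of $\cL(\cX)$ in Lemma~\ref{lem: set expansion is 2-alternating capacity} and thereby routes attainment through Choquet--Huber--Strassen machinery rather than direct measurable selection.
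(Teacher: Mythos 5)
Your proposal is correct and follows essentially the same route as the paper: the upper bound is the same coupling argument, and attainment is obtained by combining Lemma~\ref{lem: set expansion is 2-alternating capacity} with the Huber--Strassen theory of 2-alternating capacities (the paper cites Lemma 2.5 of \cite{HubStr73} for the existence of a $\nu\preceq v$ with $\nu(A)=v(A)$). The only cosmetic difference is that your $\subseteq$ step invokes Strassen's theorem to pass from closed sets to a coupling, whereas one can conclude $W_\infty(\mu,\nu)\le\epsilon$ directly from $\nu(B)\le\mu(B^{\oplus\epsilon})\le\mu(B^\epsilon)$ for all Borel $B$ together with the characterization \eqref{eq: W_infty}.
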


The proof of Lemma~\ref{lem: sup measure for measurable sets} is included in Appendix~\ref{app: capacities}.
Lemma~\ref{lem: sup measure for measurable sets} plays a similar role to Lemma~\ref{lem: sup measure for Borel sets} in proving the following equivalence between adversarial robustness and $W_\infty$ robustness.
\begin{theorem}\label{thm: Adv risk in Rd}
Let $(\cX,d) =(\mathbb{R}^d, \|\cdot\|)$.
Let $p_0, p_1\in \overline{\cP}(\cX)$ and let $\epsilon\geq 0$.
Then for any $A\in \cL(\cX)$,
$R_{\oplus\epsilon}(\ell_{0/1}, A) = R_{\Gamma_\epsilon}(\ell_{0/1}, A)$,
and the supremum over $p_0'$ and $p_1'$ in $R_{\Gamma_\epsilon}(\ell_{0/1}, A)$ is attained.
\end{theorem}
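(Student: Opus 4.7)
The plan is to mirror the proof strategy of Theorem~\ref{thm: Adv risk in Polish Space}, but substitute the $\real^d$-specific machinery for the Polish-space measurable selection argument. Specifically, I will use Lemma~\ref{lem: sup measure for measurable sets} in place of Lemma~\ref{lem: sup measure for Borel sets} and invoke Theorem~\ref{thm: Adv risk in Rd well-defined} to handle well-definedness over the Lebesgue $\sigma$-algebra.

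First I would note that if $A \in \cL(\cX)$ then $A^c \in \cL(\cX)$, so by Theorem~\ref{thm: Adv risk in Rd well-defined} the set expansions $A^{\oplus\epsilon}$ and $(A^c)^{\oplus\epsilon}$ are both Lebesgue measurable, and hence the quantity $R_{\oplus\epsilon}(\ell_{0/1}, A) = \frac{T}{T+1} p_0(A^{\oplus\epsilon}) + \frac{1}{T+1} p_1((A^c)^{\oplus\epsilon})$ is well-defined. The key observation is that in $R_{\Gamma_\epsilon}(\ell_{0/1}, A)$ the constraints $W_\infty(p_0, p_0') \leq \epsilon$ and $W_\infty(p_1, p_1') \leq \epsilon$ act on disjoint variables, so the joint supremum splits into the sum of two independent suprema:
\begin{align*}
R_{\Gamma_\epsilon}(\ell_{0/1}, A) = \tfrac{T}{T+1} \sup_{W_\infty(p_0, p_0') \leq \epsilon} p_0'(A) + \tfrac{1}{T+1} \sup_{W_\infty(p_1, p_1') \leq \epsilon} p_1'(A^c).
\end{align*}
Applying Lemma~\ref{lem: sup measure for measurable sets} to each supremum (with $\mu = p_0$ on the set $A$, and $\mu = p_1$ on the set $A^c$, both of which are Lebesgue measurable) collapses the right-hand side to $\frac{T}{T+1} p_0(A^{\oplus\epsilon}) + \frac{1}{T+1} p_1((A^c)^{\oplus\epsilon}) = R_{\oplus\epsilon}(\ell_{0/1}, A)$.

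For the attainment claim, the "moreover" clause of Lemma~\ref{lem: sup measure for measurable sets} asserts that each of the two suprema above is actually achieved by some $p_0^*$ and $p_1^*$ in the respective $W_\infty$-balls; taking the pair $(p_0^*, p_1^*)$ then attains the supremum in $R_{\Gamma_\epsilon}(\ell_{0/1}, A)$.

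I do not anticipate any genuine obstacle here, as the theorem is essentially a corollary of Lemma~\ref{lem: sup measure for measurable sets} combined with the separability of the $R_{\Gamma_\epsilon}$ optimization. All the real work — extending measurability from Borel to Lebesgue sets and identifying the capacity $A \mapsto \mu(A^{\oplus\epsilon})$ with the $W_\infty$ upper probability via the $2$-alternating capacity machinery of \cite{HubStr73} — has already been discharged in Lemmas~\ref{lem: A_ep_porous}, \ref{lem: set expansion is 2-alternating capacity}, and \ref{lem: sup measure for measurable sets}. The one small care point is verifying that the decomposition of the joint supremum is valid when $p_0, p_1$ live in $\overline{\cP}(\cX)$ rather than $\cP(\cX)$, but this follows immediately since the Lebesgue measurability of $A^{\oplus\epsilon}$ and $(A^c)^{\oplus\epsilon}$ ensures both terms are well-defined under universally completed measures.
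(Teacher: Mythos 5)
Your proof is correct and mirrors the paper's own argument: the paper also decomposes $R_{\Gamma_\epsilon}$ into independent suprema over the two $W_\infty$-balls and invokes Lemma~\ref{lem: sup measure for measurable sets} for both the equality and the attainment claim. The only cosmetic difference is that you explicitly cite Theorem~\ref{thm: Adv risk in Rd well-defined} for well-definedness, which the paper leaves implicit.
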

\begin{proof}
Observe that
\begin{align*}
    R_{\Gamma_\epsilon}(\ell_{0/1}, A) 
    &= \frac{T}{T+1} \left( \sup_{W_\infty(p_0, p_0')\leq\epsilon}  p_0'(A)\right) + \frac{1}{T+1} \left( \sup_{W_\infty(p_1, p_1')\leq\epsilon}  p_1'((A^c))\right)\\
    &\stackrel{(*)}{=} \frac{T}{T+1} p_0(A^{\oplus\epsilon}) + \frac{1}{T+1} p_1((A^c)^{\oplus\epsilon})\\
    &= R_{\oplus\epsilon}(\ell_{0/1}, A),
\end{align*}
where $(*)$ follows from Lemma~\ref{lem: sup measure for measurable sets}. 
By Lemma~\ref{lem: sup measure for measurable sets} again, the supremum over $p_0'$ and $p_1'$ in $R_{\Gamma_\epsilon}(\ell_{0/1}, A)$ is attained.
\end{proof}

Unlike Theorem~\ref{thm: Adv risk in Polish Space}, Theorem~\ref{thm: Adv risk in Rd} does not show the equivalence of $R_{F_\epsilon}(\ell_{0/1}, A)$ with the other definitions under the relaxed assumption of $A\in \cL(\cX)$. This is because Lemma~\ref{lem: sup measure for measurable sets} does not provide a push-forward map $\phi$ such that $\mu^* = \phi_{\sharp \mu}$ with $\mu^*$ attaining the supremum over the $W_\infty$ ball.

\section{Optimal Adversarial Risk via Generalized Strassen's Theorem}\label{sec: strassen}

In Section~\ref{sec: W_infty}, we analyzed adversarial risk for a specific decision region $A\in \cB(\cX)$. In this section, we analyze infimum of adversarial risk over all possible decision regions; i.e., the optimal adversarial risk. We show that optimal adversarial risk in binary classification with unequal priors is characterized by an unbalanced optimal transport cost between data-generating distributions. Our main technical lemma generalizes Strassen's theorem to unbalanced optimal transport. We present this result in Subsection~\ref{sec: unbalanced OT} and present our characterization of optimal adversarial risk in Subsection~\ref{sec: opt adv risk unequal priors}.

\subsection{Unbalanced Optimal Transport and Generalized Strassen's Theorem}\label{sec: unbalanced OT}

Recall from Section~\ref{sec: defns} that the optimal transport cost $D_\epsilon$ characterizes the optimal adversarial risk in binary classification for equal priors. 
The following result gives an alternative characterization of $D_\epsilon$.
\begin{proposition}[Strassen's theorem][Corollary 1.28 in \cite{Vil03}]\label{prop: strassens theorem}
Let $\mu, \nu\in \cP(\cX)$. Let $\epsilon\geq 0$. Then
\begin{align}\label{eq: strassen}
    \sup_{A\in\cB(\cX)}
    \mu(A) - \nu(A^{2\epsilon})
    = D_\epsilon(\mu, \nu).
\end{align}
\end{proposition}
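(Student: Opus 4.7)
My plan is to prove the two inequalities separately, reducing each to a statement already available in Section~\ref{sec: preliminaries}.

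For the easy direction, $\sup_A [\mu(A) - \nu(A^{2\epsilon})] \le D_\epsilon(\mu,\nu)$, I would fix an arbitrary coupling $\pi\in\Pi(\mu,\nu)$ and a Borel set $A$, and observe that $x\in A$ together with $x'\notin A^{2\epsilon}$ forces $d(x,x')>2\epsilon$. Then
\begin{align*}
\mu(A) - \nu(A^{2\epsilon})
&= \pi(A\times\cX) - \pi(\cX\times A^{2\epsilon})\\
&\le \pi\bigl(A\times(A^{2\epsilon})^c\bigr)
\le \pi\bigl(\{(x,x'): d(x,x')>2\epsilon\}\bigr)
= \int c_\epsilon\,d\pi.
\end{align*}
Taking the infimum over $\pi$ and the supremum over $A$ yields the bound. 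Note that $A^{2\epsilon}$ is automatically closed, hence Borel, so no measurability gymnastics are required here.

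For the hard direction, I would first treat the degenerate case: if $\alpha:=\sup_{A\in\cB(\cX)}[\mu(A)-\nu(A^{2\epsilon})]=0$, then $\mu(A)\le\nu(A^{2\epsilon})$ for every Borel $A$, so by the alternative characterization of $W_\infty$ in \eqref{eq: W_infty} we have $W_\infty(\mu,\nu)\le 2\epsilon$; any $W_\infty$-optimal coupling is then supported on $\{d\le 2\epsilon\}$, giving $D_\epsilon(\mu,\nu)=0=\alpha$. For $\alpha>0$, the plan is to peel off the ``defect'': construct sub-measures $\mu_1\preceq\mu$ and $\nu_1\preceq\nu$ with common mass $1-\alpha$ such that $\mu_1(A)\le \nu_1(A^{2\epsilon})$ for all $A\in\cB(\cX)$. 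Applying the degenerate case to $(\mu_1,\nu_1)$ (after normalization) then yields a coupling $\pi_1\in\Pi(\mu_1,\nu_1)$ concentrated on $\{d\le 2\epsilon\}$; coupling the residuals $\mu_2=\mu-\mu_1$ and $\nu_2=\nu-\nu_1$ by any $\pi_2\in\Pi(\mu_2,\nu_2)$, the assembled $\pi:=\pi_1+\pi_2$ lies in $\Pi(\mu,\nu)$ and satisfies $\int c_\epsilon\,d\pi\le \mu_2(\cX)=\alpha$, giving $D_\epsilon(\mu,\nu)\le \alpha$.

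The main obstacle is producing the decomposition $(\mu_1,\nu_1)$. I would attempt a Hahn-type exhaustion argument: consider the family $\mathcal{F}$ of pairs $(\mu',\nu')$ with $\mu'\preceq\mu$, $\nu'\preceq\nu$, $\mu'(\cX)=\nu'(\cX)$, and $\mu'(A)\le\nu'(A^{2\epsilon})$ for every Borel $A$, partially ordered by componentwise domination. Using tightness of $\mu,\nu$ on the Polish space $\cX$ and weak compactness, I would argue that a maximal element $(\mu_1,\nu_1)$ exists, and that its common mass must equal $1-\alpha$: if it were strictly larger than $1-\alpha$ we would contradict the definition of $\alpha$ via the pair $(\mu-\mu_1,\nu-\nu_1)$, and strict smallness would allow enlargement, contradicting maximality. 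A more streamlined alternative I would try first is Kantorovich duality for the bounded $\{0,1\}$-valued cost $c_\epsilon$: the dual program reads $\sup\{\int\phi\,d\mu+\int\psi\,d\nu:\phi(x)+\psi(x')\le c_\epsilon(x,x')\}$, and the feasible pair $(\phi,\psi)=(\1_A,-\1_{A^{2\epsilon}})$ (checked by case analysis on whether $d(x,x')\le 2\epsilon$) already realizes the right-hand side, so the entire content is that for $0/1$ costs the optimal dual potentials can be taken to be indicators of this form---a rounding statement accessible either via the complementary slackness structure of the LP or via the decomposition argument above.
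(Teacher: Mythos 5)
Your easy direction is fine, and the reduction of the hard direction to a ``decompose into a matchable part plus a defect part'' structure is the right picture. But there is a genuine gap exactly where you would need one more idea, and it is the same gap in both of your proposed routes.

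In the Zorn-exhaustion route, you correctly argue that a maximal pair $(\mu_1,\nu_1)$ in $\mathcal{F}$ cannot have mass strictly larger than $1-\alpha$: if it did, $\mu(A)-\nu(A^{2\epsilon})\le \mu_2(A)-\nu_2(A^{2\epsilon})\le \mu_2(\cX)<\alpha$ for every $A$, contradicting the definition of $\alpha$. But the other inequality --- that a maximal element has mass \emph{at least} $1-\alpha$, i.e.\ ``strict smallness would allow enlargement'' --- is asserted without argument, and it is not a routine consequence of maximality. It is, in fact, precisely the max-flow/min-cut (defect Hall) content of Strassen's theorem: one must produce, from the failure $\mu_1(\cX)<1-\alpha$, a strictly larger pair still satisfying $\mu'(A)\le\nu'(A^{2\epsilon})$ for all Borel $A$, which in the discrete setting already requires an augmenting-path or LP-duality argument and in the Polish setting requires more. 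Your proposal therefore has a circularity risk: the step you label as the final contradiction is essentially the statement you set out to prove. The duality route has the same shape of gap --- Kantorovich duality for the bounded l.s.c.\ cost $c_\epsilon$ gives $D_\epsilon(\mu,\nu)=\sup_{\phi\oplus\psi\le c_\epsilon}\int\phi\,d\mu+\int\psi\,d\nu$, and $(\1_A,-\1_{A^{2\epsilon}})$ is feasible, but showing the supremum is \emph{attained} on potentials of that form (your ``rounding'') is the entire content, and you only gesture at it.

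For comparison, the paper does not prove Proposition~\ref{prop: strassens theorem} directly; it cites Villani's Corollary 1.28 and observes it is a special case of Kantorovich--Rubinstein duality. The paper's actual proof work goes into Theorem~\ref{thm: generalized strassen}, which subsumes this proposition when $M=1$, and that proof proceeds by a quite different route: establish the identity for finitely supported measures by finite-dimensional LP strong duality (Lemma~\ref{lem: discrete strassen}), approximate arbitrary $\mu,\nu$ by discretizations on a dense sequence, use tightness and Prokhorov to extract a convergent subsequence of optimal discrete couplings, and pass to the limit via Portmanteau. That route sidesteps both the exhaustion argument and the potential-rounding issue by pushing the combinatorial content into the finite LP, where strong duality is immediate. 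If you wanted to repair your proposal along your own lines, the cleanest fix is to adopt exactly that discretize-then-take-limits strategy, or to carry out the $c$-transform and verify directly that for a $\{0,1\}$-valued cost the double-$c$-transformed potentials are (up to a set of measure zero) indicators of a set and of its closed $2\epsilon$-expansion.
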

Proposition~\ref{prop: strassens theorem} is a special case of Kantorovich-Rubinstein duality \cite{Vil03} applied to $\{0,1\}$-valued cost functions.
We now generalize this result to measures with unequal masses. We begin with some definitions that generalize the concepts we introduced in Subsection~\ref{sec: OT preliminaries}. 

Let $\mu, \nu \in \cM(\cX)$ be such that $\mu(\cX)\leq \nu(\cX)$. 
A \textit{coupling} between $\mu$ and $\nu$ is a measure $\pi \in \cM(\cX^2)$ such that for any $A\in \cB(\cX)$,
$\pi(A\times\cX) = \mu(A)$ and 
$\pi(\cX\times A) \leq \nu(A)$.
The set $\Pi(\mu, \nu)$ is defined to be the set of all couplings between $\mu$ and $\nu$.
For a cost function $c:\cX^2\to [0, \infty)$, the optimal transport cost between $\mu$ and $\nu$ under $c$ is defined as
$\cT_c(\mu, \nu) = \inf_{\pi\in \Pi(\mu, \nu)} \int_{\cX^2} c(x,x')d\pi(x,x')$.


\begin{theorem}[Generalized Strassen's theorem]\label{thm: generalized strassen}
Let $\mu, \nu\in \cM(\cX)$ be  such that $0< M = \mu(\cX)\leq \nu(\cX)$.  
Let $\epsilon>0$. 
Define $c_\epsilon:\cX^2\to \{0,1\}$ as
$c_\epsilon(x,x') = \1\{(x,x')\in \cX^2 : d(x,x')>2\epsilon\}$.
Then
\begin{align}\label{eq: generalized strassen}
    \sup_{A\in \cB(\cX)}
    \mu(A) - \nu(A^{2\epsilon})
    = \cT_{c_\epsilon}(\mu, \nu)
    = M \inf_{\nu'\in\cP(\cX): \nu'\preceq \nu/M}
    D_\epsilon \left( \mu/M, \nu'  \right).    
\end{align}
Moreover, the infimum on the right hand side is attained. (Equivalently, there is a coupling $\pi\in \Pi(\mu, \nu)$ that attains the unbalanced optimal transport cost $\cT_{c_\epsilon}(\mu, \nu)$.)
\end{theorem}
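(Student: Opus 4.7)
The plan is to reduce the unbalanced statement to the classical Strassen theorem (Proposition~\ref{prop: strassens theorem}) through a cemetery-point construction, then obtain the second equality by direct rescaling and derive attainment from lower semi-continuity of $c_\epsilon$.

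I would augment $\cX$ with a single extra point $\partial$ and extend the metric by first replacing $d$ with the topologically equivalent bounded truncation $\min(d, 3\epsilon)$ (which leaves $c_\epsilon$, the closed $2\epsilon$-expansions, and $D_\epsilon$ unchanged, since all of these only see distances of at most $2\epsilon$), and then setting $\hat d(\partial, x) = 3\epsilon$ for every $x \in \cX$. This makes $(\hat\cX, \hat d)$ a Polish space in which every cross pair between $\partial$ and $\cX$ has distance strictly exceeding $2\epsilon$. Writing $N := \nu(\cX)$, I would balance masses by setting $\hat\mu := \mu + (N - M)\delta_\partial$ and $\hat\nu := \nu$ (assigning no mass to $\partial$), both of total mass $N$, and apply Proposition~\ref{prop: strassens theorem} to the probability measures $\hat\mu/N$ and $\hat\nu/N$.

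Two bookkeeping observations then reduce the balanced identity to the desired unbalanced one. On the supremum side, every $\hat A \in \cB(\hat\cX)$ has the form $A$ or $A \cup \{\partial\}$ for some $A \in \cB(\cX)$; since $\hat d(\partial, \cX) > 2\epsilon$, $\hat A^{2\epsilon}$ is either $A^{2\epsilon}$ or $A^{2\epsilon} \cup \{\partial\}$, and $\hat\nu$ gives $\partial$ no mass, so a short case check shows the supremum is maximized by including $\partial$ and equals $\sup_{A \in \cB(\cX)}[\mu(A) - \nu(A^{2\epsilon})] + (N - M)$. On the transport side, any $\hat\pi \in \Pi(\hat\mu, \hat\nu)$ must send the mass $N - M$ sitting at $\partial$ entirely into $\cX$ (because $\hat\nu(\{\partial\}) = 0$), contributing cost exactly $N - M$ since $c_\epsilon \equiv 1$ on such pairs; the restriction $\pi := \hat\pi|_{\cX \times \cX}$ is then a coupling in the unbalanced set $\Pi(\mu, \nu)$, and the map $\hat\pi \mapsto \pi$ is a bijection with $\int \hat c_\epsilon\, d\hat\pi = \int c_\epsilon\, d\pi + (N - M)$. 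Cancelling $N - M$ from both sides of the balanced Strassen identity then gives the first equality of~\eqref{eq: generalized strassen}.

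For the second equality I would argue by rescaling couplings: given $\pi \in \Pi(\mu, \nu)$, the normalized second marginal $\nu' := \pi_2/M$ lies in $\cP(\cX)$ with $\nu' \preceq \nu/M$ and $\pi/M \in \Pi(\mu/M, \nu')$, while conversely any such $\nu'$ together with $\pi' \in \Pi(\mu/M, \nu')$ produces $M\pi' \in \Pi(\mu, \nu)$; costs scale by $M$, yielding $\cT_{c_\epsilon}(\mu, \nu) = M \inf_{\nu'} D_\epsilon(\mu/M, \nu')$. Attainment follows because $c_\epsilon = \1\{d > 2\epsilon\}$ is the indicator of an open set and hence lower semi-continuous, while $\Pi(\hat\mu, \hat\nu)$ is tight and therefore weakly compact on the Polish space $\hat\cX$; an optimizer $\hat\pi^*$ of the balanced problem restricts to an optimizer of $\cT_{c_\epsilon}(\mu, \nu)$, whose normalized second marginal then realizes the infimum on the right-hand side. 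The main obstacle I anticipate is verifying that the cemetery construction preserves all the relevant quantities—and in particular that $\hat d$ is a genuine metric when $d$ is unbounded—which is precisely why the preliminary truncation to $\min(d, 3\epsilon)$ is required.
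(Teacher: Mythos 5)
Your proposal is correct, and it takes a genuinely different route from the paper's. The paper proves the unbalanced Strassen identity from scratch: it first establishes a discrete version on a finite support set via linear-programming strong duality (Lemma~\ref{lem: discrete strassen}), then approximates general finite measures by discrete measures on a countable dense subset, extracts a weakly convergent subsequence of discrete optimal couplings via Prokhorov, and closes the argument with the Portmanteau theorem. Your cemetery-point reduction instead treats the classical balanced Strassen theorem (Proposition~\ref{prop: strassens theorem}) as a black box, adds an isolated point $\partial$ carrying the mass deficit $N-M$ with distance $3\epsilon > 2\epsilon$ to everything, and observes that $\partial$ contributes exactly $N-M$ to both sides of the balanced identity, which then cancels. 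Two points in your construction deserve the care you gave them: the truncation $d \mapsto \min(d, 3\epsilon)$ is needed so that $\hat d$ remains a metric when $d$ is unbounded, and it genuinely leaves $c_\epsilon$, the closed $2\epsilon$-expansions, and hence $D_\epsilon$ unchanged since all of these only probe whether distances exceed $2\epsilon$; and the marginal constraint $\hat\nu(\{\partial\}) = 0$ forces every coupling $\hat\pi \in \Pi(\hat\mu, \hat\nu)$ to place zero mass on $\cX \times \{\partial\}$ and on $\{(\partial, \partial)\}$, so the restriction to $\cX^2$ is exactly an unbalanced coupling and the correspondence $\hat\pi \mapsto \pi$ is a bijection. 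Your rescaling argument for the second equality and your lsc-plus-tightness argument for attainment are both standard and correct. The trade-off is that your proof is shorter and more conceptual but delegates the hard analysis to the cited balanced theorem, whereas the paper's discretization argument is self-contained and constructs the optimizer explicitly as a weak limit of finite LP solutions; since the paper already states Proposition~\ref{prop: strassens theorem} as a known result, your reduction is a legitimate and arguably cleaner alternative.
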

The proof of Theorem~\ref{thm: generalized strassen} is contained in Appendix~\ref{app: unbalanced OT}.
The leverages strong duality in linear programming. We first establish \eqref{eq: generalized strassen} for discrete measures on a finite support. We then apply the discrete result on a sequence of measures supported on a countable dense subset of the Polish space $\cX$. Using the tightness of finite measures on $\cX$, we construct an optimal coupling that achieves the cost $\cT_{c_\epsilon}(\mu, \nu)$ in \eqref{eq: generalized strassen}. We then show that the constructed coupling satisfies \eqref{eq: generalized strassen}. This proof strategy is adapted from the works of \cite{Dud10} and \cite{Sch74}.

\subsection{ Optimal Adversarial Risk for Unequal Priors}\label{sec: opt adv risk unequal priors}
Generalized Strassen's theorem involves closed set expansions. The following lemma allows us to switch to Minkowski set expansions.
\begin{lemma}\label{lem: pydijog general}
Let $\mu, \nu \in \overline{\cM}(\cX)$ and let $\epsilon\geq 0$.  Then $\sup_{A\in \cB(\cX)} \mu(A) - \nu(A^{2\epsilon})
    = \sup_{A\in \cB(\cX)} \mu(A^{\ominus\epsilon}) - \nu(A^{\oplus\epsilon})$. Moreover, the supremum on the right hand side of the above equality can be replaced by a supremum over closed sets.
\end{lemma}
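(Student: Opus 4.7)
The plan is to close the chain $L \leq R_{\mathrm{cl}} \leq R \leq L$, where $L$ denotes the left-hand side of the displayed equality, $R$ denotes the right-hand side, and $R_{\mathrm{cl}}$ denotes the right-hand side with the supremum restricted to closed sets. Once the chain is closed, both the main equality $L=R$ and the ``moreover'' claim $R=R_{\mathrm{cl}}$ follow simultaneously.

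For $L \leq R_{\mathrm{cl}}$, I would fix a Borel $A$ and substitute $B:=A^{\epsilon}$ on the right; $B$ is closed, hence Borel, and by Lemma~\ref{lem: Borel set expansion} all of $B^{\ominus\epsilon}$, $B^{\oplus\epsilon}$, and $A^{2\epsilon}$ are universally measurable, so every integral is well defined. Two triangle-inequality checks then finish the step: (i) if $a \in A$ and $d(y,a) \leq \epsilon$, then $d(y,A) \leq \epsilon$, so $y \in A^{\epsilon} = B$, which gives $B_\epsilon(a) \subseteq B$ and hence $A \subseteq B^{\ominus\epsilon}$; (ii) if $y \in A^{\epsilon}$ and $d(x,y) \leq \epsilon$, then $d(x,A) \leq 2\epsilon$, giving $B^{\oplus\epsilon} \subseteq A^{2\epsilon}$. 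Combining yields $\mu(A) - \nu(A^{2\epsilon}) \leq \mu(B^{\ominus\epsilon}) - \nu(B^{\oplus\epsilon})$, and the supremum over $A$ delivers $L \leq R_{\mathrm{cl}}$. The inclusion $R_{\mathrm{cl}} \leq R$ is immediate.

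For $R \leq L$, fix a Borel $B$ and $\delta > 0$. Lemma~\ref{lem: Borel set expansion} applied to $B^c$ shows $B^{\ominus\epsilon}$ is universally measurable, so tightness of the finite measure $\mu$ on the Polish space $\cX$ furnishes a compact $K \subseteq B^{\ominus\epsilon}$ with $\mu(K) \geq \mu(B^{\ominus\epsilon}) - \delta$. The heart of the argument is the containment $K^{2\epsilon} \subseteq B^{\oplus\epsilon}$: for $x \in K^{2\epsilon}$, compactness of $K$ and continuity of $d(x,\cdot)$ produce $y \in K$ with $d(x,y) = d(x,K) \leq 2\epsilon$; invoking the midpoint property of the ambient space (automatic in $\real^d$ and the standing assumption whenever the paper handles midpoints in Polish spaces) supplies an $m$ with $d(x,m) = d(y,m) = d(x,y)/2 \leq \epsilon$, so $m \in B_\epsilon(y) \subseteq B$ (because $y \in B^{\ominus\epsilon}$ forces $B_\epsilon(y) \subseteq B$), and hence $x \in B_\epsilon(m) \subseteq B^{\oplus\epsilon}$. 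Since $K$ is compact and therefore Borel, plugging $A = K$ into $L$ yields $L \geq \mu(K) - \nu(K^{2\epsilon}) \geq \mu(B^{\ominus\epsilon}) - \delta - \nu(B^{\oplus\epsilon})$; letting $\delta \downarrow 0$ and taking the supremum over $B$ completes $R \leq L$.

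The only genuinely delicate step is the containment $K^{2\epsilon} \subseteq B^{\oplus\epsilon}$: it requires bisecting a $2\epsilon$-edge from $x$ to a point of $B^{\ominus\epsilon}$ through a point of $B$, which demands a midpoint and can fail in non-geodesic Polish spaces (e.g.\ a three-point metric with pairwise distances $1.1\epsilon, 1.1\epsilon, 2\epsilon$ already breaks the identity with $\mu = \delta_a, \nu = \delta_c$). Everything else uses only triangle inequalities, Lemma~\ref{lem: Borel set expansion}, and inner regularity of finite Borel measures on Polish spaces.
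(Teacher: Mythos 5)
Your easy direction ($L \le R_{\mathrm{cl}} \le R$) is correct and essentially the paper's: the paper also reduces to closed sets, the only real difference being that it substitutes $B=A^{\oplus\epsilon}$ after first passing to $\overline A$, while your choice $B=A^{\epsilon}$ is if anything cleaner, since it avoids invoking $A^{\oplus\epsilon}=A^{\epsilon}$ for closed $A$. The hard direction is where you genuinely diverge. The paper proves $\sup_{A}\mu(A)-\nu(A^{2\epsilon})\ge \mu(A^{\ominus\epsilon})-\nu(A^{\oplus\epsilon})$ by expressing the left side as an unbalanced transport cost via Theorem~\ref{thm: generalized strassen} and then feeding the indicator potentials $\phi=\1_{A^{\ominus\epsilon}}$, $\psi=-\1_{A^{\oplus\epsilon}}$ into Kantorovich duality, the feasibility check resting on the inclusion $(A^{\ominus\epsilon})^{\oplus 2\epsilon}=((A^{\ominus\epsilon})^{\oplus\epsilon})^{\oplus\epsilon}\subseteq A^{\oplus\epsilon}$. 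You instead exhaust $B^{\ominus\epsilon}$ from inside by compact sets and prove $K^{2\epsilon}\subseteq B^{\oplus\epsilon}$ directly; this is more elementary and self-contained (no duality, no Strassen) and makes the geometric content of the step explicit.

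The caveat is the midpoint property. The lemma is stated for an arbitrary Polish $\cX$, and your proof of $K^{2\epsilon}\subseteq B^{\oplus\epsilon}$ uses exact midpoints, so strictly you have not proved the statement as printed. But your diagnosis is right, and your three-point example is a valid counterexample: with $d(a,b)=d(b,c)=1.1\epsilon$, $d(a,c)=2\epsilon$, $\mu=\delta_a$, $\nu=\delta_c$, the left side is $0$ (any $A\ni a$ has $c\in A^{2\epsilon}$), while $A=\{a,b\}$ makes the right side equal to $1$, since $A^{\ominus\epsilon}=A^{\oplus\epsilon}=\{a,b\}$. So the equality genuinely fails without a midpoint-type hypothesis, and the paper's own proof silently assumes one: the inclusion $(A^{\ominus\epsilon})^{\oplus 2\epsilon}\subseteq((A^{\ominus\epsilon})^{\oplus\epsilon})^{\oplus\epsilon}$ is exactly the bisection step you isolate, and it fails at $x=a$, $y=c$, $A=\{a,b\}$ in your example (the ``moreover'' clause's reliance on $A^{\oplus\epsilon}=A^{\epsilon}$ for closed $A$ is likewise not automatic in every Polish space). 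In short: under the midpoint property (in particular in $\real^d$) your argument is complete and correct; without it the lemma as stated is false, and the defect propagates to Theorem~\ref{thm: adv_risk as D_ep} in that generality. The gap you flag is in the lemma's stated hypotheses and in the paper's proof, not in your reasoning.
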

The proof of Lemma~\ref{lem: pydijog general} is contained in Appendix~\ref{app: opt adv risk unequal priors}.
Using Lemma~\ref{lem: pydijog general} and the generalized Strassen's theorem, we show the following result on optimal adversarial risk for unequal priors, generalizing the result of \cite{PydJog20, BhaEtal19}.
\begin{theorem}\label{thm: adv_risk as D_ep}
Let $p_0, p_1\in \overline{\cP}(\cX)$ and let $\epsilon\geq 0$. Then,
\begin{align}\label{eq: adv_risk as D_ep}
    \inf_{A\in \cB(\cX)} R_{\oplus\epsilon}(\ell_{0/1}, A)
    = \frac{1}{T+1}\left[ 1- \inf_{q\in \cP(\cX): q\preceq Tp_0} D_\epsilon(q, p_1) \right].
\end{align}
Moreover, the infimum on the left hand side can be replaced by an infimum over closed sets.
\end{theorem}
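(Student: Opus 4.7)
The plan is to reduce the theorem to the generalized Strassen's theorem via an algebraic rearrangement that swaps the Minkowski expansion of $A^c$ for an inner expansion of $A$. First I would rewrite the adversarial risk using the identity $p_1((A^c)^{\oplus\epsilon}) = 1 - p_1(A^{\ominus\epsilon})$ (valid since $A^{\ominus\epsilon} = ((A^c)^{\oplus\epsilon})^c$), obtaining
\begin{align*}
(T+1)\,R_{\oplus\epsilon}(\ell_{0/1}, A) \;=\; T\,p_0(A^{\oplus\epsilon}) + 1 - p_1(A^{\ominus\epsilon}).
\end{align*}
Measurability of $A^{\oplus\epsilon}$ and $A^{\ominus\epsilon}$ with respect to $\overline{\cB}(\cX)$ follows from Lemma~\ref{lem: Borel set expansion}, so each expression is well defined under the assumption $p_0, p_1 \in \overline{\cP}(\cX)$. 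Taking the infimum over $A \in \cB(\cX)$ converts this to
\begin{align*}
(T+1)\inf_{A\in \cB(\cX)} R_{\oplus\epsilon}(\ell_{0/1}, A) \;=\; 1 - \sup_{A\in \cB(\cX)}\bigl[\,p_1(A^{\ominus\epsilon}) - T\,p_0(A^{\oplus\epsilon})\,\bigr].
\end{align*}

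Next I would invoke Lemma~\ref{lem: pydijog general} with the finite measures $\mu = p_1$ and $\nu = T p_0$ to rewrite the supremum in the standard Strassen form,
\begin{align*}
\sup_{A\in \cB(\cX)}\bigl[p_1(A^{\ominus\epsilon}) - Tp_0(A^{\oplus\epsilon})\bigr] \;=\; \sup_{A\in \cB(\cX)}\bigl[p_1(A) - Tp_0(A^{2\epsilon})\bigr].
\end{align*}
Since $T \geq 1$, we have $p_1(\cX) = 1 \leq T = (Tp_0)(\cX)$, so Theorem~\ref{thm: generalized strassen} applies with $\mu = p_1$, $\nu = Tp_0$, and $M = 1$, yielding
\begin{align*}
\sup_{A\in \cB(\cX)}\bigl[p_1(A) - Tp_0(A^{2\epsilon})\bigr] \;=\; \inf_{\nu'\in \cP(\cX):\,\nu'\preceq Tp_0} D_\epsilon(p_1, \nu').
\end{align*}
Finally, the cost $c_\epsilon(x,x') = \1\{d(x,x') > 2\epsilon\}$ is symmetric and both arguments are probability measures, so $D_\epsilon(p_1, q) = D_\epsilon(q, p_1)$, which reshapes the right-hand side into the form stated in \eqref{eq: adv_risk as D_ep}. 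Chaining the four equalities gives the theorem.

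For the last assertion about restricting to closed decision regions, I would appeal once more to Lemma~\ref{lem: pydijog general}, which explicitly states that the supremum of $p_1(A^{\ominus\epsilon}) - Tp_0(A^{\oplus\epsilon})$ over $A \in \cB(\cX)$ is unchanged when the supremum is taken only over closed $A$. Tracing this restriction back through the initial rearrangement, the infimum of $R_{\oplus\epsilon}(\ell_{0/1}, A)$ is therefore already attained (in value) along closed decision regions. The only conceptual subtlety in the whole argument is ensuring the mass ordering $p_1(\cX) \leq (Tp_0)(\cX)$ required by Theorem~\ref{thm: generalized strassen}; this is where the normalization $T \geq 1$, assumed without loss of generality at the outset, is used, and it also explains why the factor $T$ naturally attaches to $p_0$ (the dominating prior) rather than $p_1$ in the final formula.
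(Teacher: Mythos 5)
Your proposal follows essentially the same route as the paper: rewrite $p_1((A^c)^{\oplus\epsilon})$ as $1 - p_1(A^{\ominus\epsilon})$, pull the infimum inside, apply Lemma~\ref{lem: pydijog general} to pass from outer/inner expansions to the $A$ vs.\ $A^{2\epsilon}$ form, and then invoke Theorem~\ref{thm: generalized strassen} with $\mu = p_1$, $\nu = Tp_0$, $M=1$. Your additional remarks --- making explicit the symmetry of $D_\epsilon$ used to match the paper's $D_\epsilon(q, p_1)$ ordering, and tracing the ``closed sets'' claim back through Lemma~\ref{lem: pydijog general} --- are correct and helpfully spell out two steps the paper leaves implicit.
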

\begin{proof}
\begin{align*}
    \inf_{A\in \cB(\cX)} R_{\oplus\epsilon}(\ell_{0/1}, A)
    &=  \inf_{A\in \cB(\cX)} \frac{1}{T+1} \left[ Tp_0(A^{\oplus\epsilon}) + p_1((A^c)^{\oplus\epsilon}) \right]\\
    &= \frac{1}{T+1} \left[ 1 -  \sup_{A\in \cB(\cX)} \left( p_1(A^{\ominus\epsilon}) - Tp_0(A^{\oplus\epsilon}) \right)  \right]\\
    &\stackrel{(i)}{=} \frac{1}{T+1} \left[ 1 -  \sup_{A\in \cB(\cX)} \left( p_1(A) - Tp_0(A^{2\epsilon}) \right)  \right]\\    
    &\stackrel{(ii)}{=} \frac{1}{T+1}\left[ 1- \inf_{\substack{q\in \cP(\cX):\\ q\preceq Tp_0}} D_\epsilon(q, p_1) \right],
\end{align*}
where $(i)$ follows from Lemma~\ref{lem: pydijog general} and $(ii)$ follows from Theorem~\ref{thm: generalized strassen}.
\end{proof}

Theorem~\ref{thm: adv_risk as D_ep} extends the result of \cite{PydJog21} in two ways: (1) the infimum is taken over all sets for which $R_{\oplus\epsilon}(\ell_{0/1}, A)$ is well-defined, instead of restricting to closed sets, and (2) the priors on both labels can be unequal. We also note that for  $(\cX,d) =(\mathbb{R}^d, \|\cdot\|)$, \eqref{eq: adv_risk as D_ep} holds with the infimum on the left hand side taken over all $A\in \cL(\cX)$.

\section{Minimax Theorems and Nash Equilibria}\label{sec: minimax}

In this section, we revisit the zero-sum game between the adversary and the algorithm introduced in Section~\ref{sec: defns}. Recall that for $A\in \cB(\cX)$ and $p_0', p_1'\in \overline{\cP}(\cX)$, the payoff function is given by
\begin{align}\label{eq: payoff function}
    r(A, p_0', p_1') = \frac{T}{T+1} p_0'(A) + \frac{1}{T+1} p_1'((A^c)).
\end{align}
The max-min inequality gives us
\begin{align}\label{eq: max-min inequality}
    \sup_{W_\infty(p_0, p_0'), W_\infty(p_1, p_1')\leq \epsilon}
    \inf_{A\in \cA}  r(A, p_0', p_1')
    \leq 
    \inf_{A\in \cB(\cX)} 
    \sup_{W_\infty(p_0, p_0'), W_\infty(p_1, p_1')\leq \epsilon}  r(A, p_0', p_1').
\end{align}

If the inequality in \eqref{eq: max-min inequality} is an equality, we say that the game has zero duality gap, and admits a value equal to either expression in \eqref{eq: max-min inequality}. In the equality setting, there is no advantage to a player making the first move. Our minimax theorems establish such an equality. If, in addition to having an equality in \eqref{eq: max-min inequality}, there exist $p_0^*, p_1^*\in \cP(\cX)$ that achieve the supremum on the left-hand side and  $A^*\in \cB(\cX)$ that achieves the infimum on the right-hand side, we say that $((p_0^*, p_1^*), A^*)$ is a pure Nash equilibrium of the game. On the other hand, we say that $((p_0^*, p_1^*), A^*)$ is a $\delta$-approximate pure Nash equilibrium of the game if the following inequality holds.
\begin{align*}
	\sup_{W_\infty(p_0, p_0'), W_\infty(p_1, p_1')\leq \epsilon}  r(A^*, p_0', p_1') - \delta \leq 	r(A^*, p_0^*, p_1^*) \leq \inf_{A\in \cA}  r(A, p_0^*, p_1^*) + \delta.
\end{align*}

In Section~\ref{sec: minimax in Rd}, we prove the minimax theorem and the existence of a pure Nash equilibrium in $\real^d$ using the theory of $2$-alternating capacities \cite{HubStr73} and the relation to adversarial risk from Section~\ref{sec: capacities}. Section~\ref{sec: minimax in Polish space} extends these results to more general Polish spaces with a ``midpoint property."

\subsection{Minimax Theorem in $\real^d$ via $2$-Alternating Capacities}\label{sec: minimax in Rd}

The following theorem proves the minimax equality and the existence of a Nash equilibrium for the adversarial robustness game in $\real^d$.

\begin{theorem}[Minimax theorem in $\real^d$]\label{thm: minimax theorem in R^d}
Let $(\cX,d) = (\real^d, \|\cdot\|)$. Let $p_0, p_1\in \overline{\cP}(\cX)$ and let $\epsilon\geq 0$. Define $r$ as in \eqref{eq: payoff function}.
Then,
\begin{align}\label{eq: min-max theorem in Rd}
    \sup_{W_\infty(p_0, p_0'), W_\infty(p_1, p_1')\leq \epsilon} \inf_{A\in \cL(\cX)}  r(A, p_0', p_1')
    = 
    \inf_{A\in \cL(\cX)} \sup_{W_\infty(p_0, p_0'), W_\infty(p_1, p_1')\leq \epsilon}  r(A, p_0', p_1').
\end{align}
Moreover, there exist $p_0^*, p_1^*\in \overline{\cP}(\cX)$  and  $A^*\in \cL(\cX)$ that achieve the supremum and infimum on the left and right hand sides of the above equation.
\end{theorem}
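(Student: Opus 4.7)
The strategy is to exhibit a pure Nash equilibrium directly via the Huber--Strassen theory of $2$-alternating capacities; the equilibrium pair will simultaneously attain both sides of \eqref{eq: min-max theorem in Rd}. Since the max--min inequality is automatic, the entire task is to produce matching equilibrium strategies and verify the reverse direction.

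First I would translate both sides into capacity language. By Lemma~\ref{lem: sup measure for measurable sets}, for every $A\in\cL(\cX)$,
\begin{align*}
\sup_{W_\infty(p_i,p_i')\leq\epsilon} p_i'(A) \;=\; v_i(A) \;:=\; p_i(A^{\oplus\epsilon}),
\end{align*}
so the right-hand side of \eqref{eq: min-max theorem in Rd} equals the optimal adversarial risk $\inf_{A\in\cL(\cX)}\bigl[\tfrac{T}{T+1}v_0(A) + \tfrac{1}{T+1}v_1(A^c)\bigr]$. By Lemma~\ref{lem: set expansion is 2-alternating capacity}, both $v_0$ and $v_1$ are $2$-alternating capacities on $(\cX, \cL(\cX))$. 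Moreover, combining the two lemmas one verifies that the Choquet core of $v_i$ coincides with the closed $W_\infty$-ball $\{p_i':W_\infty(p_i,p_i')\leq\epsilon\}$: the containment ``core $\subseteq$ ball'' uses the characterization of $W_\infty$ in \eqref{eq: W_infty} applied to $A\in\cB(\cX)$, and the reverse containment is Lemma~\ref{lem: sup measure for measurable sets}.

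Next I would invoke the Huber--Strassen theorem \cite{HubStr73}, which for any pair of $2$-alternating capacities furnishes a \emph{least favorable pair} $(p_0^*, p_1^*)$ inside the corresponding cores together with a universally measurable likelihood ratio $\pi^* = dp_1^*/dp_0^*$ such that, for every threshold $t\geq 0$, the set $A_t^* := \{\pi^*>t\}$ satisfies $v_0(A_t^*) = p_0^*(A_t^*)$ and $v_1((A_t^*)^c) = p_1^*((A_t^*)^c)$. Choosing the Bayes threshold $t=T$ yields $A^*:=\{\pi^*>T\}\in\cL(\cX)$, which is simultaneously the Bayes-optimal decision region between $p_0^*$ and $p_1^*$ for priors $(\tfrac{T}{T+1},\tfrac{1}{T+1})$. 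Then
\begin{align*}
\inf_{A\in\cL(\cX)}\sup_{p_0',p_1'} r(A,p_0',p_1')
&\leq \tfrac{T}{T+1}v_0(A^*) + \tfrac{1}{T+1}v_1((A^*)^c) \\
&= \tfrac{T}{T+1}p_0^*(A^*) + \tfrac{1}{T+1}p_1^*((A^*)^c) \\
&= \inf_{A\in\cL(\cX)} r(A,p_0^*,p_1^*)
\;\leq\; \sup_{p_0',p_1'}\inf_{A\in\cL(\cX)} r(A,p_0',p_1'),
\end{align*}
where the second equality uses the Huber--Strassen identity at $A^*$ and the third uses Bayes optimality of $A^*$ for $(p_0^*, p_1^*)$. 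Chaining with the max--min inequality gives equality throughout, and simultaneously identifies $((p_0^*,p_1^*),A^*)$ as the desired pure Nash equilibrium.

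The main obstacle is a careful invocation of Huber--Strassen in this setting: one must verify that their hypotheses are met by $v_0, v_1$ (capacities on a Polish space, $2$-alternating), match the abstract ``cores'' appearing in their theorem with our $W_\infty$-balls, and check that the level set $A^*=\{\pi^*>T\}$ lies in $\cL(\cX)$, which follows because $\pi^*$ is universally---hence Lebesgue---measurable on $\real^d$. A subsidiary point is aligning the Huber--Strassen threshold with the Bayes-risk calibration for the prior ratio $T{:}1$; this is routine but must be tracked while preserving the measurability of $A^*$.
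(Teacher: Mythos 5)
Your proposal is correct and follows essentially the same route as the paper: both invoke Huber--Strassen's theory of $2$-alternating capacities (via Lemma~\ref{lem: set expansion is 2-alternating capacity} and Lemma~\ref{lem: sup measure for measurable sets}), obtain a least favorable pair in the $W_\infty$-balls, and close the max--min inequality. Your version is somewhat more explicit than the paper's---you construct $A^*$ directly as the level set $\{\pi^*>T\}$ of the least favorable likelihood ratio and verify the saddle-point chain by hand, whereas the paper cites Lemma 3.1 and Theorem 4.1 of \cite{HubStr73} separately for the existence of $A^*$ and $(q_0,q_1)$---but the underlying mechanism is identical.
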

The proof of Theorem~\ref{thm: minimax theorem in R^d} is in Appendix~\ref{app: minimax in Rd}.
Crucial to the proof of Theorem~\ref{thm: minimax theorem in R^d} is Lemma~\ref{lem: set expansion is 2-alternating capacity}, which shows that the set-valued maps $A \mapsto p_0(A^{\oplus\epsilon})$ and $A^c \mapsto p_1((A^c)^{\oplus\epsilon})$ are $2$-alternating capacities.
The same proof technique is not applicable in general Polish spaces because the map $A\mapsto \mu(A^{\oplus\epsilon})$ is not a capacity for a general $\mu\in \overline{\cP}(\cX)$. This is because $A^{\oplus\epsilon}$ is not measurable for all $A\in \overline{\cB}(\cX)$.

\subsection{Minimax Theorem in Polish Spaces via Optimal Transport}\label{sec: minimax in Polish space}

We now extend the minimax theorem from $\real^d$ to general Polish spaces with the following property.

\begin{definition}[Midpoint property]
A metric space $(\cX, d)$ is said to have the midpoint property if for every $x_1, x_2\in \cX$, there exists $x\in \cX$ such that,
$d(x_1, x) = d(x, x_2) = d(x_1, x_2)/2$.
\end{definition}

Any normed vector space with distance defined as $d(x,x') = \|x-x'\|$ satisfies the midpoint property. An example of a metric space without this property is the discrete metric space where $d(x,x') = \1\{x\neq x'\}$. The midpoint property plays a crucial role in proving the following theorem, which shows that the $D_\epsilon$ transport cost between two distributions is the shortest total variation distance between their $\epsilon$-neighborhoods in $W_\infty$ metric. A similar result was also presented in \cite{Doh20}.

\begin{theorem}[$D_\epsilon$ as shortest $D_{TV}$ between $W_\infty$ balls]\label{thm: D_ep as shortest D_TV}
Let $(\cX,d)$ have the midpoint property. Let $\mu, \nu\in \overline{\cP}(\cX)$ and let $\epsilon\geq 0$.
Then $D_\epsilon(\mu, \nu) 
    = \inf_{W_\infty(\mu, \mu'), W_\infty(\nu, \nu')\leq \epsilon}
    D_{TV}(\mu', \nu')$. Moreover, the infimum over $D_{TV}$ in the above equation is attained.
\end{theorem}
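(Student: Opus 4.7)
The plan is to prove the equality by establishing the two inequalities separately, with attainment following as a byproduct of the first direction. The existence of an optimal coupling $\pi^* \in \Pi(\mu, \nu)$ for $D_\epsilon$ follows from the lower semi-continuity of $c_\epsilon$ on the Polish space $\cX$ (standard OT existence results), or alternatively from Theorem~\ref{thm: generalized strassen} specialized to equal-mass probability measures.

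For the inequality $\inf D_{TV}(\mu', \nu') \leq D_\epsilon(\mu, \nu)$, I would start with the optimal coupling $\pi^*$ and split $\cX \times \cX$ via the closed set $A = \{(x, x'): d(x, x') \leq 2\epsilon\}$, noting that $\pi^*(A^c) = D_\epsilon(\mu, \nu)$. On $A$, the midpoint property guarantees that the closed-valued multifunction $(x, x') \mapsto B_\epsilon(x) \cap B_\epsilon(x')$ is non-empty, so Kuratowski--Ryll-Nardzewski yields a Borel selector $m: A \to \cX$ with $\max\{d(x, m(x, x')), d(x', m(x, x'))\} \leq \epsilon$. Define $\phi, \psi: \cX \times \cX \to \cX$ that both equal $m$ on $A$ and project to $x$ and $x'$ respectively on $A^c$; set $\mu' \defn \phi_\sharp \pi^*$ and $\nu' \defn \psi_\sharp \pi^*$. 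The joint distribution of $(x, \phi(x, x'))$ under $\pi^*$ couples $\mu$ with $\mu'$ and satisfies $d(x, \phi(x, x')) \leq \epsilon$ $\pi^*$-a.s., so by \eqref{eq: W_infty}, $W_\infty(\mu, \mu') \leq \epsilon$; similarly $W_\infty(\nu, \nu') \leq \epsilon$. Since $\phi \equiv \psi$ on $A$, for any Borel $B$ we have $|\mu'(B) - \nu'(B)| = |\int_{A^c}(\1_B(x) - \1_B(x')) d\pi^*| \leq \pi^*(A^c)$, giving $D_{TV}(\mu', \nu') \leq D_\epsilon(\mu, \nu)$.

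For the reverse inequality, fix any feasible $\mu', \nu'$. By \eqref{eq: W_infty}, pick $\pi_1 \in \Pi(\mu, \mu')$ and $\pi_2 \in \Pi(\nu, \nu')$ with essential-supremum distance at most $\epsilon$, and let $\sigma \in \Pi(\mu', \nu')$ be a maximal coupling with $\sigma\{u \neq v\} = D_{TV}(\mu', \nu')$. Apply the gluing lemma in the Polish setting to build a measure $\Lambda$ on $\cX^4$ with $(x, u)$-marginal $\pi_1$, $(u, v)$-marginal $\sigma$, and $(v, y)$-marginal $\pi_2$; let $\tau$ be its $(x, y)$-marginal, so $\tau \in \Pi(\mu, \nu)$. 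On the event $\{u = v\}$, the triangle inequality gives $d(x, y) \leq d(x, u) + d(v, y) \leq 2\epsilon$ $\Lambda$-a.s., hence $\tau\{d(x,y) > 2\epsilon\} \leq \Lambda\{u \neq v\} = D_{TV}(\mu', \nu')$, which yields $D_\epsilon(\mu, \nu) \leq D_{TV}(\mu', \nu')$. Combining the two directions gives the equality, and the $(\mu', \nu')$ from the first part attain the infimum.

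The main obstacle is the measurable selection of the midpoint map $m$: the midpoint property is only pointwise, so one must verify that the multifunction $(x, x') \mapsto B_\epsilon(x) \cap B_\epsilon(x')$ is weakly measurable (using continuity of $d$) before invoking Kuratowski--Ryll-Nardzewski, and one must handle the universally measurable setting $\overline{\cP}(\cX)$ with a Jankov--von Neumann selector if needed. A secondary technicality is making the gluing lemma rigorous in the Polish setting, which relies on the existence of regular conditional distributions; this is where working in a Polish space (rather than a general metric space) is essential.
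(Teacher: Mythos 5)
Your proposal is correct and follows essentially the same route as the paper: one direction glues the two $W_\infty$ couplings with a maximal total-variation coupling to get an element of $\Pi(\mu,\nu)$ (the paper's $\gamma_\mu\circ\gamma'\circ\gamma_\nu$), and the other pushes the optimal $D_\epsilon$ coupling forward by a map that collapses pairs with $d(x,x')\leq 2\epsilon$ to a common point within $\epsilon$ of both, yielding the attaining pair $(\mu',\nu')$. The only cosmetic difference is that you build the measurable midpoint-type selector yourself via Kuratowski--Ryll-Nardzewski / Jankov--von Neumann, whereas the paper simply invokes the measurable midpoint map of \cite{Doh20}.
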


The proof of Theorem~\ref{thm: D_ep as shortest D_TV} is in Appendix~\ref{app: minimax in Polish space}.
The following theorem uses Theorem~\ref{thm: D_ep as shortest D_TV} to prove the minimax equality and the existence of a Nash equilibrium for any Polish space with the midpoint property for the case of equal priors.

\begin{theorem}[Minimax theorem for equal priors]\label{thm: minimax equal priors}
Let $(\cX,d)$ have the midpoint property. Let $p_0, p_1\in \overline{\cP}(\cX)$ and let $\epsilon\geq 0$.
Define $r$ as in \eqref{eq: payoff function} with $T=1$.
Then
\begin{align}
    \sup_{W_\infty(p_0, p_0'), W_\infty(p_1, p_1')\leq \epsilon}
    \inf_{A\in \cB(\cX)}  r(A, p_0', p_1')
    = 
    \inf_{A\in \cB(\cX)} 
    \sup_{W_\infty(p_0, p_0'), W_\infty(p_1, p_1')\leq \epsilon}  r(A, p_0', p_1').
\end{align}
Moreover, there exist $p_0^*, p_1^*\in \cP(\cX)$ that achieve the supremum on the left hand side of the above equation.
\end{theorem}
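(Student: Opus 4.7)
The plan is to evaluate both sides of the proposed equality explicitly and show that they coincide with the common value $\tfrac{1}{2}\bigl[1 - D_\epsilon(p_0, p_1)\bigr]$, by chaining together three results established earlier in the paper. Intuitively, the right-hand side is the optimal adversarial risk (which Theorem~\ref{thm: adv_risk as D_ep} characterizes via the transport cost $D_\epsilon$), while the left-hand side is the minimum Bayes error over a pair of $W_\infty$ uncertainty sets (which Theorem~\ref{thm: D_ep as shortest D_TV} also identifies with $D_\epsilon$).

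To evaluate the RHS, I would first invoke Theorem~\ref{thm: Adv risk in Polish Space} to recognise the inner supremum as $R_{\oplus\epsilon}(\ell_{0/1}, A)$ with $T=1$, so that the RHS equals $\inf_{A \in \cB(\cX)} R_{\oplus\epsilon}(\ell_{0/1}, A)$. Then Theorem~\ref{thm: adv_risk as D_ep} with $T=1$ yields
\[
\inf_{A \in \cB(\cX)} R_{\oplus\epsilon}(\ell_{0/1}, A) = \tfrac{1}{2}\Bigl[1 - \inf_{q \in \cP(\cX):\, q \preceq p_0} D_\epsilon(q, p_1)\Bigr].
\]
Since any $q \in \cP(\cX)$ with $q \preceq p_0$ must equal $p_0$ (both being probability measures of total mass $1$), the RHS collapses to $\tfrac{1}{2}\bigl[1 - D_\epsilon(p_0, p_1)\bigr]$.

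For the LHS, the inner infimum over $A$ is the classical minimum Bayes error between $p_0'$ and $p_1'$. A Hahn decomposition of the signed Borel measure $p_0' - p_1'$ (applied to the Borel restrictions of $p_0', p_1' \in \overline{\cP}(\cX)$) produces a Borel set on which the infimum is attained and yields the identity $\inf_{A \in \cB(\cX)} r(A, p_0', p_1') = \tfrac{1}{2}\bigl[1 - D_{TV}(p_0', p_1')\bigr]$. Hence the LHS becomes
\[
\tfrac{1}{2}\Bigl[1 - \inf\{D_{TV}(p_0', p_1') : W_\infty(p_0, p_0') \leq \epsilon,\; W_\infty(p_1, p_1') \leq \epsilon\}\Bigr].
\]
Invoking Theorem~\ref{thm: D_ep as shortest D_TV} identifies this inner infimum with $D_\epsilon(p_0, p_1)$ and, crucially, asserts that the infimum is attained by some $p_0^*, p_1^* \in \cP(\cX)$, which provides the required equilibrium adversary for the existence claim.

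The main obstacle has already been absorbed into Theorem~\ref{thm: D_ep as shortest D_TV}, where the midpoint property does the real work; the remaining argument is just a matter of recognising the RHS as an optimal adversarial risk and matching formulas. One minor technical point is to verify that restricting the inner infimum in the Bayes error computation to $\cB(\cX)$ (rather than $\overline{\cB}(\cX)$) does not change its value when the measures lie in $\overline{\cP}(\cX)$; this is immediate from the fact that the Hahn decomposition of their Borel restrictions already provides a Borel-measurable optimiser.
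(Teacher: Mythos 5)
Your proof is correct and follows essentially the same route as the paper's: both sides are evaluated and shown to equal $\frac{1}{2}\bigl[1 - D_\epsilon(p_0, p_1)\bigr]$ by chaining Theorem~\ref{thm: Adv risk in Polish Space}, Theorem~\ref{thm: adv_risk as D_ep} (with your explicit observation that $q \preceq p_0$ forces $q = p_0$ when $T=1$), and Theorem~\ref{thm: D_ep as shortest D_TV}. The only cosmetic difference is that you evaluate the inner Bayes error on the left via a Hahn decomposition of $p_0' - p_1'$, whereas the paper re-invokes Theorem~\ref{thm: adv_risk as D_ep} at $\epsilon = 0$; these are the same fact expressed two ways.
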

\begin{proof}
We have the following series of equalities.
\begin{align*}
    \inf_{A\in \cB(\cX)} \sup_{\substack{W_\infty(p_0, p_0')\leq \epsilon\\W_\infty(p_1, p_1')\leq \epsilon}}  r(A, p_0', p_1')
    &= \inf_{A\in \cB(\cX)} R_{\Gamma_\epsilon}(\ell_{0/1}, A)\\
    &\stackrel{(i)}{=} \inf_{A\in \cB(\cX)} R_{\oplus\epsilon}(\ell_{0/1}, A)\\
    &\stackrel{(ii)}{=} \frac{1}{2} \left[1-  D_\epsilon(p_0, p_1) \right],
\end{align*}
and 
\begin{align*}
    \sup_{\substack{W_\infty(p_0, p_0')\leq \epsilon\\W_\infty(p_1, p_1')\leq \epsilon}}
    \inf_{A\in \cB(\cX)}  r(A, p_0', p_1')    
    &\stackrel{(iii)}{=} \sup_{\substack{W_\infty(p_0, p_0')\leq \epsilon\\W_\infty(p_1, p_1')\leq \epsilon}}
    \frac{1}{2} \left[1-  D_{TV}(p_0', p_1') \right]\\
    &= \frac{1}{2} 
    \left[1- 
    \inf_{\substack{W_\infty(p_0, p_0')\leq \epsilon\\W_\infty(p_1, p_1')\leq \epsilon}}  D_{TV}(p_0', p_1') \right],    
\end{align*}
where (i) follows from Theorem~\ref{thm: Adv risk in Polish Space}, (ii)  from Theorem~\ref{thm: adv_risk as D_ep}, and $(iii)$ again from Theorem~\ref{thm: adv_risk as D_ep} with $\epsilon=0$. The expressions on the right extremes of the above equations are equal by Theorem~\ref{thm: D_ep as shortest D_TV}. The existence of  $p_0^*, p_1^*\in \overline{\cP}(\cX)$ follows Theorem~\ref{thm: D_ep as shortest D_TV}.
\end{proof}

To prove the minimax theorem for unequal priors, we need the following generalization of Theorem~\ref{thm: D_ep as shortest D_TV} to finite measures of unequal mass.
\begin{lemma}\label{lem: layered balls}
Let $p_0, p_1\in \overline{\cP}(\cX)$ and let $\epsilon\geq 0$. Then for $T\geq 1$,
\begin{align}\label{eq: layered balls}
	\inf_{q\in \overline{\cP}(\cX): q\preceq Tp_0} D_\epsilon(q, p_1)
	& = 
    \inf_{q\in \overline{\cP}(\cX): q\preceq Tp_0}
    \inf_{W_\infty(q, q'), W_\infty(p_1, p_1')\leq \epsilon} 
    D_{TV}(q', p_1') \notag \\
    & = 
    \inf_{W_\infty(p_0, p_0'), W_\infty(p_1, p_1')\leq \epsilon} 
    \inf_{q'\in \overline{\cP}(\cX): q'\preceq Tp_0'}
    D_{TV}(q', p_1')
\end{align}
\end{lemma}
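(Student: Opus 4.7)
The plan is to handle the two equalities separately. The first equality follows essentially directly from Theorem~\ref{thm: D_ep as shortest D_TV} since any $q\in \overline{\cP}(\cX)$ with $q\preceq Tp_0$ is itself a probability measure, so (under the implicit midpoint assumption inherited from the previous theorem) we have $D_\epsilon(q, p_1) = \inf_{W_\infty(q,q'), W_\infty(p_1,p_1')\leq \epsilon} D_{TV}(q', p_1')$ for each such $q$, and taking the infimum over $q\preceq Tp_0$ on both sides yields the first equality.

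For the second equality, observe that since $p_1'$ enters symmetrically on both sides under $W_\infty(p_1, p_1')\leq \epsilon$, the task reduces to proving the set identity
\[
\{q'\in \overline{\cP}(\cX) : \exists\, q\preceq Tp_0,\ W_\infty(q,q')\leq \epsilon\} = \{q'\in \overline{\cP}(\cX) : \exists\, p_0' \text{ with } W_\infty(p_0, p_0')\leq \epsilon \text{ and } q' \preceq Tp_0'\}.
\]
I would prove this by two coupling constructions. For the inclusion $(\subseteq)$, given $q\preceq Tp_0$ and a coupling $\pi_q\in \Pi(q,q')$ witnessing $W_\infty(q,q')\leq \epsilon$, let $r := Tp_0 - q$ (a finite non-negative measure of mass $T-1$), and let $\Delta_r$ be the ``diagonal'' coupling of $r$ with itself (the image of $r$ under $x\mapsto (x,x)$, on which $d\equiv 0$). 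The sum $\pi_q + \Delta_r$ has first marginal $Tp_0$ and second marginal $q'+r$, and after rescaling by $1/T$ it becomes a coupling of $p_0$ with $p_0' := (q'+r)/T$ on which $d \leq \epsilon$ almost surely; hence $W_\infty(p_0, p_0')\leq \epsilon$, and $q' \leq q'+r = Tp_0'$ gives $q'\preceq Tp_0'$.

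For the reverse inclusion $(\supseteq)$, suppose $W_\infty(p_0, p_0')\leq \epsilon$ and $q' \preceq Tp_0'$. By Radon-Nikodym, write $q' = f\cdot Tp_0'$ for some universally measurable $f:\cX\to [0,1]$. Pick a coupling $\pi\in \Pi(p_0, p_0')$ with $d\leq \epsilon$ $\pi$-almost surely, and define
\[
\tilde\pi(dx, dx') := f(x')\, T\pi(dx, dx').
\]
The second marginal of $\tilde\pi$ is $f\cdot Tp_0' = q'$, so $\tilde\pi(\cX^2) = q'(\cX) = 1$; its first marginal $q$ satisfies $q(A) = \int_{A\times\cX} f(x')\, T\pi(dx,dx') \leq Tp_0(A)$, hence $q\preceq Tp_0$, and $q$ is a probability measure. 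Since $\tilde\pi \ll T\pi$, the bound $d\leq \epsilon$ inherited from $\pi$ holds $\tilde\pi$-almost surely, giving $W_\infty(q, q')\leq \epsilon$.

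The main obstacle is the reverse inclusion, since one must produce a valid coupling that ``restricts'' along $q' \preceq Tp_0'$ while keeping both the marginal $q\preceq Tp_0$ and the $W_\infty$ transport bound; the Radon-Nikodym weighting $f(x')$ handles both requirements simultaneously. The forward direction, in contrast, is a clean splitting argument using the residual mass $r=Tp_0-q$ on the diagonal. Combining the two inclusions with the identical ``free'' infimum over $p_1'$ on each side yields the second equality.
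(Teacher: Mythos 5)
Your proof is correct, and for the second equality it takes a genuinely different and arguably cleaner route than the paper. The first equality is handled identically (apply Theorem~\ref{thm: D_ep as shortest D_TV} pointwise in $q$, then take the infimum; you are right that the midpoint property is inherited implicitly). For the second equality the paper does \emph{not} prove the set identity you state: instead it proves one inequality direction by chaining $D_\epsilon$ through the adversarial-risk characterizations (Theorem~\ref{thm: adv_risk as D_ep} at radius $\epsilon$ and at radius $0$, Theorem~\ref{thm: Adv risk in Polish Space}) and then invoking the max-min inequality (Lemma~\ref{lem: max-min inequality}); it proves the reverse direction by exactly your ``forward'' construction $p_0' = p_0 + \tfrac{1}{T}(q'-q)$, i.e.\ $p_0' = (q' + r)/T$ with $r = Tp_0 - q$ and the associated diagonal-plus-$\pi_q$ coupling. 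Your contribution is the reverse inclusion via the Radon--Nikodym reweighting $\tilde\pi(dx,dx') = f(x')\,T\pi(dx,dx')$ with $f = dq'/d(Tp_0') \in [0,1]$, which the paper does not need because it borrows that direction from the max-min machinery. Your version is more self-contained (purely measure-theoretic, no dependence on the adversarial-risk theorems or the max-min lemma) and in fact establishes a stronger statement---a genuine equality of feasible sets---whereas the paper's argument only yields the numerical equality of infima. The paper's route is shorter given its surrounding infrastructure and essentially reuses half of the minimax proof it is about to give. Both are valid.
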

The proof of Lemma~\ref{lem: layered balls} is contained in Appendix~\ref{app: minimax in Polish space}.

\begin{figure}
    \centering
    \includegraphics[scale=0.7]{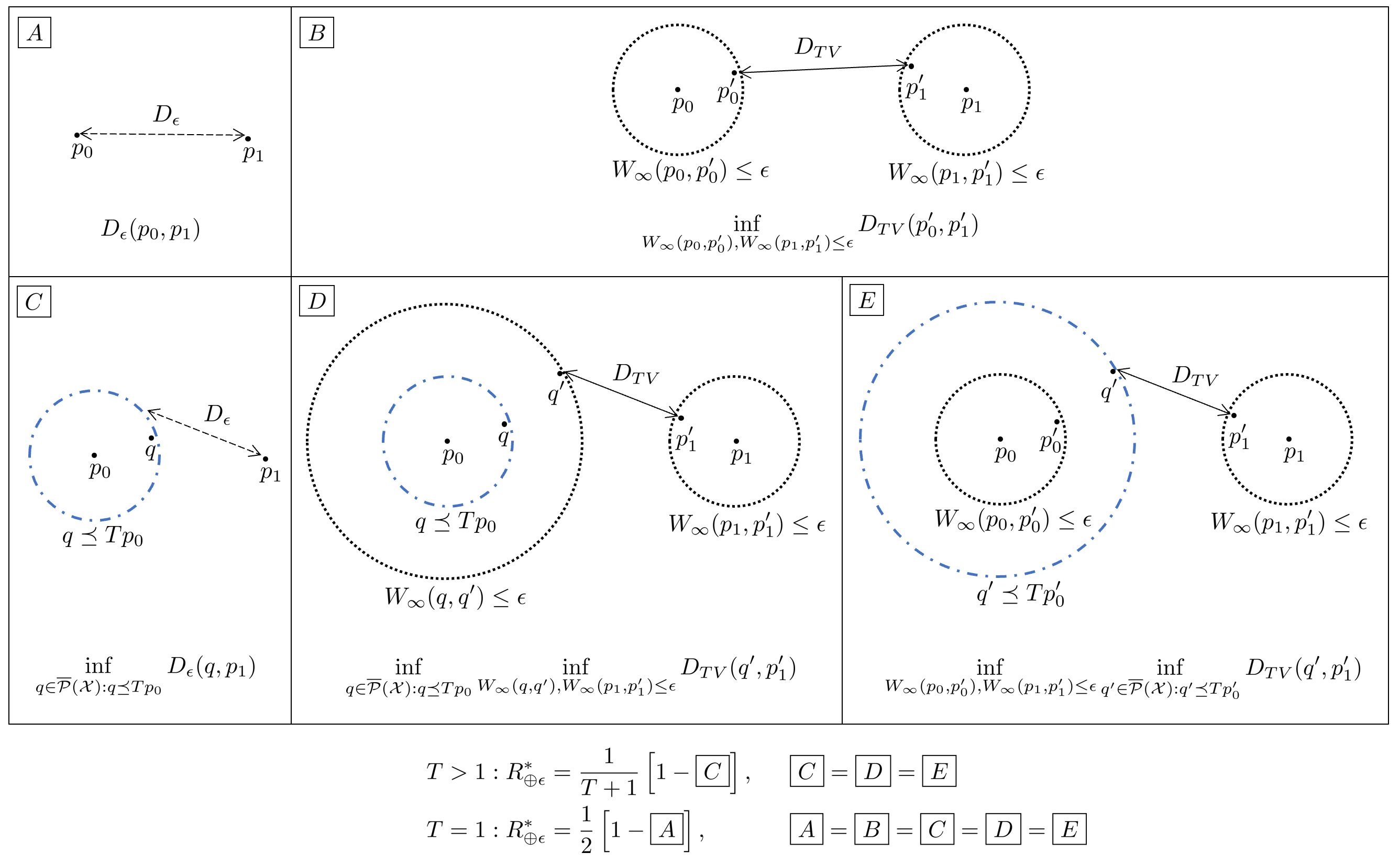}
    \caption{Illustration of various equivalent formulations of the optimal adversarial risk. The equalities summarize the results of Section~\ref{sec: strassen} and Section~\ref{sec: minimax}. For equal priors ($T=1$), $\boxed{A}$ and $\boxed{B}$ denote two ways of obtaining the optimal adversarial risk, $R^*_{\oplus\epsilon}$: 1) $\boxed{A}$, which denotes the $D_\epsilon$ cost between the true label distributions $p_0$ and $p_1$, and 2) $\boxed{B}$, which denotes the shortest total variation distance between $\infty$-Wasserstein balls of radius $\epsilon$ around $p_0$ and $p_1$. For unequal priors ($T>1$), $\boxed{C}, \boxed{D}$ and $\boxed{E}$ denote three equivalent ways of obtaining $R^*_{\oplus\epsilon}$. The black dotted balls denote $\infty$-Wasserstein balls and the blue dashed balls denote sets defined using stochastic domination. The order in which the two types of balls appear around $p_0$ is reversed between $\boxed{D}$ and $\boxed{E}$.}
    \label{fig: layered balls}
\end{figure}

Now, we prove the minimax equality for unequal priors.
\begin{theorem}[Minimax theorem for  unequal priors]\label{thm: minimax unequal priors}
Let $(\cX,d)$ have the midpoint property. Let $p_0, p_1\in \overline{\cP}(\cX)$ and let $\epsilon\geq 0$.
For $T>0$, define $r$ as in \eqref{eq: payoff function}.
Then
\begin{align}
    \sup_{W_\infty(p_0, p_0'), W_\infty(p_1, p_1')\leq \epsilon}
    \inf_{A\in \cB(\cX)}  r(A, p_0', p_1')
    = 
    \inf_{A\in \cB(\cX)} 
    \sup_{W_\infty(p_0, p_0'), W_\infty(p_1, p_1')\leq \epsilon}  r(A, p_0', p_1').
\end{align}
\end{theorem}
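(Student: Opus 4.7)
The plan is to mirror the argument used for the equal-priors case in Theorem~\ref{thm: minimax equal priors}, but with $D_\epsilon$ and $D_{TV}$ replaced throughout by their ``stochastically dominated'' versions inherited from Theorem~\ref{thm: adv_risk as D_ep}. The bridge between the two sides of the minimax equality will then be Lemma~\ref{lem: layered balls} (the unequal-mass analogue of Theorem~\ref{thm: D_ep as shortest D_TV}) in place of Theorem~\ref{thm: D_ep as shortest D_TV} itself.

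For the right-hand side, I would first note that the inner supremum is exactly $R_{\Gamma_\epsilon}(\ell_{0/1}, A)$, which equals $R_{\oplus\epsilon}(\ell_{0/1}, A)$ by Theorem~\ref{thm: Adv risk in Polish Space}. Then Theorem~\ref{thm: adv_risk as D_ep} rewrites the $\inf$-$\sup$ as
\begin{align*}
    \inf_{A\in \cB(\cX)} \sup_{W_\infty(p_0, p_0'), W_\infty(p_1, p_1')\leq \epsilon}  r(A, p_0', p_1')
    = \frac{1}{T+1}\bigl[\,1 - \inf_{q\in \cP(\cX):\, q\preceq Tp_0} D_\epsilon(q, p_1)\,\bigr].
\end{align*}

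For the left-hand side, I would apply Theorem~\ref{thm: adv_risk as D_ep} with $\epsilon = 0$ to the inner infimum over $A$ (evaluated at the perturbed pair $(p_0', p_1')$), which produces
\begin{align*}
    \inf_{A\in \cB(\cX)}  r(A, p_0', p_1')
    = \frac{1}{T+1}\bigl[\,1 - \inf_{q'\in \cP(\cX):\, q'\preceq Tp_0'} D_{TV}(q', p_1')\,\bigr].
\end{align*}
Taking the supremum over the $W_\infty$-neighborhoods of $p_0$ and $p_1$ turns maximization of the bracket into minimization of the inner double infimum, so
\begin{align*}
    \sup_{W_\infty(p_0, p_0'), W_\infty(p_1, p_1')\leq \epsilon}
    \inf_{A\in \cB(\cX)} r(A, p_0', p_1')
    = \frac{1}{T+1}\bigl[\,1 - \inf_{W_\infty(p_0, p_0'), W_\infty(p_1, p_1')\leq \epsilon}\,\inf_{q'\preceq Tp_0'} D_{TV}(q', p_1')\,\bigr].
\end{align*}

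The two sides now agree exactly when
\begin{align*}
    \inf_{q\preceq Tp_0} D_\epsilon(q, p_1)
    = \inf_{W_\infty(p_0, p_0'), W_\infty(p_1, p_1')\leq \epsilon}\,\inf_{q'\preceq Tp_0'} D_{TV}(q', p_1'),
\end{align*}
which is precisely the second equality in Lemma~\ref{lem: layered balls}. Invoking that lemma closes the argument. The main obstacle is the subtle reordering of the two kinds of ``balls'' around $p_0$ (the $W_\infty$-ball and the stochastic-domination constraint $q'\preceq Tp_0'$); this is exactly the content of Lemma~\ref{lem: layered balls}, which, having been established separately (and relying on the midpoint property via Theorem~\ref{thm: D_ep as shortest D_TV}), allows the two orderings to be interchanged without loss. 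Everything else in the proof is a direct bookkeeping application of results already proved in Sections~\ref{sec: W_infty} and~\ref{sec: strassen}.
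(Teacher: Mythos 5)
Your proposal is correct and matches the paper's proof essentially step for step: rewrite the $\inf$-$\sup$ via Theorem~\ref{thm: Adv risk in Polish Space} and Theorem~\ref{thm: adv_risk as D_ep}, rewrite the $\sup$-$\inf$ via Theorem~\ref{thm: adv_risk as D_ep} with $\epsilon=0$, and bridge the two with the second equality of Lemma~\ref{lem: layered balls}. The only detail you omit is the initial reduction to $T\geq 1$ (by swapping the labels if $T<1$), which the paper needs because Lemma~\ref{lem: layered balls} and the unbalanced-transport form of Theorem~\ref{thm: adv_risk as D_ep} are stated under $T\geq 1$ so that $p_1(\cX)\leq Tp_0(\cX)$.
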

\begin{proof}
Without loss of generality, we assume $T\geq 1$. (If $T<1$, we simply repeat the proof with labels $0$ and $1$ swapped.) We have

\begin{align*}
    \inf_{A\in \cB(\cX)} \sup_{\substack{W_\infty(p_0, p_0')\leq \epsilon\\W_\infty(p_1, p_1')\leq \epsilon}}  r(A, p_0', p_1')
    &= \inf_{A\in \cB(\cX)} R_{\Gamma_\epsilon}(\ell_{0/1}, A)\\
    &\stackrel{(i)}{=} \inf_{A\in \cB(\cX)} R_{\oplus\epsilon}(\ell_{0/1}, A)\\
    &\stackrel{(ii)}{=} \frac{1}{T+1} \left[1- \inf_{\substack{q\in \overline{\cP}(\cX):\\ q\preceq Tp_0}} D_\epsilon(p_0, p_1) \right]\\
    &\stackrel{(iii)}{=} \frac{1}{T+1} \left[1-     
    \inf_{\substack{W_\infty(p_0, p_0')\leq \epsilon\\W_\infty(p_1, p_1')\leq \epsilon}} 
    \inf_{\substack{q'\in \overline{\cP}(\cX):\\ q'\preceq Tp_0'}}
    D_{TV}(q', p_1') \right]\\    
    &= \sup_{\substack{W_\infty(p_0, p_0')\leq \epsilon\\W_\infty(p_1, p_1')\leq \epsilon}}
    \frac{1}{T+1} 
    \left[1- \inf_{\substack{q'\in \overline{\cP}(\cX):\\ q'\preceq Tp_0'}} D_{TV}(q', p_1') \right]\\
    &\stackrel{(iv)}{=} \sup_{\substack{W_\infty(p_0, p_0')\leq \epsilon\\W_\infty(p_1, p_1')\leq \epsilon}}
    \inf_{A\in \cB(\cX)}  r(A, p_0', p_1'),    
\end{align*}
where (i) follows from Theorem~\ref{thm: Adv risk in Polish Space}, (ii)  from Theorem~\ref{thm: adv_risk as D_ep}, (iii)  from Lemma~\ref{lem: layered balls} and (iv) follows again from Theorem~\ref{thm: adv_risk as D_ep} with $\epsilon=0$.
\end{proof}

\begin{remark*}
Unlike Theorem~\ref{thm: minimax theorem in R^d}, Theorems~\ref{thm: minimax equal priors} and \ref{thm: minimax unequal priors} do not guarantee the existence of an optimal decision region $A^*$. 
While Theorem~\ref{thm: minimax equal priors} guarantees the existence of worst-case pair of perturbed distributions $p_0^*, p_1^*$, Theorem~\ref{thm: minimax unequal priors} does not do so. Nevertheless, a $\delta$-approximate pure Nash equilibrium exists in all the cases. This is in sharp contrast with the non-existence of Nash equilibrium proven in  \cite{PinEtal20}. The result of \cite{PinEtal20} is valid for a ``regularized'' adversary, where the point-wise budget constraint $d(x,x')\leq\epsilon$ is replaced with a regularization term added to the adversarial risk formulation. Our Nash equilibrium result holds for the standard formulation of adversarial risk as in \cite{MadEtal18, ShaEtal18}, without the need for a  regularization term.
\end{remark*}

\begin{remark*}
A recent work \cite{MeuEtal21} shows the existence of mixed Nash equilibrium  for randomized classifiers parametrized by points in a Polish space. Other works \cite{PinEtal20, BosEtal20} consider a similar setup, but with a ``regularized'' adversary. The equilibrium analysis in these works uses Fan's minimax theorem with concave-convex condition. Since we consider non-parametric classifiers represented by arbitrary decision regions, Fan's theorem is inapplicable in our setting. Instead, we use tools from Huber's 2-alternating capacities for $\real^d$, and the generalized Strassen's duality theorem for general Polish spaces. 
The connection with Huber's capacities (which we prove in Lemma~\ref{lem: set expansion is 2-alternating capacity}) and the generalization of Strassen's theorem (Theorem~\ref{thm: generalized strassen}) are both novel to the best of our knowledge.
\end{remark*}

\section{Discussion}\label{sec: discussion}
We examined different notions of adversarial risk and laid down the conditions under which these definitions are equivalent. 
By verifying the conditions in Sections~\ref{sec: well-defined} and $\ref{sec: W_infty}$, researchers may use different definitions interchangeably. 

We analyzed optimal adversarial risk for (non-parametric) decision region-based classifiers. Using a formulation of optimal transport between finite measures of unequal mass, we extended the optimal transport based characterization of adversarial risk of \cite{PydJog20, BhaEtal19} to unequal priors by generalizing Strassen's theorem. This may find  applications in the study of excess cost optimal transport \cite{YuTan18, Yu19}. A recent work \cite{TriMur20} obtains a different characterization of optimal adversarial risk using optimal transport on the product space $\cX\times\cY$ where $\cY$ is the label space. Further, they show the evolution of the optimal classifier $A^*$ as $\epsilon$ grows, in terms of a mean curvature flow. This raises an interesting  question on the evolution of the optimal adversarial distributions $p_0^*, p_1^*\in \overline{\cP}(\cX)$ with $\epsilon$. 

We proved a minimax theorem for adversarial robustness game and the existence of a Nash equilibrium. We constructed the worst-case pair of distributions $p_0^*, p_1^*\in \overline{\cP}(\cX)$ in terms of true data distributions and showed that their total variation distance  gives the optimal adversarial risk. Identifying worst case distributions could lead to a new approach to developing robust algorithms.

We used Choquet capacities for results in $\real^d$ and measurable selections in Polish spaces. Specifically, we showed that the measure of $\epsilon$-Minkowski expansion is a $2$-alternating capacity. This connection could help generalize our results to total variation and Prokhorov distance based contaminations. 

We largely focused on the binary classification setup with $0$-$1$ loss function. While we extended our results on measurability and relation to $\infty$-Wasserstein distributional robustness to more general loss functions and a multi-class setup, it is unclear how our results on generalized Strassen's theorem and Nash equilibria can be extended further. Our results on various equivalent formulations of optimal adversarial risk are specific to adversarial perturbations (or equivalently, $\infty$-Wasserstein distributional perturbations). 
An interesting open question is whether these results hold for more general perturbation models. 

\section*{Acknowledgements}

The authors acknowledge support from NSF grants CCF-1841190 and CCF-1907786, and from the University of Cambridge. The authors also thank anonymous reviewers for their insightful comments on a version of this paper that was presented at NeurIPS 2021~\cite{PydiJog21}.

\bibliographystyle{plain}
\bibliography{ref}


\appendix

\section{Preliminary Lemmas}\label{app: preliminary lemmas}

\begin{lemma}\label{lem: Minkowski sum of union}
Let $A_n\in \cB(\cX)$ for $n\in \{1, 2, \ldots\}$. Then, 
\begin{align*}
    (\cup_n A_n)^{\oplus\epsilon} = \cup_n A_n^{\oplus\epsilon},\\
    (\cap_n A_n)^{\oplus\epsilon} \subseteq \cap_n A_n^{\oplus\epsilon}.    
\end{align*}
\end{lemma}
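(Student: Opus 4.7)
The plan is to unwrap the definition $A^{\oplus\epsilon} = \bigcup_{a \in A} B_\epsilon(a)$ and chase a single point $x$ through each claimed equality or inclusion. Both assertions are purely set-theoretic, with no measurability or topology involved, so the argument should reduce to two short biconditional-style manipulations based on what it means to be within distance $\epsilon$ of some point of the indexed family.

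For the first identity, I would argue by double inclusion, or more efficiently by a chain of equivalences. Given $x \in \cX$, note that $x \in (\bigcup_n A_n)^{\oplus\epsilon}$ iff there exists $a \in \bigcup_n A_n$ with $d(x,a) \le \epsilon$, iff there exist $n$ and $a \in A_n$ with $d(x,a) \le \epsilon$, iff there exists $n$ such that $x \in A_n^{\oplus\epsilon}$, iff $x \in \bigcup_n A_n^{\oplus\epsilon}$. The key move is swapping the two existential quantifiers ``$\exists a$'' and ``$\exists n$,'' which is always permissible.

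For the second assertion, only one inclusion holds in general, and I would prove it directly. If $x \in (\bigcap_n A_n)^{\oplus\epsilon}$, then there is a single witness $a$ lying in every $A_n$ simultaneously with $d(x,a) \le \epsilon$; this single $a$ certifies $x \in A_n^{\oplus\epsilon}$ for each $n$, so $x \in \bigcap_n A_n^{\oplus\epsilon}$. The one-sidedness comes precisely because the reverse direction would require choosing a common witness from a possibly distinct family of $a_n \in A_n$ for each $n$, which need not lie in the intersection; this is exactly the ``$\forall \exists$ vs.\ $\exists \forall$'' obstruction.

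There is no real obstacle here. The only care needed is to state the argument cleanly as quantifier swapping, and to remark (perhaps in passing) why the reverse inclusion for intersections fails in general, so that the asymmetry in the statement is understood. The proof should fit in a few lines.
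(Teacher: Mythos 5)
Your proposal is correct and follows essentially the same elementary point-chasing argument as the paper: unwind the definition of $A^{\oplus\epsilon}$, swap the existential quantifiers for the union identity, and use the single common witness in $\cap_n A_n$ for the intersection inclusion. The paper writes the union case as two explicit inclusions rather than a chain of equivalences, but the content is identical.
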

\begin{proof}
Suppose $a\in (\cup_n A_n)^{\oplus\epsilon}$. Then there exists $a_i\in A_i$ for some $i\in \{1, 2, \ldots\}$ such that $d(a,a_i)\leq \epsilon$. Hence, $a\in A_i^{\oplus\epsilon}\subseteq \cup_n A_n^{\oplus\epsilon}$. Therefore, $(\cup_n A_n)^{\oplus\epsilon} \subseteq \cup_n A_n^{\oplus\epsilon}$.

Suppose $b\in \cup_n A_n^{\oplus\epsilon}$. Then $b\in A_j^{\oplus\epsilon}$ for some $j\in \bN$. So there must exist $b'\in A_j$ such that $d(b,b')\leq \epsilon$. Since $b'\in \cup_n A_n$, we get that $b\in (\cup_n A_n)^{\oplus\epsilon}$. Therefore, $\cup_n A_n^{\oplus\epsilon}\subseteq (\cup_n A_n)^{\oplus\epsilon}$.

Suppose $c\in (\cap_n A_n)^{\oplus\epsilon}$. Then there exists $c'\in \cap_n A_n$  such that $d(c, c')\leq \epsilon$.
Since $c'\in A_n$ for all $n\in \{1, 2, \ldots\}$, $c\in A_n^{\oplus\epsilon}$ for all  $n\in \{1, 2, \ldots\}$. Hence, $c\in \cap_n A_n^{\oplus\epsilon}$. Therefore, $(\cap_n A_n)^{\oplus\epsilon} \subseteq \cap_n A_n^{\oplus\epsilon}$.
\end{proof}

\begin{lemma}\label{lem: convergence of closed sets}
Let $(F_n)$ be a sequence of closed sets  in $\cX$ such that $F_k\supseteq F_{k+1}$ for $k\in \N$. Then,
\begin{align*}
    (\cap_n F_n)^{\oplus\epsilon} = \cap_n F_n^{\oplus\epsilon}.
\end{align*}
\end{lemma}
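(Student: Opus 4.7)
The plan is to prove the two inclusions separately. The inclusion $(\cap_n F_n)^{\oplus\epsilon} \subseteq \cap_n F_n^{\oplus\epsilon}$ is immediate from the second part of Lemma~\ref{lem: Minkowski sum of union} (which holds for any family of Borel sets, with no monotonicity or closedness required). So the content of the lemma lies in the reverse inclusion $\cap_n F_n^{\oplus\epsilon} \subseteq (\cap_n F_n)^{\oplus\epsilon}$, and I will focus my effort there.

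To prove the reverse inclusion, I would fix an arbitrary $x \in \cap_n F_n^{\oplus\epsilon}$ and try to produce a witness $y \in \cap_n F_n$ with $d(x,y) \leq \epsilon$. By definition of $F_n^{\oplus\epsilon}$, for each $n$ there exists $x_n \in F_n$ with $d(x, x_n) \leq \epsilon$, so the sequence $(x_n)_{n\ge 1}$ lies in the closed ball $B_\epsilon(x)$. Invoking compactness of closed balls---valid in $\real^d$, or more generally in any proper metric space---I would pass to a convergent subsequence $x_{n_j} \to y$ as $j\to\infty$.

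Next, I would use the monotonicity $F_k \supseteq F_{k+1}$ to upgrade $y$ to an element of every $F_k$. For any fixed $k$, all terms $x_{n_j}$ with $n_j \geq k$ lie in $F_{n_j} \subseteq F_k$; since $F_k$ is closed, the limit $y$ belongs to $F_k$. Because $k$ was arbitrary, $y \in \cap_k F_k$. Continuity of $d(x,\cdot)$ gives $d(x,y) = \lim_j d(x, x_{n_j}) \leq \epsilon$, whence $x \in (\cap_k F_k)^{\oplus\epsilon}$, as desired.

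The main obstacle is the subsequence extraction, which relies on closed balls being compact. This is free in $\real^d$ but can fail in a general Polish space (e.g.\ in $\ell^2$, take $F_n = \{e_k : k \geq n\}$: these sets are closed and decreasing with $\cap_n F_n = \varnothing$, yet $0 \in F_n^{\oplus 1}$ for every $n$, violating the conclusion). So for the lemma to hold as stated, one should restrict to proper metric spaces such as $\real^d$, or impose an implicit boundedness/total-boundedness condition on the $(F_n)$ that makes the compactness argument go through; in the applications within the paper, either the ambient space is $\real^d$ or the sequence is effectively contained in a compact set.
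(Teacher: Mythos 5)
Your argument follows the same route as the paper's own proof: for $x \in \cap_n F_n^{\oplus\epsilon}$, pick $x_n \in F_n$ with $d(x,x_n) \le \epsilon$, extract a convergent subsequence from this bounded sequence, and use closedness and nesting of the $F_n$ to show the limit lies in every $F_k$. The forward inclusion via Lemma~\ref{lem: Minkowski sum of union} is also handled identically.

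Beyond that, you have put your finger on a genuine flaw in the lemma as stated. The paper's proof asserts ``since $(x_n)$ is a bounded sequence, it has a subsequence that converges,'' but the Bolzano--Weierstrass property is not available in an arbitrary Polish space; one needs the space to be proper (closed balls compact). Your $\ell^2$ counterexample is correct: with $F_n = \{e_k : k \ge n\}$, each $F_n$ is closed (the $e_k$ are uniformly separated), the sequence is decreasing, $\cap_n F_n = \varnothing$ so $(\cap_n F_n)^{\oplus 1} = \varnothing$, yet $\|e_k\| = 1$ puts $0$ in every $F_n^{\oplus 1}$. So the equality fails, and the lemma as written for general $\cX$ is false. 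The paper's own proof implicitly commits the same error you flagged. Fortunately the lemma is invoked only inside Lemma~\ref{lem: set expansion is 2-alternating capacity}, whose hypothesis already restricts to $(\real^d, \|\cdot\|)$, so nothing downstream breaks --- but the lemma's hypotheses should be tightened to proper metric spaces (or $\real^d$). Identifying that is a real contribution beyond what the paper does.
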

\begin{proof}
Suppose $x\in (\cap_n F_n)^{\oplus\epsilon}$. Then there exists $x'\in \cap_n F_n$ such that $d(x,x')\leq\epsilon$. 
Since $x'\in F_n$ for all $n\in \bN$, $x\in F_n^{\oplus\epsilon}$ for all $n\in \bN$. 
Hence, $x\in \cap_n F_n^{\oplus\epsilon}$ and therefore $(\cap_n F_n)^{\oplus\epsilon} \subseteq \cap_n F_n^{\oplus\epsilon}$. 
We will now show the set inclusion in the opposite direction.

Let $x\in \cap_n F_n^{\oplus\epsilon}$. Then $x\in F_n^{\oplus\epsilon}$ for all $n\in \bN$. Hence, there exists $x_n\in F_n$ such that $d(x,x_n)\leq \epsilon$ for all $n\in \bN$.
Since $(x_n)$ is a bounded sequence, it has a subsequence $(x_{n_k})$ that converges to some $x^*$.
We claim that $x^*\in F \defn \cap_n F_n$. Indeed, for any $m\in \bN$, the tail of the subsequence $(x_{n_k})$ with indices greater than $m$ is contained in $F_m$. Since $F_m$ is closed, $x^*$ must be in $F_m$. Since the choice of $m$ was arbitrary, $x^*\in \cap_m F_m = F$.
Hence, $x\in F^{\oplus\epsilon}$ because $d(x,x^*)\leq \epsilon$. Therefore, 
$\cap_n F_n^{\oplus\epsilon} \subseteq F^{\oplus\epsilon}$.
\end{proof}

\begin{lemma}\label{lem: closed set}
Let $A\in\cB(\cX)$. Let $(\gamma_n)_{n=1}^\infty$ be a non-negative, monotonically decreasing sequence converging to $0$. Let $\overline{A}$ denote the closure of $A$ in $\cX$. Then, $A^{\gamma_n} \downarrow \overline{A}$.
\end{lemma}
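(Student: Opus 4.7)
The plan is to unpack the notation $A^{\gamma_n} \downarrow \overline{A}$ into its two constituents: (i) the sequence $(A^{\gamma_n})_{n \geq 1}$ is nested decreasing, and (ii) $\bigcap_n A^{\gamma_n} = \overline{A}$. Both parts should follow from the explicit description $A^{\epsilon} = \{x \in \cX : d(x,A) \leq \epsilon\}$ together with the standard fact that the distance function $x \mapsto d(x,A)$ is continuous and vanishes precisely on $\overline{A}$.

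For the monotonicity step, I would simply observe that if $\gamma_{n+1} \leq \gamma_n$ and $x \in A^{\gamma_{n+1}}$, then $d(x,A) \leq \gamma_{n+1} \leq \gamma_n$, so $x \in A^{\gamma_n}$. This uses nothing beyond the definition.

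For the intersection step I would argue both inclusions. For $\overline{A} \subseteq \bigcap_n A^{\gamma_n}$: any $x \in \overline{A}$ satisfies $d(x,A) = 0$, and since $\gamma_n \geq 0$ for every $n$, we have $x \in A^{\gamma_n}$ for every $n$. For the reverse inclusion $\bigcap_n A^{\gamma_n} \subseteq \overline{A}$: if $x \in A^{\gamma_n}$ for every $n$, then $d(x,A) \leq \gamma_n$ for every $n$; letting $n \to \infty$ and using $\gamma_n \downarrow 0$ gives $d(x,A) = 0$, i.e. $x \in \overline{A}$.

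There is no real obstacle here — the statement is a direct consequence of the characterization of closure via the distance function, and the only ingredient beyond definitions is the elementary fact that a non-negative real number bounded above by every term of a null sequence must equal zero. The entire proof should fit in a few lines, and no appeal to earlier lemmas in the paper (such as Lemma~\ref{lem: convergence of closed sets} on intersections of closed expansions of a nested family) is needed, since here it is the radii rather than the underlying sets that are varying.
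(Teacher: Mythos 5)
Your proof is correct and follows essentially the same route as the paper's: both arguments reduce the claim to the characterization of the closed expansion via the distance function, showing $\overline{A} \subseteq A^{\gamma_n}$ for every $n$ and that any point in every $A^{\gamma_n}$ has distance zero to $A$, hence lies in $\overline{A}$. Your version is slightly cleaner in that it works with $d(x,A)$ directly and makes the monotonicity of the sets explicit, but the substance is identical.
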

\begin{proof}
We know $\overline A \subseteq \overline A^{\gamma_n} = A^{\gamma_n}$ for all $n$. Hence $\overline A\subseteq \lim_{n\to \infty} \bigcap_{k=1}^n A^{\gamma_k}$.

Suppose $x \in \lim_{n\to \infty} \bigcap_{k=1}^n A^{\gamma_k}$. Then it must be that $d(x, \overline A) = 0$ because otherwise $x$ would not lie in $A^{\gamma_n}$ for all large enough $n$. Since $d(x, \overline A) = 0$, we can find a sequence of points in $\overline A$ that tend to $x$. But since $\overline A$ is closed, we must have $x \in \overline A$. Hence, $\lim_{n\to \infty} \bigcap_{k=1}^n A^{\gamma_k} \subseteq \overline A$.
\end{proof}

\begin{lemma}\label{lem: set expansion property}
Let $\epsilon_1>\epsilon_2>0$ and $A\in \cB(\cX)$. Then for any $\delta\in (0, \epsilon_1-\epsilon_2)$, $A^{\epsilon_1-\epsilon_2-\delta}\subseteq (A^{\epsilon_1})^{-\epsilon_2}$.
\end{lemma}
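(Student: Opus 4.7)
The plan is to unwind the definition of $(A^{\epsilon_1})^{-\epsilon_2}$ and reduce the claim to a one-line application of the triangle inequality. By the notation of Section~\ref{sec: metric space topology}, $(A^{\epsilon_1})^{-\epsilon_2} = \bigl(((A^{\epsilon_1})^c)^{\epsilon_2}\bigr)^c$, so showing $x \in (A^{\epsilon_1})^{-\epsilon_2}$ is equivalent to showing $d\bigl(x, (A^{\epsilon_1})^c\bigr) > \epsilon_2$. The approach is therefore: fix $x \in A^{\epsilon_1 - \epsilon_2 - \delta}$ and any $y \in (A^{\epsilon_1})^c$, and produce a strict lower bound on $d(x,y)$ that exceeds $\epsilon_2$ uniformly in $y$.

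The key step is to exploit the two defining inequalities: $d(x,A) \le \epsilon_1 - \epsilon_2 - \delta$ (because $x \in A^{\epsilon_1-\epsilon_2-\delta}$) and $d(y,A) > \epsilon_1$ (because $y \notin A^{\epsilon_1}$). From $d(y,A) \le d(y,x) + d(x,A)$ we get
\begin{equation*}
d(x,y) \;\ge\; d(y,A) - d(x,A) \;>\; \epsilon_1 - (\epsilon_1 - \epsilon_2 - \delta) \;=\; \epsilon_2 + \delta.
\end{equation*}
Taking the infimum over $y \in (A^{\epsilon_1})^c$ gives $d(x,(A^{\epsilon_1})^c) \ge \epsilon_2 + \delta > \epsilon_2$, whence $x \notin ((A^{\epsilon_1})^c)^{\epsilon_2}$ and hence $x \in (A^{\epsilon_1})^{-\epsilon_2}$, as desired.

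There is no real obstacle here: the statement is essentially a quantitative form of the inclusion $A^{a+b} \subseteq (A^{a})^{b}$, run in reverse through complements, and the strict slack $\delta > 0$ is exactly what converts the ``$>$'' in $d(y,A) > \epsilon_1$ into a usable strict inequality against $\epsilon_2$. The only minor point to note is that $\delta \in (0, \epsilon_1 - \epsilon_2)$ guarantees $\epsilon_1 - \epsilon_2 - \delta \ge 0$, so $A^{\epsilon_1 - \epsilon_2 - \delta}$ is well-defined, and no measurability worries arise since $A^{\epsilon_1-\epsilon_2-\delta}$ and $(A^{\epsilon_1})^{-\epsilon_2}$ are both Borel by the closedness/openness observations in Section~\ref{sec: metric space topology}.
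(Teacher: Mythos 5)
Your proof is correct and mirrors the paper's argument almost line for line: both unwind $(A^{\epsilon_1})^{-\epsilon_2}$ to the condition $d(x,(A^{\epsilon_1})^c) > \epsilon_2$, apply the triangle inequality to $d(x,A)\le \epsilon_1-\epsilon_2-\delta$ and $d(y,A)>\epsilon_1$, and take the infimum over $y\in (A^{\epsilon_1})^c$. There is nothing to add.
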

\begin{proof}
Recall that for $\epsilon>0$, $A^{-\epsilon} = ((A^c)^\epsilon)^c$. From the definition, $x\in A^{-\epsilon}$ if and only if $d(x, A^c)>\epsilon$.

Let $\delta\in (0, \epsilon_1-\epsilon_2)$ and $x\in A^{\epsilon_1-\epsilon_2-\delta}$. Then, $d(x,A)\leq \epsilon_1-\epsilon_2-\delta$. Consider any $y\in (A^{\epsilon_1})^c$. Then, $d(y, A)>\epsilon_1$. By the triangle inequality,
\begin{align*}
d(x,y) \geq d(y, A) - d(x, A) > \epsilon_1 - (\epsilon_1-\epsilon_2-\delta) = \epsilon_2+\delta.
\end{align*}
Hence, 
\begin{align*}
d(x, (A^{\epsilon_1})^c) = \inf_{y\in (A^{\epsilon_1})^c} d(x,y) \geq \epsilon_2+\delta >\epsilon_2.
\end{align*}
Therefore, $x\in (A^{\epsilon_1})^{-\epsilon_2}$. 
\end{proof}

\begin{lemma}\label{lem: oplus expansion as sup}
Let $A\in \cB(\cX)$. Then, $\1\{x\in A^{\oplus\epsilon}\} = \sup_{x'\in B_\epsilon(x)} \1\{x'\in A\}$.
\end{lemma}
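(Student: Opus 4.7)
The plan is to unpack both sides as indicator conditions and show they describe the same event, namely that $A$ meets the closed $\epsilon$-ball around $x$.

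First I would recall that $A^{\oplus\epsilon} = \bigcup_{a \in A} B_\epsilon(a)$, so by definition $x \in A^{\oplus\epsilon}$ iff there exists $a \in A$ with $x \in B_\epsilon(a)$, i.e.\ $d(x,a) \leq \epsilon$. Since the metric $d$ is symmetric, $d(x,a) \leq \epsilon$ is equivalent to $a \in B_\epsilon(x)$. Hence $x \in A^{\oplus\epsilon}$ if and only if $A \cap B_\epsilon(x) \neq \varnothing$.

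Next I would analyze the right-hand side: $\sup_{x' \in B_\epsilon(x)} \1\{x' \in A\}$ takes values in $\{0,1\}$, and equals $1$ precisely when there exists some $x' \in B_\epsilon(x)$ with $x' \in A$, i.e.\ $B_\epsilon(x) \cap A \neq \varnothing$; otherwise it is $0$. This is the same event as characterized above, so the two indicator quantities agree pointwise in $x$.

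There is no real obstacle here; the lemma is essentially a tautological consequence of the definition of the Minkowski expansion combined with the symmetry of the metric. The only mild subtlety worth flagging is that one should use closed balls on both sides (which matches the convention $B_r(x)$ fixed in the notation section), so that the inequalities $d(x,a) \leq \epsilon$ match up correctly; with open balls and strict inequalities the argument is identical.
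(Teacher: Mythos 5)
Your proof is correct, and it takes a cleaner route than the paper's on one direction. You reduce both sides to the indicator of the single event $A\cap B_\epsilon(x)\neq\varnothing$, using symmetry of $d$ and the observation that a supremum of $\{0,1\}$-valued terms over the (nonempty) ball $B_\epsilon(x)$ equals $1$ exactly when some term equals $1$, so attainment is automatic. The paper proves the implication $\sup_{x'\in B_\epsilon(x)}\1\{x'\in A\}=1 \Rightarrow x\in A^{\oplus\epsilon}$ by extracting a sequence $(x_n)$ in $B_\epsilon(x)\cap A$, passing to a convergent subsequence, and placing the limit $x^*$ in $A$ with $d(x,x^*)\leq\epsilon$. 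That detour is unnecessary precisely because the indicator is $\{0,1\}$-valued, and as written it implicitly uses compactness of closed balls and closedness of $A$, neither of which holds for an arbitrary Borel set in a general Polish space; your argument needs neither, so it is both more elementary and more robust. The one point you handled correctly and should keep explicit is the closed-ball convention: with the paper's closed $B_\epsilon(x)$ and the Minkowski expansion built from closed balls, the two inequalities $d(x,a)\leq\epsilon$ and $a\in B_\epsilon(x)$ match exactly, which is all the equivalence requires.
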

\begin{proof}
Suppose $x\in A^{\oplus\epsilon}$. 
Then there exists $x'\in A$ such that $x'\in B_\epsilon(x)$. Hence, $\sup_{x'\in B_\epsilon(x)} \1\{x'\in A\} = 1$.

Suppose $x\in \cX$ is such that $\sup_{x'\in B_\epsilon(x)} \1\{x'\in A\} = 1$. Then there is a sequence $(x_n)_{n=1}^\infty$ such that $d(x,x_n)\leq\epsilon$ and $x_n\in A$ for all $n$. Since $(x_n)$ is a bounded sequence in a closed set, $B_\epsilon(x)$, it has a subsequence that converges to some $x^*$ such that $d(x,x^*)\leq \epsilon$ and $x^*\in A$. Hence, $x\in B_\epsilon(x^*) \subseteq A^{\oplus\epsilon}$.
\end{proof}

\begin{lemma}\label{lem: level set expansion}
For any real-valued function $f:\cX\to \real$ and any $t\in \real$, 
\begin{align*}
    \left\{x\in \cX: \sup_{d(x,x')\leq \epsilon} f(x') > t\right\}
    = \{x\in \cX: f(x) >t \}^{\oplus\epsilon}.
\end{align*}
\end{lemma}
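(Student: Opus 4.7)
The plan is to prove the equality by a direct unraveling of definitions, showing mutual set inclusion. Let $A \defn \{x \in \cX : f(x) > t\}$, so that the right-hand side is $A^{\oplus\epsilon} = \bigcup_{a \in A} B_\epsilon(a)$. A point $x$ lies in $A^{\oplus\epsilon}$ if and only if there exists some $x' \in A$ with $d(x,x') \leq \epsilon$, i.e. there exists $x'$ with $d(x,x') \leq \epsilon$ and $f(x') > t$.

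For the left-hand side, I would first argue the equivalence: $\sup_{d(x,x')\leq\epsilon} f(x') > t$ if and only if there exists $x' \in B_\epsilon(x)$ with $f(x') > t$. The reverse direction is immediate since such an $x'$ forces the supremum to be at least $f(x') > t$. For the forward direction, if the supremum strictly exceeds $t$, then setting $\delta = (\sup - t)/2 > 0$ and using the defining property of supremum, there exists $x' \in B_\epsilon(x)$ with $f(x') > \sup - \delta > t$. Thus the left-hand side is precisely $\{x : \exists\, x' \in B_\epsilon(x),\, f(x') > t\}$.

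Comparing the two characterizations, both sets consist of those $x \in \cX$ for which there exists some $x'$ satisfying $d(x,x') \leq \epsilon$ and $f(x') > t$, which establishes the equality. There is no real obstacle here beyond the careful handling of the strict inequality in the supremum, which is the only non-trivial step and is resolved by the standard $\delta$-argument above. The result holds for arbitrary (not necessarily measurable) $f$ and arbitrary $t \in \real$.
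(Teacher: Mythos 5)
Your proof is correct and takes essentially the same route as the paper's, which also argues by mutual inclusion after unraveling the definitions of $A^{\oplus\epsilon}$ and of the supremum. The only cosmetic difference is that the paper invokes the least-upper-bound property directly (if $f(x')\leq t$ for all $x'\in B_\epsilon(x)$ then the supremum would be at most $t$), which also quietly covers the degenerate case where the supremum is $+\infty$ and your $\delta=(\sup-t)/2$ device is not literally defined.
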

\begin{proof}
Suppose $a\in \{x\in \cX: f(x) >t \}^{\oplus\epsilon}$. Then there exists $a'\in \cX$ such that $f(a') >t$ and $d(a,a')\leq \epsilon$. Hence, 
$\sup_{d(a,x')\leq \epsilon} f(x') \geq f(a') >t$.
Therefore, $a\in \left\{x\in \cX: \sup_{d(x,x')\leq \epsilon} f(x') > t\right\}$.

Suppose $b\in \left\{x\in \cX: \sup_{d(x,x')\leq \epsilon} f(x') > t\right\}$. Then there exists $b'\in \cX$ such that $f(b') >t$ and $d(b,b')\leq \epsilon$. Hence, $b\in \{x\in \cX:f(x) >t \}^{\oplus\epsilon}$.
\end{proof}

\section{Proofs from Section~\ref{sec: well-defined}}\label{app: well-defined}

\subsection{Proofs from Section~\ref{sec: well-defined 0-1 loss}}\label{app: well-defined 0-1 loss}

\begin{proof}[Proof of Lemma~\ref{lem: non-measurable}]
We prove the above statement by using a counterexample motivated from Example 2.4 in \cite{LuiEtal14}.
For any $\epsilon>0$, there exists a Borel measurable set $S\subseteq [-\epsilon, \epsilon]^2$ such that its projection onto the first coordinate is not Borel measurable (\cite{LuiEtal14}, Theorem 6.7.2 and Theorem 6.7.11 in \cite{Bog07}). That is,
$S\in \cB(\real^2)$ but $S_1 \defn \{x_1\in \real: (x_1, x_2)\in S\} \notin \cB(\real)$. 

Define a homeomorphism $\phi: \real^3\to\real^3$ as $\phi(x_1, x_2, x_3) \defn (x_1, x_2, \sqrt{\epsilon^2-x_2^2})$. $\phi$ maps the plane $[-\epsilon, \epsilon]^2\times \{0\}$ onto the half-cylinder, $\{(x_1, x_2, x_3)\in \real^3: x_1\in [-\epsilon, \epsilon], x_2^2+x_3^2 = \epsilon^2, x_3\geq 0\}$, of radius $\epsilon$.
Let $A \defn \phi(S \times \{0\})$.
Then $A \in \cB(\real^3)$ because $S\times\{0\}\in \cB(\real^3)$. We have the following equality.
\begin{align*}
    A^{\oplus\epsilon}\cap (\real\times\{0\}^2) = S_1 \times \{0\}^2
\end{align*}
Suppose $A^{\oplus\epsilon}\in \cB(\real^3)$. Then the above equality implies that $S_1\in \cB(\real)$ contradicting our choice of $S$. Hence, $A^{\oplus\epsilon}\notin \cB(\real^3)$.

\end{proof}

\begin{proof}[Proof of Lemma~\ref{lem: Borel set expansion}]
Recall that an analytic set is a continuous image of a Borel set in a Polish space. Although an analytic set need not be Borel measurable, it is always universally measurable, i.e., measurable with respect to any measure defined on a complete measure space \cite{BerShr96}.

We will now show that if $A\in \cB(\cX)$, then $A^{\oplus\epsilon}$ is an analytic set, thus showing that it is measurable in the complete measure space $(\cX, \overline{\cB}(\cX))$.

Define $D = \{(x,x')\in \cX^2: d(x,x')\leq\epsilon\}$. $D$ is Borel measurable because it is the preimage of the Borel set $(-\infty, \epsilon]$ under the Borel measurable function $d$. Define $f: D\to \real$ as $f(x,x') = -\1\{x'\in A\}$. For $c\in \real$, we have the following.
\begin{align*}
    \{(x,x')\in \cX^2: f(x,x')<c\}
    =
    \begin{cases}
    \phi & c\leq -1,\\
    (\cX\times A)\cap D & c\in (-1, 0],\\
    \cX^2 & c>0.
    \end{cases}
\end{align*}
Since $A\in \cB(\cX)$ and $D\in \cB(\cX^2)$, $(\cX\times A)\cap D\in \cB(\cX^2)$. Hence, by Definition 7.21 in \cite{BerShr96}, $f$ is a  lower semianalytic function. 
By Proposition 7.47 in \cite{BerShr96}, the function $f^*: \cX\to \real$ defined as $f^*(x) \defn \inf_{x'\in B_\epsilon(x)} f(x,x')$ is lower semianalytic. By Lemma~\ref{lem: oplus expansion as sup}, we have
\begin{align*}
    f^*(x) 
    = \inf_{x'\in B_\epsilon(x)} -\1\{x'\in A\}
    = - \sup_{x'\in B_\epsilon(x)} \1\{x'\in A\}
    = - \1\{x\in A^{\oplus\epsilon}\}.
\end{align*}
By Definition 7.21 in \cite{BerShr96}, it follows that $A^{\oplus\epsilon}$ is an analytic set. By Corollary 7.42.1 in \cite{BerShr96}, $A^{\oplus\epsilon}\in \overline{\cB}(\cX)$.

\end{proof}

\begin{proof}[Proof of Lemma~\ref{lem: A_ep_porous}]
Let $\beta = 1/4$. Take any $e\in E$. Since $E = A^{\epsilon}\backslash A^{\epsilon)}$, we have the following two implications: 1) $E\subseteq A^{\epsilon}$ which implies that $d(e, A)\leq\epsilon$, and 2) $E\cap A^{\epsilon)} = \varnothing$ which implies that $d(e, A)>\epsilon$. Combining the two implications, we get that $d(e, A) = \epsilon$. Hence, for every $r\in (0, \epsilon]$, there must exist an $a_r\in A$ such that $\epsilon\leq \|e-a_r\| < \epsilon + r/4$.
We pick an $x'\in \cX$ on the line segment joining $a_r$ and $x$ as follows.
\begin{align*}
    t &\defn \frac{r}{2\|e-a_r\|}, \\
    x' &\defn ta_r + (1-t)e.
\end{align*}
Since $ \|e-a_r\| \in [\epsilon, \epsilon+r/4)$ and $r\in (0, \epsilon]$, it is clear that $t\in (0, 1/2)$.
From the definition of $x'$, it follows that $\|x' - e\| = t\|e-a_r\|  = r/2$. 
We will now show that $B_{\beta r}(x')\subseteq B_r(e)\backslash E$.
For any $y\in B_{\beta r}(x')$, we have the following.
\begin{align*}
    \|y - e\| \leq \|y-x'\| + \|x'-e\| \leq \beta r + r/2 < r.
\end{align*}
Hence, $y\in B_r(e)$. Moreover,
\begin{align*}
    \|y - a_r\| \leq \|y-x'\| + \|x'-a_r\| \leq \beta r + (\|e-a_r\| - r/2) < \epsilon.
\end{align*}
Hence, $y\in A^{\epsilon)}$ and so $y\notin E$. Therefore,  $B_{\beta r}(x') \subseteq B_r(e)\backslash E$. Hence, we have the following property (call it $(*)$): For any $e\in E$ and any $r\in (0, \epsilon]$, there is an $x'\in \cX$ such that $B_{\beta r}(x') \subseteq B_r(e)\backslash E$.
The property $(*)$ is depicted in Figure~\ref{fig: property}.

Let $\alpha = \beta(1-\beta)$.
Take any $x\in \cX$ and $r\in (0, \epsilon]$. We will now show that there exists $x'\in \cX$  such that $B_{\alpha r}(x')\subseteq B_r(x)\backslash E$.

Suppose $x\in E$. Then by the property $(*)$, there exists $x'\in \cX$ such that $B_{\alpha r}(x') \subseteq B_{\beta r}(x') \subseteq B_r(x)\backslash E$.
Suppose on the other hand $x\notin E$. If $B_{\beta r}(x)\cap E = \varnothing$, then  choosing $x'=x$ we have $B_{\alpha r}(x') \subseteq B_{\beta r}(x') \subseteq B_r(x)\backslash E$. If not, then there exists $e\in B_{\beta r}(x)\cap E$. We claim that $B_{(1-\beta)r}(e)\subseteq B_{\beta r}(x)$. Indeed, for any $y\in B_{(1-\beta)r}(e)$ we have
\begin{align*}
    \|y - x\| \leq \|y-e\| + \|e-x\| \leq (1-\beta) r + \beta r = r.
\end{align*}
Since $(1-\beta)r \in (0, \epsilon]$, by the property $(*)$, there exists $x'\in \cX$  such that $B_{\alpha r}(x') = B_{\beta(1-\beta) r}(x') \subseteq B_{(1-\beta)r}(x)\backslash E \subseteq B_r(x)\backslash E$.

\begin{figure}[H]
\centering
\includegraphics[width=0.5\textwidth]{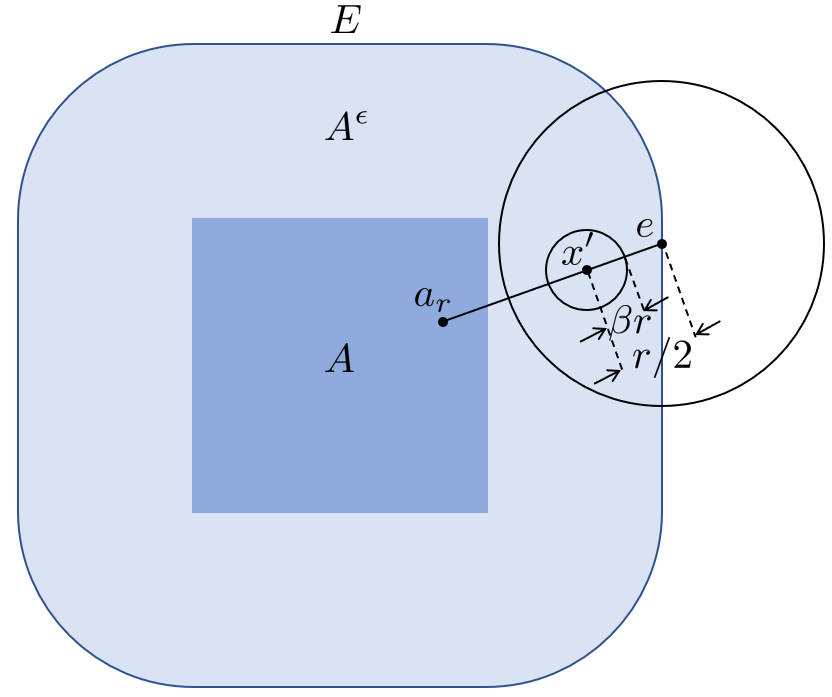}
\caption{A depiction of the property $(*)$ in the proof of Lemma~\ref{lem: A_ep_porous}. $e$ is an arbitrary point in $E = A^{\epsilon}\backslash A^{\epsilon)}$. For some $r\in (0, \epsilon]$, $a_r\in A$ is picked so that $ \|e-a_r\| \in [\epsilon, \epsilon+r/4)$. $x'$ is a point on the line segment joining $a_r$ and $e$ such that $\|x' - e\|  = r/2$.
Then, $B_{\alpha r}(x')\subseteq B_r(e)\backslash E$.}
\label{fig: property}
\end{figure}

\end{proof}

\subsection{Proofs from Section~\ref{sec: well-defined gen loss}}\label{app: well-defined gen loss}

\begin{proof}[Proof of Lemma~\ref{lem: gen loss function measurable}]
Fix $y\in \cY$ and $w\in \cW$. Consider the function $f: D\to \real$ defined as $f(x,x') = -\ell((x', y),w)$, where $D = \{(x,x')\in \cX^2: d(x,x')\leq\epsilon\}$. Define  $f^*: \cX\to \real$ as $f^*(x) \defn \inf_{x'\in B_\epsilon(x)} f(x,x') = -\sup_{x'\in B_\epsilon(x)} \ell((x', y),w)$. By Proposition 7.47 in \cite{BerShr96}, $f^*$ is upper semi-analytic. Therefore,  the worst-case loss function $\sup_{d(x,x')\leq \epsilon} \ell((x', y),w)$ is upper semi-analytic and hence universally measurable. Consequently, $R_{\oplus\epsilon}(\ell, w)$ is well-defined on the measure space $(\cX, \overline{\cB}(\cX))$.
\end{proof}

\begin{proof}[Proof of Lemma~\ref{lem: gen loss function measurable in R^d}]
Since $\ell((\cdot, y), w)$ is Lebesgue measurable, the set $\{x\in \cX:\ell((x,y),w) >t \}^{\oplus\epsilon}$ is Lebesgue measurable. By Lemma~\ref{lem: level set expansion}, all the level sets of the worst-case loss function $\sup_{d(x,x')\leq \epsilon} \ell((x', y),w)$ are Lebesgue measurable. Therefore, 
\begin{align*}
    R_{\oplus\epsilon}(\ell, w) 
    &= \E_{(x,y)\sim \rho}\left[ \sup_{d(x,x')\leq \epsilon} \ell((x', y),w) \right]\\
    &= \E_{y\sim \rho_y}\E_{x\sim \rho_{x|y}}\left[ \sup_{d(x,x')\leq \epsilon} \ell((x', y),w) \right],
\end{align*}
is well-defined.
\end{proof}

\section{Proofs from Section~\ref{sec: W_infty}}

\subsection{Proofs from Section~\ref{sec: measurable selection}}\label{app: measurable selection}

\begin{proof}[Proof of Lemma~\ref{lem: sup measure for Borel sets}]
Let $\mu'\in \cP(\cX)$ be such that $W_\infty(\mu, \mu')\leq \epsilon$. Then there exists a coupling $\lambda\in \Pi(\mu', \mu)$ such that for $(x,x')\sim \lambda$, $d(x,x')\leq \epsilon$ $\lambda$-a.e. Hence,
\begin{align*}
    \mu'(A) = \lambda(A\times\cX) = \lambda(A\times A^{\oplus\epsilon})
    \leq \lambda(\cX\times A^{\oplus\epsilon})= \mu(A^{\oplus\epsilon}).
\end{align*}
Since the choice of $\mu'$ was arbitrary in the set $\{\nu\in \cP(\cX):W_\infty(\mu, \nu)\leq \epsilon\}$, we have,
\begin{align*}
    \sup_{W_\infty(\mu, \mu')\leq\epsilon} \mu'(A) \leq \mu(A^{\oplus\epsilon}).
\end{align*}
Now we show the inequality in the opposite direction. Like in the proof of Lemma~\ref{lem: Borel set expansion}, consider the function $f: D\to \real$ defined as $f(x,x') = -\1\{x'\in A\}$, where $D = \{(x,x')\in \cX^2: d(x,x')\leq\epsilon\}$.
Define  $f^*: \cX\to \real$ as $f^*(x) \defn \inf_{x'\in B_\epsilon(x)} f(x,x')$. As shown in the proof of Lemma~\ref{lem: Borel set expansion}, $f^*(x) = -\1\{x\in A^{\oplus\epsilon}\}$. By Proposition 7.50(a) in \cite{BerShr96}, there exists a measurable function $\phi: \cX\to\cX$ such that $|f^*(x) - f(x, \phi(x))| <\delta$ for any $\delta>0$. Since $f$ and $f^*$ are both $0$-$1$ valued functions, we get $f^*(x) = f(x, \phi(x))$ for all $x\in \cX$ by choosing $\delta=1/2$. Moreover, by Proposition 7.50(a) in \cite{BerShr96}, $Gr(\phi)\subseteq D$ i.e., $d(x,\phi(x))\leq \epsilon$ for all $x\in \cX$. Therefore, 
\begin{align*}
    \sup_{W_\infty(\mu, \mu')\leq\epsilon}  \mu'(A)
    \geq \phi_{\sharp \mu}(A)
    = \mu(\phi^{-1}(A))
    = \mu(A^{\oplus\epsilon}).
\end{align*}

Hence, $ \sup_{W_\infty(\mu, \mu')\leq\epsilon} \mu'(A) = \phi_{\sharp \mu}(A) = \mu(A^{\oplus\epsilon})$ for any  set $A\in \cB(\cX)$.
\end{proof}

\begin{proof}[Proof of Lemma~\ref{lem: sup expectation for Polish}]
Let $\mu'\in \overline{\cP}(\cX)$ be such that $W_\infty(\mu, \mu')\leq\epsilon$. Then there exists $\lambda\in \Pi(\mu', \mu)$ such that $\lambda(\{(x,x')\in \cX^2: d(x,x')>\epsilon\}) = 0$. Then,
\begin{align*}
    \E_{x\sim \mu'}[\phi(x)]
    &= \E_{(x,x')\sim \lambda} [\phi(x)]\\
    &= \E_{(x,x')\sim \lambda} \left[ \sup_{x\in B_\epsilon(x')} \phi(x')\right]\\
    &= \E_{x'\sim \mu} \left[ \sup_{x\in B_\epsilon(x')} \phi(x')\right].
\end{align*}
Since the above inequality is true for any  $\mu'\in \overline{\cP}(\cX)$  satisfying $W_\infty(\mu, \mu')\leq\epsilon$, we have,
\begin{align*}
    \sup_{W_\infty(\mu, \mu')\leq\epsilon} \E_{x\sim \mu'}[\phi(x)] \leq \E_{x'\sim \mu} \left[ \sup_{x\in B_\epsilon(x')} \phi(x')\right].
\end{align*}

Now we will show the inequality in the opposite direction.  
Consider the function $f: D\to \real$ defined as $f(x,x') = -\phi(x')$, where $D = \{(x,x')\in \cX^2: d(x,x')\leq\epsilon\}$. Define  $f^*: \cX\to \real$ as $f^*(x) \defn \inf_{x'\in B_\epsilon(x)} f(x,x') = -\sup_{x'\in B_\epsilon(x)} \phi(x')$. Choose a $\delta>0$. By Proposition 7.50(a) in \cite{BerShr96},  there exists a universally measurable function $m_\delta: \cX\to \cX$ such that $|f^*(x) - f(x, m_\delta(x))|\leq\delta$ and $d(x, m_\delta(x)\leq\epsilon)$ for all $x\in \cX$. Hence,

\begin{align*}
    \E_{x\sim \mu} \left[ \sup_{d(x,x')\leq \epsilon} \phi(x') \right]
    &= \E_{x\sim \mu} [-f^*(x)]\\
    &\leq \E_{x\sim \mu} [-f(x, m_\delta(x))] + \delta\\
    &= \E_{x\sim \mu} [\phi(m_\delta(x))] + \delta\\
    &= \E_{x\sim {m_{\delta}}_{\sharp\mu}} [\phi(x)] + \delta\\
    &\leq \sup_{W_\infty(\mu, \mu')\leq\epsilon} \E_{x\sim \mu'}[\phi(x)] + \delta,
\end{align*}
where the last inequality follows because $W_\infty(\mu, {m_{\delta}}_{\sharp\mu})\leq\epsilon$ because $d(x, m_\delta(x)\leq\epsilon)$ for all $x\in \cX$. Taking $\delta\to 0$, we get the following inequality. 
\begin{align*}
    \E_{x\sim \mu} \left[ \sup_{d(x,x')\leq \epsilon} \phi(x') \right] \leq \sup_{W_\infty(\mu, \mu')\leq\epsilon} \E_{x\sim \mu'}[\phi(x)].
\end{align*}

Combining the above inequality with the reverse inequality shown previously, we obtain \eqref{eq: sup expectation for Polish}.

Suppose the function $\phi$ is upper semi-continuous. Then $f$ is lower semi-continuous. Hence, for every $x\in \cX$, there exists $x^*$ in the compact set $B_\epsilon(x)$ such that $\inf_{x'\in B_\epsilon(x)} f(x,x') = f(x,x^*)$.
By Proposition 7.50(b), there exists a universally measurable function $m: \cX\to \cX$ such that $f^*(x) = f(x, m(x))$ for all $x\in \cX$. Hence, we have
\begin{align*}
    \sup_{W_\infty(\mu, \mu')\leq\epsilon} \E_{x\sim \mu'}[\phi(x)]
    =  \E_{x\sim \mu} \left[ \sup_{d(x,x')\leq \epsilon} \phi(x') \right]
    =  \E_{x\sim \mu} \left[  \phi(m(x)) \right]
    = \E_{x\sim m_{\sharp\mu}} \left[  \phi(x) \right].
\end{align*}
Therefore, $\mu^*\defn m_{\sharp\mu}$ attains the supremum on the left side of the above equation.

\end{proof}

\subsection{Proofs from Section~\ref{sec: capacities}}\label{app: capacities}

\begin{proof}[Proof of Lemma~\ref{lem: set expansion is 2-alternating capacity}]
The following properties of $v$ are trivially true: $v(\phi) = 0$, $v(\cX) = 1$ and 
$v(A)\leq v(B)$ for $A\subseteq B$.

Consider a sequence of sets $(A_n)$ in $\cX$ such that $A_k\subseteq A_{k+1}$ for $k\in \N$. Let $A = \cup_n A_n$. That is, $A_n \uparrow A$.  Then by Lemma~\ref{lem: Minkowski sum of union} we have, $A^{\oplus\epsilon} = \cup_n A_n^{\oplus\epsilon}$. Hence, $A_n^{\oplus\epsilon} \uparrow A^{\oplus\epsilon}$ and by the continuity of measure, $v(A_n) = \mu(A_n^{\oplus\epsilon})\uparrow \mu(A^{\oplus\epsilon})= v(A)$.

Consider a sequence of closed sets $(F_n)$ in $\cX$ such that $F_k\supseteq F_{k+1}$ for $k\in \bN$. Let $F = \cap_n F_n$. That is, $F_n \downarrow F$. 
By Lemma~\ref{lem: convergence of closed sets}, $F_n^{\oplus\epsilon} \downarrow F^{\oplus\epsilon}$. Hence, by the continuity of measure, we have $v(F_n)= \mu(F_n^{\oplus\epsilon})\downarrow \mu(F^{\oplus\epsilon})= v(F)$.

For any two sets $A, B\in \cL(\cX)$, 
\begin{align*}
    v(A\cup B)
    &= \mu((A\cup B)^{\oplus\epsilon})\\
    &\stackrel{(i)}{=} \mu(A^{\oplus\epsilon} \cup B^{\oplus\epsilon})\\
    &= \mu(A^{\oplus\epsilon}) + \mu(B^{\oplus\epsilon}) - \mu(A^{\oplus\epsilon}\cap B^{\oplus\epsilon})\\
    &\stackrel{(ii)}{\leq} \mu(A^{\oplus\epsilon}) + \mu(B^{\oplus\epsilon}) - \mu((A\cap B)^{\oplus\epsilon})\\
    &= v(A)+v(B)-v(A\cap B),
\end{align*}
where $(i)$ and $(ii)$ follow from Lemma~\ref{lem: Minkowski sum of union}.
Hence, $v$ is a $2$-alternating capacity.
\end{proof}

\begin{proof}[Proof of Lemma~\ref{lem: sup measure for measurable sets}]
Let $\mu'\in \cP(\cX)$ be such that $W_\infty(\mu, \mu')\leq \epsilon$. Then there exists a coupling $\gamma\in \Pi(\mu', \mu)$ such that for $(x,x')\sim \gamma$, $d(x,x')\leq \epsilon$ $\gamma$-a.e. Hence,
\begin{align*}
    \mu'(A) = \gamma(A\times\cX) = \gamma(A\times A^{\oplus\epsilon})
    \leq \gamma(\cX\times A^{\oplus\epsilon})= \mu(A^{\oplus\epsilon}).
\end{align*}
Since the choice of $\mu'$ was arbitrary in the set $\{\nu\in \cP(\cX):W_\infty(\mu, \nu)\leq \epsilon\}$, we have,
\begin{align*}
    \sup_{W_\infty(\mu, \mu')\leq\epsilon} \mu'(A) \leq \mu(A^{\oplus\epsilon}).
\end{align*}
We will now show the inequality in the reverse direction.
By Lemma~\ref{lem: set expansion is 2-alternating capacity}, $A \mapsto \mu(A^{\oplus\epsilon})$ is a $2$-alternating capacity. 
Hence by Lemma 2.5 in \cite{HubStr73}, for any Lebesgue measurable $A\subseteq \cX$, there exists a $\nu\in \cP(\cX)$ such that $\nu(A) = \mu(A^{\oplus\epsilon})$ and  $\nu(B)\leq \mu(B^{\oplus\epsilon})$ for all Lebesgue measurable $B\subseteq \cX$. For such a $\nu$, it is clear that $W_\infty(\mu, \nu)\leq\epsilon$. Hence,
\begin{align*}
    \sup_{W_\infty(\mu, \mu')\leq\epsilon} \mu'(A)
    \geq \nu(A) = \mu(A^{\oplus\epsilon}).
\end{align*}
Hence, $\sup_{W_\infty(\mu, \mu')\leq\epsilon} \mu'(A) = \nu(A) = \mu(A^{\oplus\epsilon}).$
\end{proof}

\section{Proofs from Section~\ref{sec: strassen}}\label{app: strassen}

\subsection{Proofs from Section~\ref{sec: unbalanced OT}}\label{app: unbalanced OT}

We first prove a discrete version of Theorem~\ref{thm: generalized strassen} on a finite space. 

\begin{lemma}\label{lem: discrete strassen}
Let $\cX_n = \{x_1, \ldots, x_n\}\subseteq \cX$. 
Let $\bp = (p_i)_{i=1}^n, \bq = (q_i)_{i=1}^n$ be such that $p_i, q_i\geq 0$ for $i\in [n]$ and $\sum_i p_i \leq \sum_i q_i$. Let $\epsilon>0$. 
For $A\subseteq\cX_n$, let  $A^\epsilon := \{x\in \cX_n: d(x, x')\leq \epsilon \text{, for some } x'\in A\}$. 
For $A\subseteq\cX_n$, let $\bp(A) = \sum_{i: x_i\in A} p_i$ and $\bq(A) = \sum_{i: x_i\in A} q_i$. 
For $i,j\in [n]$, let $c_{ij} = \mathds{1}\{d(x_i, x_j)>2\epsilon\}$.
Then,
\begin{align}\label{eq: discrete strassen}
    \max_{A\subseteq \cX_n} \bp(A)-\bq(A^{2\epsilon})
    =\min_{\substack{x_{ij}\geq 0 \\ \sum_j x_{ij} = p_i\\ \sum_i x_{ij} \leq q_j}}
    \sum_{i,j} c_{ij}  x_{ij} .    
\end{align}
\end{lemma}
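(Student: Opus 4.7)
My plan is to recognize the right-hand side of \eqref{eq: discrete strassen} as a finite-dimensional linear program (the unbalanced transportation LP) and invoke LP duality. The dual value will then be matched, via a level-set (``layer cake'') argument, to the set-optimization on the left-hand side. The primal is feasible (e.g., $x_{ij} = p_i q_j / \sum_k q_k$, using $\sum_i p_i \leq \sum_j q_j$), and bounded below by $0$. Assigning free multipliers $\alpha_i$ to the equality constraints $\sum_j x_{ij} = p_i$ and nonnegative multipliers $\beta_j \geq 0$ to the capacity constraints $\sum_i x_{ij} \leq q_j$, strong LP duality will yield
\[
\min \sum_{i,j} c_{ij}\, x_{ij}
\;=\; \max_{\substack{\alpha_i \in \mathbb{R},\ \beta_j \geq 0 \\ \alpha_i - \beta_j \leq c_{ij}}} \left(\sum_i \alpha_i p_i - \sum_j \beta_j q_j\right).
\]

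The next step is to normalize any dual optimum $(\alpha^*, \beta^*)$ so that $\alpha_i^*, \beta_j^* \in [0,1]$. I would first replace $\alpha_i^*$ by $\max(\alpha_i^*, 0)$ (this preserves feasibility and weakly improves the objective since $p_i \geq 0$), giving $\alpha^* \geq 0$. Next, I note that the uniform shift $(\alpha, \beta) \mapsto (\alpha - t, \beta - t)$ is feasible for $t \in [0,\min_j \beta_j^*]$ and changes the objective by $t(\sum_j q_j - \sum_i p_i) \geq 0$; hence I may assume $\min_j \beta_j^* = 0$, and then the constraint $\alpha_i^* \leq c_{ij^*} + 0 \leq 1$ (with $j^*$ achieving $\beta_{j^*}^* = 0$) bounds $\alpha^* \leq 1$. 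Finally, setting each $\beta_j^*$ to its tight lower bound $\max_i (\alpha_i^* - c_{ij})^+$ forces $\beta_j^* \in [0,1]$ as well.

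The core step is a layer-cake argument. For $t \in [0,1]$, I let $A_t = \{x_i : \alpha_i^* > t\}$ and $B_t = \{x_j : \beta_j^* > t\}$. The dual constraint implies $A_t^{2\epsilon} \subseteq B_t$: if $x_i \in A_t$ and $d(x_i, x_j) \leq 2\epsilon$, then $c_{ij} = 0$, so $\beta_j^* \geq \alpha_i^* > t$. Using $\alpha_i^* = \int_0^1 \mathds{1}\{\alpha_i^* > t\}\, dt$ and its analogue for $\beta^*$, I obtain
\[
\sum_i \alpha_i^* p_i - \sum_j \beta_j^* q_j
\;=\; \int_0^1 \bigl[\bp(A_t) - \bq(B_t)\bigr]\, dt
\;\leq\; \int_0^1 \bigl[\bp(A_t) - \bq(A_t^{2\epsilon})\bigr]\, dt
\;\leq\; \max_{A \subseteq \cX_n} \bigl[\bp(A) - \bq(A^{2\epsilon})\bigr].
\]
The reverse inequality is immediate: for any $A \subseteq \cX_n$, the choice $\alpha_i = \mathds{1}\{x_i \in A\}$ and $\beta_j = \mathds{1}\{x_j \in A^{2\epsilon}\}$ is dual-feasible with objective $\bp(A) - \bq(A^{2\epsilon})$.

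The main obstacle will be the normalization step: confining the a priori unbounded dual optimum to $[0,1]^n \times [0,1]^n$. The shift $(\alpha, \beta) \mapsto (\alpha - t, \beta - t)$ is the key trick, and it exploits the unbalanced hypothesis $\sum_i p_i \leq \sum_j q_j$ in an essential way. Without this confinement, the layer-cake integral would range over all of $[0,\infty)$ rather than $[0,1]$, and the final bound by a single $\bp(A) - \bq(A^{2\epsilon})$ would fail.
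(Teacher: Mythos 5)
Your proof is correct and reaches the same conclusion via strong LP duality, but the post-duality argument differs from the paper's. The paper first passes to $d_{ij}=1-c_{ij}$, relaxes the equality constraints $\sum_j x_{ij}=p_i$ to inequalities (justifying this by an explicit mass-augmentation argument), applies strong duality in canonical inequality form, substitutes $w_i=1-u_i$ to return to the original sign pattern, and then appeals to the integrality of the vertices of the resulting polyhedron $\{w_i,v_i\in[0,1],\ w_i-v_j\le c_{ij}\}$ to recover a $\{0,1\}$-valued optimal dual, which directly encodes the optimizing set $A$. You instead dualize the mixed equality/inequality LP directly, normalize the dual optimum into $[0,1]^n\times[0,1]^n$ (using the truncation-at-zero and the uniform downward shift, the latter being exactly where the unbalanced hypothesis $\sum p_i\le\sum q_j$ enters), and then replace the vertex-integrality step by a layer-cake integral, extracting a single superlevel set that does at least as well as the average. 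Your route avoids two things the paper glosses over: (a) the paper's assertion that the optimal $w^*,v^*$ must be $\{0,1\}$-valued is stated without justification (it is true, but requires total unimodularity of the difference-constraint system, which the paper does not invoke); and (b) the equality-to-inequality relaxation plus augmentation argument is simply sidestepped. In exchange, you take on the normalization bookkeeping. One small ordering slip to repair: after you shift $(\alpha,\beta)\mapsto(\alpha-t,\beta-t)$, some $\alpha_i$ may drop below $0$, so you must re-apply the truncation $\alpha_i\mapsto\max(\alpha_i,0)$ before the layer-cake step (feasibility and monotonicity of the objective still hold, since $\beta\ge 0$ and $p\ge 0$); equivalently, shift first and truncate second. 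With that reordering the argument is complete.
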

\begin{proof}

For $i,j\in [n]$, define $d_{ij} := 1-c_{ij}$. Then,
\begin{align}\label{eq: lem discrete max}
    \min_{\substack{x_{ij}\geq 0 \\ \sum_j x_{ij} = p_i\\ \sum_i x_{ij} \leq q_j}}
    \sum_{i,j} c_{ij}  x_{ij} 
    = 
    \sum_i p_i - 
    \max_{\substack{x_{ij}\geq 0 \\ \sum_j x_{ij} = p_i\\ \sum_i x_{ij} \leq q_j}}
    \sum_{i,j} d_{ij}  x_{ij} 
\end{align}
Consider the following modification to the linear program on the right hand side of \eqref{eq: lem discrete max}, where the constraint $\sum_j  x_{ij} = p_i$ is replaced by $\sum_j  x_{ij} \leq p_i$.

\begin{align}\label{eq: lin prog}
    \max_{\substack{x_{ij}\geq 0 \\ \sum_j x_{ij} \leq p_i\\ \sum_i x_{ij} \leq q_j}}
    \sum_{i,j} d_{ij}  x_{ij}   .  
\end{align}
We will show that the above linear program is equivalent to the linear program on the right hand side of \eqref{eq: lem discrete max}. 
Since the above linear program is bounded and feasible, it admits a solution. Let $\{x^*_{ij}\}_{i,j\in [n]}$ be the solution to \eqref{eq: lin prog}. 
Suppose there exists $m\in [n]$ such that $\sum_j x^*_{mj}< p_m$.
Let $s = p_m - \left(\sum_j x^*_{mj}\right) >0 $.
For $j\in [n]$, define $s_j := q_j - \sum_i x^*_{ij}$. Then,
\begin{align*}
	\sum_j s_j 
	&= \sum_j q_j - \sum_{i,j}  x_{ij} \\
	&\geq \sum_i p_i - \left( \left(  \sum_{i\neq m} p_i  \right) + p_m - s\right)\\
	&= s.
\end{align*}
Therefore, $\sum_j s_j \geq s$.
Let $k$ be the largest integer for which  $\sum_{j=1}^k s_j < s$. Define,
\begin{align}
    y_{ij} = 
    \begin{cases}
    x^*_{ij} & i\neq m,\\
    x^*_{mj}+s_j & i= m, j\leq k,\\
    x^*_{mk}+ s - \sum_{j=1}^k s_j & i= m, j=k+1,\\
    x^*_{mj}  & i= m, j\geq k+1.
    \end{cases}
\end{align}
By the above definition we have,
\begin{align*}
    \sum_j y_{ij} &= 
    \begin{cases}
    \sum_j x^*_{ij} & i\neq m,\\
    \sum_j x^*_{ij} + s & i = m.\\
    \end{cases}	\\
    \sum_i y_{ij} &= 
    \begin{cases}
    \sum_i x^*_{mj}+s_j &  j\leq k,\\
    \sum_i x^*_{mk}+ s - \sum_{j=1}^k s_j &  j=k+1,\\
    \sum_i x^*_{mj}  &  j\geq k+1.
    \end{cases}	    
\end{align*}
Combining the above with the definitions of $k, s$ and $\{s_j\}_{j\in [n]}$, we see that $ \sum_j y_{ij}\leq p_i$ and $\sum_i y_{ij}\leq q_j$.
Moreover, $y_{ij}\geq x_{ij}$ for all $i,j\in [n]$. Hence, $\sum_{ij} d_{ij}y_{ij} \geq \sum_{ij} d_{ij}x_{ij}$. 
Therefore, any solution $\{x^*_{ij}\}_{i,j\in [n]}$ for which there exists $m\in [n]$ such that $\sum_j x^*_{mj}< p_m$, can be improved to a solution
$\{y_{ij}\}_{i,j\in [n]}$ for which $\sum_j y_{mj} = p_m$. 
Hence,
\begin{align}\label{eq: lem dicrete max 2}
    \max_{\substack{x_{ij}\geq 0 \\ \sum_j x_{ij} = p_i\\ \sum_i x_{ij} \leq q_j}}
    \sum_{i,j} d_{ij}  x_{ij} 
    =
    \max_{\substack{x_{ij}\geq 0 \\ \sum_j x_{ij} \leq p_i\\ \sum_i x_{ij} \leq q_j}}
    \sum_{i,j} d_{ij}  x_{ij} .    
\end{align}
Since the maximization in \eqref{eq: lem dicrete max 2} is a linear program in canonical form, we employ the strong duality theorem (for a reference, see Chapter 6 in \cite{MatGar07}) to get the following.
\begin{align}\label{eq: lem dual LP}
    \max_{\substack{x_{ij}\geq 0 \\ \sum_j x_{ij} \leq p_i\\ \sum_i x_{ij} \leq q_j}}
    \sum_{i,j} d_{ij}  x_{ij}
    =
    \min_{\substack{u_i, v_i\geq 0 \\ u_i + v_j \geq d_{ij}}}
    \sum_i (p_i u_i + q_i v_i).
\end{align}
Since $d_{ij}\in \{0,1\}$, we may assume $u_i, v_i \leq 1$ for the minimization in \eqref{eq: lem dual LP} without violating other constraints because any decrease of $u_i, v_i$ down to $1$ will only decrease the value of $\sum_i (p_i u_i + q_i v_i)$, which we seek to minimize.
Defining $w_i := 1-u_i$,  we have the following from \eqref{eq: lem discrete max} and \eqref{eq: lem dual LP}.
\begin{align}\label{eq: lem dual LP 2}
    \min_{\substack{x_{ij}\geq 0 \\ \sum_j x_{ij} = p_i\\ \sum_i x_{ij} \leq q_j}}
    \sum_{i,j} c_{ij}  x_{ij} 
    =
    \max_{\substack{w_i, v_i\in [0,1]\\ w_i - v_j \leq c_{ij}}}
     \sum_i ( p_iw_i - q_iv_i) .
\end{align}
The optimal $w_i^*, v_i^*$ that achieve the maximum in \eqref{eq: lem dual LP 2} must  lie at one of the vertices of the polyhedron supported by the hyperplanes, $w_i = 0, w_i = 1, v_i = 0, v_i = 1$ and $w_i-v_j = c_{ij}$. Hence, $w_i^*, v_i^* \in \{0,1\}$.
Moreover  if $c_{ij} = 0$ and $w_i^* = 1$ for some $i,j\in [n]$, then  $v_j^* = 1$. On the other hand if $c_{ij}=1$,  then $v_j^*$ can be set to $0$ without violating other constraints and without decreasing the maximization objective. Therefore, setting $A := \{x_i\in \cX_n : w_i^* = 1\}$, we see that the maximum in \eqref{eq: lem dual LP 2} equals the maximum in  \eqref{eq: discrete strassen}.
\end{proof}

\begin{proof}[Proof of Theorem~\ref{thm: generalized strassen}]

Let $(\gamma_n)_{n=1}^\infty$ be a non-negative, monotonically decreasing sequence converging to $0$. 
Let $(x_n)_{n=1}^\infty$ be a dense sequence in $\mathcal{X}$. 
Define a  function $f: \cX \to \{x_n\}_{n=1}^\infty$ such that 
$f(x) = x_k$ for the least integer $k$ with $d(x,x_k) < \gamma_n$.
Let $H_n = \{x_1, \ldots, x_n\}$.
Let $s_n$ be the least positive integer such that,
\begin{align}
	\mu(f^{-1}(H_{s_n-1})) > \mu(\cX) - \gamma_n, \label{eq: mu_n last}\\
	\nu(f^{-1}(H_{s_n-1})) > \nu(\cX) - \gamma_n. \label{eq: nu_n last}
\end{align}

Given $n$, construct a discrete measure $\mu_n$ supported on the finite set $H_{s_n}$ such that 
$\mu_n(x_k) := \mu(f^{-1}(x_k))$ for $k\in [s_n-1]$ and $\mu_n(\cX) = \mu(\cX)$.
Similarly, construct $\nu_n$ supported on $H_{s_n}$ such that 
$\nu_n(x_k) := \nu(f_n^{-1}(x_k))$ for $k\in [s_n-1]$ and $\nu_n(\cX) = \nu(\cX)$.

Let $A\in \cB(\cX)$. We have,
\begin{align}\label{eq: discrete measure upper bound}
    \mu_n(A)
    &\stackrel{(i)}{=}  \mu_n(A \cap H_{s_n})\nonumber\\
    &\stackrel{(ii)}{<}  \mu_n(A \cap H_{s_n-1}) + \gamma_n\nonumber\\
    &\stackrel{(iii)}{=}  \mu(f^{-1}(A\cap H_{s_n-1})) + \gamma_n\nonumber\\
    &\stackrel{(iv)}{\leq} \mu(A^{\gamma_n}) + \gamma_n,
\end{align}
where $(i)$ follows from the fact that $\mu_n$ is supported on $H_{s_n}$, $(ii)$ follows from \eqref{eq: mu_n last}, $(iii)$ follows from the definition of $\mu_n$ and $(iv)$ follows because of the following: For any $y\in A\cap H_{s_n-1}$,  $f^{-1}(y) \subseteq \{x\in \cX: d(x, y) < \gamma_n\}\subseteq A^{\gamma_n}$. Hence, $f^{-1}(A\cap H_{s_n-1}) \subseteq A^{\gamma_n}$.
Applying \eqref{eq: discrete measure upper bound}, with $A^c$ instead of $A$, we have the following.
\begin{align}\label{eq: discrete measure UB and LB}
    \mu(A^{-\gamma_n}) - \gamma_n \leq \mu_n(A) 
    \leq \mu(A^{\gamma_n}) + \gamma_n.
\end{align}

Letting $n\to \infty$ in \eqref{eq: discrete measure UB and LB}  and using Lemma~\ref{lem: closed set}, we get that $\limsup_n \mu_n(A) \leq \mu(A)$ for all closed subsets $A$ of $\cX$. Hence, by applying the Portmanteau theorem (Theorem 2.1 in \cite{Bil99}), we conclude that the sequence of measures $(\mu_n)_{n=1}^\infty$ converges weakly to $\mu$. Similarly, $\nu_n \to \nu$ weakly.

For any fixed $n$, we apply Lemma~\ref{lem: discrete strassen} 
to the measures $\mu_n, \nu_n$
on the finite space $H_{s_n}$ to get the following.
\begin{align}\label{eq: discrete coupling}
    \max_{A\subseteq H_{s_n}} \mu_n(A)-\nu_n(A^{{2\epsilon}+4\gamma_n})
    =\min_{\substack{x_{ij}\geq 0 \\ \sum_j x_{ij} = \mu_n(x_i)\\ \sum_i x_{ij} \leq \nu_n(x_j)}}
    \sum_{i,j} x_{ij}\mathds{1}\{d(x_i, x_j)>2\epsilon+4\gamma_n\},    
\end{align}
where the indices $i,j$ run over $[s_n]$.
We have that $\mu_n(\cX) = \mu(\cX)\leq \nu(\cX) = \nu(\cX)$.
Define a coupling $\pi_n\in \Pi(\mu_n, \nu_n)$ supported on $H_{s_n}\times H_{s_n}$ using the optimal solution $\{x_{ij}\}_{i,j\in [s_n]}$ to the minimization in \eqref{eq: discrete coupling} by setting $\pi_n(i, j) = x_{ij}^*$.
Let $T_n\subseteq H_{s_n}$ be the set that achieves the maximum in \eqref{eq: discrete coupling}.

We will now construct a candidate coupling for the infimum in \eqref{eq: generalized strassen}. Since $\mu, \nu$ are finite  measures on a Polish space, they are tight (see for example, Theorem 1.3 in \cite{Bil99}). Hence, given a $\delta>0$, there exists a compact set $K\subseteq \cX$ such that $\min\{\mu(K^c), \nu(K^c)\}< \delta/3$. Since $\mu_n$ and $\nu_n$ converge weakly to $\mu$ and $\nu$ respectively, choose $N$ large enough so that $\min\{\mu_n(K^c), \nu_n(K^c)\} < \delta/2$ for all $n\geq N$. 
Let $\nu'_n$ be the second marginal of the coupling $\pi_n$. Then, $\nu'_n\preceq \nu_n$. By union bound, we have the following.
\begin{align}
    \pi_n((K\times K)^c) \leq \mu_n(K^c) + \nu'_n(K^c) \leq \mu_n(K^c) + \nu_n(K^c) < \delta.
\end{align}
Hence, the sequence $(\pi_n)_{n\geq N}$ is uniformly tight. Hence, by Prokhorov's theorem (for reference, see Theorem 5.1 in \cite{Bil99}), there is a subsequence $(\pi_{n_k})$ of $(\pi_n)_{n\geq N}$ that converges weakly to some measure $\pi^*\in \cM(\cX\times\cX)$. Moreover, $\pi^*\in \Pi(\mu, \nu)$ by virtue of the constraints imposed on the converging subsequence of $(\pi_n)_{n\geq N}$.

Let $\Phi = \sup_{A\in \cB(\cX)} \mu(A) - \nu(A^{2\epsilon})$ and $\Psi = \cT_{c_\epsilon}(\mu, \nu)$. For any $n$ we have,
\begin{align}
    \pi_n(d(x_i, x_j)>2\epsilon+4\gamma_n)
    &\stackrel{(i)}{=} \mu_n(T_n) - \nu_n(T^{2\epsilon+4\gamma_n}_n)\nonumber\\
    &\stackrel{(ii)}{\leq} (\mu(T^{\gamma_n}_n)+\gamma_n) - (\nu((T^{2\epsilon+4\gamma_n}_n)^{-\gamma_n}) - \gamma_n)\nonumber\\
    &\stackrel{(iii)}{\leq} \mu(T^{\gamma_n}_n)- \nu((T^{2\epsilon+4\gamma_n-\gamma_n-\gamma_n/2}_n) + 2\gamma_n\nonumber\\ 
    &\leq \mu(T^{2\gamma_n}_n)- \nu((T^{2\epsilon+2\gamma_n}_n) + 2\gamma_n\nonumber\\     
    &\stackrel{(iv)}{\leq} \Phi + 2\gamma_n \label{eq: z1},   
\end{align}
where $(i)$ follows from the definition of $\pi_n$ and  $T_n$, $(ii)$ follows from \eqref{eq: discrete measure UB and LB}, $(iii)$ follows from Lemma~\ref{lem: set expansion property} and $(iv)$ follows from the definition of $\Phi$. Further,
\begin{align}
    \Psi
    &= \inf_{\pi\in\Pi(\mu, \nu)} \pi[d(x,x')>2\epsilon]\nonumber\\
    &\stackrel{(i)}{\leq} \pi^*[d(x,x')>2\epsilon]\nonumber\\
    &\stackrel{(ii)}{\leq} \liminf_{n_k} \pi_{n_k}[d(x,x')>2\epsilon]\nonumber\\
    &\leq \limsup_n \pi_n[d(x,x')\geq 2\epsilon]\nonumber\\    
    &\stackrel{(iii)}{\leq}  \Phi\label{eq: z2},
\end{align}
where $(i)$ follows because $\pi^*\in \Pi(\mu, \nu)$, $(ii)$ follows from Portmanteau's theorem because  $(\pi_{n_k})$ that converges to $\pi^*$ and the set $\{(x,x')\in \cX^2: d(x,x')>2\epsilon\}$ is an open set, and $(iii)$ follows by taking $n\to\infty$ in \eqref{eq: z1}.

To show the inequality $\Phi\leq \Psi$,  consider a sequence of measures $(\lambda_n)_{n=1}^\infty$ such that $\lambda_n\in \Pi(\mu, \nu)$ and $\lim_n \lambda_n[d(x,x')>\epsilon] = \Psi$.
For any  $A\in \cB(\cX)$,
\begin{align*}
    \mu(A) 
    &= \lambda_n[x\in A, x'\in A^\epsilon] + \lambda_n[x\in A, x'\notin A^\epsilon]\\
    &\leq \nu(A^\epsilon) + \lambda_n[d(x,x')>\epsilon].
\end{align*}
Letting $n\to\infty$, we have $\mu(A)-\nu(A^\epsilon)\leq \Psi$ for all $A\in \cB(\cX)$. Hence, $\Phi\leq\Psi$.
Combining this with \eqref{eq: z2}, we conclude $\Phi = \Psi$.
\end{proof}

\subsection{Proof of Section~\ref{sec: opt adv risk unequal priors}}\label{app: opt adv risk unequal priors}

\begin{proof}[Proof of Lemma~\ref{lem: pydijog general}]
We have,
\begin{align*}
    \sup_{A\in \cB(\cX)} \mu(A) - \nu(A^{2\epsilon})
    &\stackrel{(i)}{=} \sup_{A\ \text{closed}} \mu(A) - \nu(A^{2\epsilon})\\
    &\stackrel{(ii)}{\leq} \sup_{A\ \text{closed}} \mu((A^{\oplus\epsilon})^{\ominus\epsilon}) - \mu((A^{\oplus\epsilon})^{\oplus\epsilon})\\
    &\stackrel{(iii)}{\leq} \sup_{A\in \cB(\cX)} \mu(A^{\ominus\epsilon}) - \nu(A^{\oplus\epsilon}),
\end{align*}
where $(i)$ follows because we may assume that the supremum of $\mu(A) - \nu(A^{2\epsilon})$ is achieved by a closed set. Indeed, 
$\mu(\overline{A}) - \nu(\overline{A}^{2\epsilon}) \geq \mu(A) - \nu(A^{2\epsilon})$ because $\overline{A}\supseteq A$ and $\overline{A}^{2\epsilon} = A^{2\epsilon}$. $(ii)$ follows from the following two facts: 1) $A\subseteq A^{\oplus\epsilon})^{\ominus\epsilon}$ (see Lemma 3.3 in \cite{PydJog21}), and 2) $A^\epsilon = A^{\oplus\epsilon}$ for closed sets $A$ (see Lemma 3.2 in \cite{PydJog21}).
$(iii)$ follows from Lemma~\ref{lem: Borel set expansion} because $\mu, \nu\in \overline{\cP}(\cX)$ and  $A^{\ominus\epsilon}, A^{\oplus\epsilon}\in \overline{\cB}(\cX)$ whenever $A\in \cB(\cX)$.

Now, we show that the above inequality also holds in the opposite direction.
Let $\mu' = \mu/t \in \overline{\cP}(\cX)$ for some fixed $t>0$. 
For $x,y\in \cX$, define the cost function $c(x,y) = \1\{d(x,y)>2\epsilon\}$.
For any $\nu'\in \overline{\cP}(\cX)$, we have the following from Kantorovich duality theorem.
\begin{align*}
     D_\epsilon(\mu', \nu')
    = \sup_{\phi(x)+\psi(y)\leq c(x,y)} \int \phi d\mu' + \int \psi d\nu',
\end{align*}
For any $A\in \cB(\cX)$, define $\phi'(x) = \1\{x\in A^{\ominus\epsilon}\}$ and $\psi'(y) = -\1\{y\in A^{\oplus \epsilon}\}$.
We will now show that $\phi'(x)+\psi'(y)\leq c(x,y)$.
If $x,y$ are such that $c(x,y) = 1$, the inequality holds trivially.
Suppose on the other hand, $x,y$ are such that $c(x,y) = 0$. Then $d(x,y)\leq 2\epsilon$. Hence, for any $x\in A^{\ominus\epsilon}$, we have $y\in (A^{\ominus\epsilon})^{\oplus 2\epsilon}= ((A^{\ominus\epsilon})^{\oplus \epsilon})^{\oplus\epsilon}\subseteq A^{\oplus\epsilon}$ (the set inclusion here follows from Lemma 3.3 in \cite{PydJog21}). Therefore,
\begin{align*}
    \phi'(x)+\psi'(y)
    = \1\{x\in A^{\ominus\epsilon}\} - \1\{y\in A^{\oplus \epsilon}\} = 0 = c(x,y).
\end{align*}
Hence,
\begin{align*}
    D_\epsilon(\mu', \nu')
    \geq \int \phi' d\mu' + \int \psi' d\nu'
    = \mu'(A^{\ominus\epsilon}) - \nu'(A^{\oplus \epsilon}).
\end{align*}
Now,
\begin{align*}
    \sup_{A\in \cB(\cX)} \mu(A) - \nu(A^{2\epsilon})
    &\stackrel{(*)}{=} t \inf_{\substack{\nu'\in \overline{\cP}(\cX)\\ \nu'\preceq \nu/t}} D_\epsilon(\mu', \nu')\\
    &\geq t \inf_{\substack{\nu'\in \overline{\cP}(\cX)\\ \nu'\preceq \nu/t}}
    \mu'(A^{\ominus\epsilon}) - \nu'(A^{\oplus \epsilon})\\
    &= t\mu'(A^{\ominus\epsilon}) - t\sup_{\substack{\nu'\in \overline{\cP}(\cX)\\ \nu'\preceq \nu/t}} \nu'(A^{\oplus \epsilon})\\
    &\geq \mu(A^{\ominus\epsilon}) - t\frac{\nu}{t}(A^{\oplus \epsilon})\\
    &= \mu(A^{\ominus\epsilon}) - \nu(A^{\oplus \epsilon}),
\end{align*}
where $(*)$ follows from Theorem~\ref{thm: generalized strassen}. Since the above inequality is valid for any $A\in \cB(\cX)$, we get the following.
\begin{align*}
    \sup_{A\in \cB(\cX)} \mu(A) - \nu(A^{2\epsilon})
    \geq \sup_{A\in \cB(\cX)} \mu(A^{\ominus\epsilon}) - \nu(A^{\oplus\epsilon}).
\end{align*}

\end{proof}

\section{Proofs from Section~\ref{sec: minimax}}\label{app: minimax}

\subsection{Proofs from Section~\ref{sec: minimax in Rd}}\label{app: minimax in Rd}

\begin{proof}[Proof of Theorem~\ref{thm: minimax theorem in R^d}]
By Lemma~\ref{lem: set expansion is 2-alternating capacity}, the set-valued maps $A \mapsto p_0(A^{\oplus\epsilon})$ and $A^c \mapsto p_1((A^c)^{\oplus\epsilon})$ are $2$-alternating capacities. Hence, the existence of $A^*\in \cL(\cX)$ that attains the infimum on the right  in \eqref{eq: min-max theorem in Rd} follows from Lemma 3.1 in \cite{HubStr73} and the equality $R_{\oplus\epsilon}(\ell_{0/1}, A) = R_{\Gamma_\epsilon}(\ell_{0/1}, A)$ proved in Theorem~\ref{thm: Adv risk in Rd}.
By Theorem 4.1 in \cite{HubStr73}, there exist $q_0, q_1\in \cP(\cX)$ such that $W_\infty(p_i, q_i)\leq \epsilon$ for $i=0,1$ and,
\begin{align*}
    \inf_{A\in \cL(\cX)} \sup_{W_\infty(p_0, p_0'), W_\infty(p_1, p_1')\leq \epsilon} r(A, p_0, p_1)
    = \inf_{A\in \cL(\cX)} r(A, q_0, q_1).
\end{align*}
Hence,
\begin{align*}
    \inf_{A\in \cL(\cX)} \sup_{W_\infty(p_0, p_0'), W_\infty(p_1, p_1')\leq \epsilon} r(A, p_0, p_1)
    & = \inf_{A\in \cA} r(A, q_0, q_1) \\
    &
    \leq \sup_{W_\infty(p_0, p_0'), W_\infty(p_1, p_1')\leq \epsilon} \inf_{A\in \cL(\cX)}  r(A, p_0, p_1).
\end{align*}
The desired result follows from combining the above inequality with the max-min inequality~\eqref{eq: max-min inequality}. Clearly, $q_0 = p_0^*$ and $q_1 = p_1^*$. 
\end{proof}

\subsection{Proofs from Section~\ref{sec: minimax in Polish space}}\label{app: minimax in Polish space}

\begin{lemma}[Max-min Inequality]\label{lem: max-min inequality}
Let $p_0, p_1\in \cP(\cX)$ and let $\epsilon\geq 0$.
For $T>0$, define $r: \cB(\cX) \times \cP(\cX)\times\cP(\cX) \to [0,1]$ as in \eqref{eq: payoff function}.
Then,
\begin{align}
    \sup_{\substack{W_\infty(p_0, p_0')\leq \epsilon\\W_\infty(p_1, p_1')\leq \epsilon}}
    \inf_{A\in \cB(\cX)}  r(A, p_0', p_1')
    \leq 
    \inf_{A\in \cB(\cX)} 
    \sup_{\substack{W_\infty(p_0, p_0')\leq \epsilon\\W_\infty(p_1, p_1')\leq \epsilon}}  r(A, p_0', p_1').
\end{align}
\end{lemma}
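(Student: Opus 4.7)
The plan is to prove this via the standard max-min (weak duality) inequality argument, which holds for any function of two variables irrespective of convexity, concavity, or topology. The key observation is that for any fixed decision region $A_0 \in \cB(\cX)$ and any fixed pair $(\tilde{p}_0, \tilde{p}_1)$ with $W_\infty(p_i, \tilde{p}_i) \leq \epsilon$ for $i=0,1$, we have the trivial pointwise inequality
\begin{align*}
    \inf_{A \in \cB(\cX)} r(A, \tilde{p}_0, \tilde{p}_1) \;\leq\; r(A_0, \tilde{p}_0, \tilde{p}_1) \;\leq\; \sup_{\substack{W_\infty(p_0, p_0')\leq\epsilon\\W_\infty(p_1, p_1')\leq\epsilon}} r(A_0, p_0', p_1').
\end{align*}

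First I would fix an arbitrary $A_0 \in \cB(\cX)$ and take the supremum over admissible $(\tilde{p}_0, \tilde{p}_1)$ on the left-hand side of the chain above. Since the right-hand side depends only on $A_0$ and not on $(\tilde p_0, \tilde p_1)$, this yields
\begin{align*}
    \sup_{\substack{W_\infty(p_0, \tilde{p}_0)\leq\epsilon\\W_\infty(p_1, \tilde{p}_1)\leq\epsilon}} \inf_{A \in \cB(\cX)} r(A, \tilde{p}_0, \tilde{p}_1) \;\leq\; \sup_{\substack{W_\infty(p_0, p_0')\leq\epsilon\\W_\infty(p_1, p_1')\leq\epsilon}} r(A_0, p_0', p_1').
\end{align*}
Since $A_0 \in \cB(\cX)$ was arbitrary, I would then take the infimum over $A_0$ on the right-hand side, which yields the desired inequality.

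I do not anticipate any real obstacle here, as this is the classical weak duality inequality and requires no special properties of $r$, the uncertainty sets, or the space $\cX$. The only subtlety worth flagging is a measurability check: one must confirm that $r(A, \tilde{p}_0, \tilde{p}_1)$ is well-defined for all admissible choices, which is immediate from the definition \eqref{eq: payoff function} since $\tilde{p}_i \in \cP(\cX)$ and $A \in \cB(\cX)$. Thus the entire proof is a three-line pointwise bound followed by sup/inf monotonicity, and no appeal to earlier results in the paper is needed.
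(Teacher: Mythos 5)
Your proof is correct and is essentially identical to the paper's: both establish the pointwise chain $\inf_A r \leq r \leq \sup_{p'} r$, then take the supremum over admissible perturbed distributions on the left and the infimum over decision regions on the right. This is the standard weak duality argument and no gap exists.
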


\begin{proof}
For any $A\in \cB(\cX)$ and $p_0', p_1'$ such that $W_\infty(p_i, p_i')\leq\epsilon$ ($i=0,1$), we have
\begin{align*}
    \inf_{A\in \cB(\cX)} r(A, p_0', p_1') 
    \leq r(A, p_0', p_1').
\end{align*}
Taking supremum over $p_0'$ and $p_1'$ such that $W_\infty(p_i, p_i')\leq \epsilon$ for $i\in \{0,1\}$ on both sides of the above inequality, we get the following for any $A\in \cB(\cX)$.
\begin{align*}
    \sup_{W_\infty(p_0, p_0'), W_\infty(p_1, p_1')\leq \epsilon}
    \inf_{A\in \cB(\cX)}  r(A, p_0', p_1')
    \leq \sup_{W_\infty(p_0, p_0'), W_\infty(p_1, p_1')\leq \epsilon}  r(A, p_0', p_1').
\end{align*}
Since the above inequality holds for any $A\in \cB(\cX)$, we have,
\begin{align*}
    \sup_{W_\infty(p_0, p_0'), W_\infty(p_1, p_1')\leq \epsilon} \inf_{A\in \cB(\cX)}  r(A, p_0', p_1')
    \leq 
    \inf_{A\in \cB(\cX)} \sup_{W_\infty(p_0, p_0'), W_\infty(p_1, p_1')\leq \epsilon}  r(A, p_0', p_1').    
\end{align*}
\end{proof}

\begin{proof}[Proof of Theorem~\ref{thm: D_ep as shortest D_TV}]
Consider any $\mu'$ and $\nu'$ such that $W_\infty(\mu, \mu')\leq \epsilon$ and $W_\infty(\nu, \nu')\leq \epsilon$. Then there exist $\gamma_\mu\in \Pi(\mu, \mu')$ and $\gamma_\nu\in \Pi(\nu, \nu')$ such that
\begin{align*}
    \prob_{(x,x')\sim \gamma_\mu} (d(x,x')>\epsilon) = 0,\\
    \prob_{(x,x')\sim \gamma_\nu} (d(x,x')>\epsilon) = 0.
\end{align*}
Let $\gamma'\in \Pi(\mu', \nu')$ be the coupling that achieves the optimal transport cost $D_{TV}(\mu', \nu')$.
Construct a coupling $\gamma_0 \in \Pi(\mu, \nu)$ as $\gamma_0 = \gamma_\mu\circ \gamma'\circ\gamma_\nu$. Then,
\begin{align*}
    D_\epsilon(\mu, \nu) 
    &\leq \int_{\cX^2} \1\{d(x,x')>2\epsilon\} d\gamma_0\\
    &\leq \int_{\cX^2} \1\{d(x,x')>0\} d\gamma'\\
    &= D_{TV}(\mu', \nu').
\end{align*}
Since the above inequalty is true for any $\mu'$ and $\nu'$ such that $W_\infty(\mu, \mu')\leq \epsilon$ and $W_\infty(\nu, \nu')\leq \epsilon$, we have the following inequality.
\begin{align*}
    D_\epsilon(\mu, \nu)  
    \leq \inf_{W_\infty(p_0, p_0'), W_\infty(p_1, p_1')\leq \epsilon}
    D_{TV}(\mu', \nu').
\end{align*}

Now we will show the above inequality in the reverse direction.
Let $\gamma\in \Pi(\mu, \nu)$ be the coupling that achieves the optimal transport cost for $D_\epsilon(\mu, \nu)$.
Let $M: \cX^2\to \cX$ be a measurable midpoint map. (See \cite{Doh20} for why such a map exists.) That is, for all $(x,x')\in \cX^2$ we have
\begin{align*}
    d(x, M(x,x')) = d(x', M(x,x')) = \frac{1}{2} d(x,x').
\end{align*}
Consider a transport map $T: \cX^2\to\cX^2$ defined as
\begin{align*}
    T(x,x') = 
    \begin{cases}
    (M(x,x'), M(x,x')) & d(x,x')\leq 2\epsilon,\\
    (x,x') & \text{otherwise}.
    \end{cases}
\end{align*}
$T$ is measurable because it is piece-wise measurable on measurable sets. Further, it follows from the definition of $M$ that each coordinate of a point $(x,x')$ is  transported by $T$ by a distance no further than $\epsilon$. Let $\mu_0$ ad $\nu_0$ be the probability measures corresponding to the first and second marginals of $T_{\sharp \gamma}$ respectively. Then, $W_\infty(\mu, \mu_0)\leq \epsilon$ and $W_\infty(\nu, \nu_0)\leq \epsilon$. Hence,
\begin{align*}
    D_\epsilon(\mu, \nu) 
    &= \int_{\cX^2} \1\{d(x,x')>2\epsilon\} d\gamma\\
    &= \int_{\cX^2} \1\{d(x,x')>0\} d\gamma_{\sharp T}\\
    &\geq D_{TV}(\mu_0, \nu_0)\\
    &\geq \inf_{W_\infty(p_0, p_0'), W_\infty(p_1, p_1')\leq \epsilon}
    D_{TV}(\mu', \nu').
\end{align*}
Combining with the reverse inequality that we proved above, it is clear that the infimum over $D_{TV}$ is attained by $\mu_0$ and $\nu_0$.
\end{proof}

\begin{proof}[Proof of Lemma~\ref{lem: layered balls}]
The first equality in \eqref{eq: layered balls} follows from Theorem~\ref{thm: D_ep as shortest D_TV}. For the second equality, we have the following.
\begin{align*}
    \inf_{\substack{q\in \overline{\cP}(\cX):\\ q\preceq Tp_0}} D_\epsilon(q, p_1)
    &\stackrel{(i)}{=} 1 - (T+1) \inf_{A\in \cB(\cX)}  R_{\oplus\epsilon}(\ell_{0/1}, A)\\  
    &\stackrel{(ii)}{=} 1 - (T+1) \inf_{A\in \cB(\cX)} R_{\Gamma_\epsilon}(\ell_{0/1}, A)\\
    &=
    1 - (T+1) \inf_{A\in \cB(\cX)} \sup_{\substack{W_\infty(p_0, p_0')\leq \epsilon\\W_\infty(p_1, p_1')\leq \epsilon}}  r(A, p_0', p_1')\\    
    &\stackrel{(iii)}{\leq}
    1 - (T+1) \sup_{\substack{W_\infty(p_0, p_0')\leq \epsilon\\W_\infty(p_1, p_1')\leq \epsilon}} \inf_{A\in \cB(\cX)}   r(A, p_0', p_1')\\  
    &=
    \inf_{\substack{W_\infty(p_0, p_0')\leq \epsilon\\W_\infty(p_1, p_1')\leq \epsilon}} [1 - (T+1)  \inf_{A\in \cB(\cX)}   r(A, p_0', p_1')]\\    
    &\stackrel{(iv)}{\leq}
    \inf_{\substack{W_\infty(p_0, p_0')\leq \epsilon\\W_\infty(p_1, p_1')\leq \epsilon}} 
    \inf_{\substack{q'\in \overline{\cP}(\cX):\\ q'\preceq Tp_0'}}
    D_{TV}(q', p_1'),
\end{align*}
where $(i)$ follows from Theorem~\ref{thm: adv_risk as D_ep}, $(ii)$  from Theorem~\ref{thm: Adv risk in Polish Space}, $(iii)$   from Lemma~\ref{lem: max-min inequality}, and $(iv)$ again from Theorem~\ref{thm: adv_risk as D_ep} with $\epsilon = 0$.

We will now show the inequality in the opposite direction. That is, we will show the following.
\begin{align}
    \inf_{\substack{q\in \overline{\cP}(\cX):\\ q\preceq Tp_0}}
    \inf_{\substack{W_\infty(q, q')\leq \epsilon\\W_\infty(p_1, p_1')\leq \epsilon}} 
    D_{TV}(q', p_1')
    \geq 
    \inf_{\substack{W_\infty(p_0, p_0')\leq \epsilon\\W_\infty(p_1, p_1')\leq \epsilon}} 
    \inf_{\substack{q'\in \overline{\cP}(\cX):\\ q'\preceq Tp_0'}}
    D_{TV}(q', p_1')
\end{align}
Consider arbitrary probability measures $q', p_1'\in \overline{\cP}(\cX)$ generated in accordance with the constraints over the infimum terms on the left hand side of the above inequality. That is, let $q'$ and $p_1'$ be such that $W_\infty(q, q')\leq \epsilon$ and $W_\infty(p_1, p_1')\leq \epsilon$ where $q\preceq Tp_0$. We will now construct $p_0'\in \overline{\cP}(\cX)$ such that $q'\preceq Tp_0'$ and $W_\infty(p_0, p_0')\leq \epsilon$. This will show that the set of $q', p_1'\in \overline{\cP}(\cX)$ satisfying the constraints over the infimum terms on the right hand side is a superset of the corresponding set on the right hand side, and hence prove the above inequality.

Define a probability measure $p_0'\in \overline{\cP}(\cX)$ as $p_0'(A) = p_0(A) + \frac{1}{T}q'(A) - \frac{1}{T}q(A)$ for $A\in \cB(\cX)$. To show that $p_0'$ is a valid probability measure, we have the following.
\begin{align*}
    p_0'(\cX) &= p_0(\cX) + \frac{1}{T}q'(\cX) - \frac{1}{T}q(\cX) = 1\\
    p_0'(A) &= \frac{1}{T}(Tp_0(A) - q(A)) + \frac{1}{T}q'(A) \geq  \frac{1}{T}q'(A) \geq 0.
\end{align*}
The above equality also shows that $q'\preceq Tp_0'$. We will now show that $W_\infty(p_0, p_0')\leq \epsilon$. Since $ W_\infty(q, q')\leq\epsilon$, there exists $\gamma\in \Pi(q, q')$ such that $\gamma(\{(x,x')\in \cX^2: d(x,x')\leq 2\epsilon\}) = 1$. Define $\gamma' \in \Pi(p_0, p_0')$ as follows for $A\in \cB(\cX^2)$.
\begin{align*}
	\gamma'(A) = p_0(\{x\in \cX: (x,x)\in A\}) + \frac{1}{T} \gamma(A) - \frac{1}{T}q(\{x\in \cX: (x,x)\in A\}).
\end{align*}
To see that $\gamma' \in \Pi(p_0, p_0')$, we have the following for $A_1, A_2\in \cB(\cX)$.
\begin{align*}
    \gamma'(A_1\times \cX) 
    &= p_0(A_1) + \frac{1}{T}q(A_1) - \frac{1}{T}q(A_1) = p_0(A_1),\\
    \gamma'(\cX\times A_2)
    &= p_0(A_2) + \frac{1}{T}q'(A_2) - \frac{1}{T}q(A_2) = p_0'(A_2).
\end{align*}
Moreover,
\begin{align*}
	\gamma'(\{(x,x')\in \cX^2: d(x,x')\leq 2\epsilon\})
	= p_0(\cX) + \frac{1}{T}\gamma(\{(x,x')\in \cX^2: d(x,x')\leq 2\epsilon\}) - \frac{1}{T}q(\cX) = 1.
\end{align*}
Therefore, $W_\infty(p_0, p_0')\leq \epsilon$.
\end{proof}

\end{document}